\documentclass{article}
\usepackage{arxiv_preprint,times}
\usepackage{amsmath,amsfonts,bm}

\def\eqref#1{equation~\ref{#1}}
\def\1{\bm{1}}

\DeclareMathAlphabet{\mathsfit}{\encodingdefault}{\sfdefault}{m}{sl}
\SetMathAlphabet{\mathsfit}{bold}{\encodingdefault}{\sfdefault}{bx}{n}

\newcommand{\E}{\mathbb{E}}

\newcommand{\Var}{\mathrm{Var}}

\newcommand{\Cov}{\mathrm{Cov}}
\usepackage{amssymb,amsthm}
\usepackage{booktabs}
\usepackage{graphicx}
\usepackage{wrapfig}
\usepackage{needspace}
\makeatletter
\newcommand{\finishwrap}{%
  \par
  \ifnum\c@WF@wrappedlines>\@ne
    \@tempdima=\baselineskip
    \@tempcnta=\c@WF@wrappedlines
    \advance\@tempcnta\m@ne
    \multiply\@tempdima\@tempcnta
    \vskip\@tempdima
  \fi
  \WFclear
}
\makeatother
\usepackage{caption}
\usepackage{subcaption}
\usepackage{array}
\usepackage{xcolor}
\usepackage{colortbl}
\definecolor{tableaccent}{RGB}{238,247,245}
\definecolor{keyred}{RGB}{160,30,35}
\usepackage{algorithm}
\usepackage{algorithmic}
\usepackage[hidelinks]{hyperref}
\usepackage{url}

\newtheorem{proposition}{Proposition}
\newtheorem{lemma}{Lemma}
\newtheorem{theorem}{Theorem}
\newtheorem{corollary}{Corollary}

\newcommand{\tfour}{\ensuremath{\mathrm{T}^{5}}}
\newcommand{\qstar}{\textsc{Quiet-STaR}}
\newcommand{\clip}{\mathrm{clip}}
\newcommand{\one}{\mathbf{1}}
\renewcommand{\eqref}[1]{\textup{(\ref{#1})}}

\newif\ifshortconclusion
\shortconclusiontrue

\title{\tfour: Twin-Critic Training for Token-Level Thoughts in Reinforcement Mid-Training}

\author{%
\begin{minipage}{0.96\textwidth}
\centering\normalfont
Nan Qiao\textsuperscript{1,2}\thanks{Part of this work was done when Nan Qiao worked at Tencent.},
Yebin Yang\textsuperscript{3},
Weinong Wang\textsuperscript{2,\textdagger},
Shuning Wang\textsuperscript{4},
Shangpin Peng\textsuperscript{5},
Fengyuan Lu\textsuperscript{6},
Xinming Wang\textsuperscript{7},
Zhehan Kan\textsuperscript{1},
Ruixu Zhang\textsuperscript{1},
Songyang Zhang\textsuperscript{2},
Sheng Yue\textsuperscript{8},\\
Yonglong Tian\textsuperscript{1,\textdagger},
Ju Ren\textsuperscript{1}\\[0.8em]
\small
\textsuperscript{1}Tsinghua University,\quad
\textsuperscript{2}Tencent,\quad
\textsuperscript{3}Shanghai Jiao Tong University,\quad
\textsuperscript{4}Central South University,\quad\\
\textsuperscript{5}The Hong Kong University of Science and Technology,\quad
\textsuperscript{6}Nanjing University,\quad\\
\textsuperscript{7}Institute of Automation, Chinese Academy of Sciences,\quad
\textsuperscript{8}Sun Yat-sen University,\quad\\
\textsuperscript{\textdagger}Corresponding authors
\end{minipage}
}
\date{}

\preprintfinalcopy
\begin{document}

\maketitle

\begin{abstract}
Reinforcement mid-training lets language models learn internal thoughts from unlabeled text, but efficient token-level credit assignment remains challenging.  Existing group-relative methods require costly repeated generation.  Learned critics offer single-rollout feedback, but accurate return prediction alone does not ensure reliable policy updates.  Our analysis shows how training--inference mismatch and PPO clipping prevent a common offset in advantage estimates from cancelling out, introducing additional update drift.  We propose \tfour{}, a twin-critic method that calibrates token-level advantages from a single generated trajectory.  After warmup and held-out qualification, the critics provide two advantage estimates, combined using action-dependent weights learned through a conditional-moment saddle-point objective.  This objective brings the average advantage at each prefix toward zero, while a signal-retention constraint prevents the correction from erasing the learning signal.  Sharing information across text positions avoids repeated sampling of each prefix.  Theoretically, we characterize optimal mixing under the signal-retention constraint and establish an upper bound on residual mean-induced drift.  Experiments show that, compared with the state-of-the-art critic-free method, \tfour{} improves mean benchmark performance by 7.8\% and reduces mean training-step time by up to 63.4\%.
\end{abstract}

\section{Introduction}

Language models can learn to reason from ordinary text by generating internal thoughts that help predict what comes next.  In \qstar{}, improved continuation likelihood rewards these thoughts without labeled answers or reasoning traces \citep{zelikman2024quietstar}.  Recent work brings reinforcement learning into pre- and mid-training: RPT and RLPT turn next-token and next-segment prediction into reasoning tasks \citep{dong2025reinforcementpretraining,li2025rlpretrainingdata}.  RLP rewards the information gained from thoughts, while PretrainZero learns to select and predict masked spans \citep{hatamizadeh2025rlp,xing2025pretrainzero}.  RMT targets the intermediate training stage with adaptive thought-length limits and curriculum sampling \citep{tian2025rmt}.  A central challenge remains token-level credit assignment: a useful continuation does not reveal which thought tokens contributed.

Providing this token-level feedback, however, can require substantial generation.  Critic-free group-relative methods such as GRPO use multiple responses to construct a baseline \citep{shao2024deepseekmath}, as do RMT and RLP \citep{tian2025rmt,hatamizadeh2025rlp}.  We use groups of eight throughout our GRPO comparisons.  A learned critic instead shares return predictions across texts and prefixes, as explored in value-based LLM optimization \citep{yue2025vapo}.  By predicting the remaining return from a partial thought, it can combine prefix rewards with generalized advantage estimation (GAE) to provide token-level feedback along a single trajectory \citep{schulman2016gae,schulman2017ppo}.

Efficient single-rollout training also has to accommodate discrepancies between generation and optimization.  Reusing rollouts introduces policy lag \citep{schulman2017ppo}, while differences in numerical precision, kernels, and batching can make training and inference probabilities disagree even at the same model parameters \citep{marek2026score}.  Stabilization methods address different aspects of these discrepancies: GSPO uses sequence-level reweighting \citep{zheng2025gspo}, CPPO constrains position-dependent and accumulated prefix drift \citep{mao2026cppo}, and Score Centering corrects the biased expectation of the policy score---the log-probability gradient---under sampling mismatch \citep{marek2026score}.  These methods regulate sampling and policy updates.  A learned critic must also predict current returns reliably.  Newly initialized value heads therefore need warmup and checks on current, held-out returns before guiding actor updates.

\textbf{\textit{However, reliable return predictions alone do not remove the effect of training--inference mismatch on advantage-based updates.}}  At a given thought prefix, advantage estimates may rank actions correctly yet be uniformly too high or too low.  This common offset cancels from the expected policy gradient under matched sampling without clipping, but mismatch or PPO's sign-dependent clipping can prevent that cancellation.  Our analysis separates the expected update into a component reflecting variation across actions and a mean-induced drift component.  The latter couples the advantage offset to the average update direction after importance weighting and clipping.  This identifies an advantage-side complement to the sampling and score corrections above: bring the prefix-conditioned advantage mean toward zero without erasing the learning signal.

To control this mean-induced drift, we propose \tfour{}, a twin-critic method that learns how to combine two advantage estimates.  After warmup and held-out qualification, both critics evaluate the same generated thought.  An action-dependent weight mixes their estimates at each token.  We learn this weight through a conditional-moment saddle-point objective: a maximizing auxiliary mean predictor exposes the mixture's residual offset, while the minimizing weight uses critic disagreement to reduce it.  The mean predictor learns across contexts, using one thought per selected position for both estimates rather than repeated prefix continuations.  A constraint limits deviation from the fixed twin average to preserve nonzero aggregate signal.  This saddle-point formulation links calibration to optimal mixing and a bound on residual mean-induced drift.

Our contributions are this drift analysis, a prediction-qualified single-rollout calibration method, and its theoretical guarantees.  In the population formulation, we prove strong duality and characterize optimal mixing under the signal-retention constraint.  We also bound retained signal strength and residual mean-induced drift, accounting for mean-prediction error and validation uncertainty.  Compared with RLP, the state-of-the-art critic-free reinforcement pretraining method, \tfour{} improves mean benchmark performance by 7.8\% and reduces mean training-step time by up to 63.4\%.

\section{Related Work}

\paragraph{Latent reasoning and reinforcement mid-training.}
\qstar{} learns hidden rationales from their effect on future-token likelihood \citep{zelikman2024quietstar}, while Fast Quiet-STaR compresses explicit thought tokens \citep{huang2025fast}.  These methods connect intermediate computation to ordinary text, complementing reasoning-trace bootstrapping such as STaR \citep{zelikman2022star}.  Reinforcement Pre-Training, RL on Pre-Training Data, and RLP also construct reinforcement signals from pretraining corpora \citep{dong2025reinforcementpretraining,li2025rlpretrainingdata,hatamizadeh2025rlp}.  PretrainZero selects masked spans for self-supervised reinforcement pretraining \citep{xing2025pretrainzero}.  RMT combines reinforcement mid-training with thought-token allocation, curriculum sampling, and next-token prediction \citep{tian2025rmt}.  Our task follows \qstar{}'s continuation-utility objective at this intermediate training stage.  The focus is whether learned token-value baselines can supply reliable PPO advantages across the resulting thought prefixes.

\paragraph{Group-relative estimation and token-level actor--critic learning.}
GRPO replaces the learned value baseline with comparisons among responses sampled for a prompt \citep{shao2024deepseekmath}, a route also used for large-scale reasoning reinforcement learning \citep{deepseekai2025r1}.  A learned critic instead shares information across sampled states and supports GAE along each trajectory \citep{schulman2016gae,schulman2017ppo}.  VAPO studies value-based LLM optimization and introduces the length-adaptive trace used here \citep{yue2025vapo}.  SAO learns baselines under single-rollout asynchronous agentic training \citep{hou2026sao}.  Related offline-RL work addresses critic-side instability by controlling harmful TD cross-covariance or modifying optimizer dynamics to suppress critic collapse \citep{qiao2026less,qiao2026adamoptimizer}.  Our states are hidden thought prefixes and rewards come from continuation prediction, with a fixed rollout law within each window.  Twin critics in TD3 and SAC manage function-approximation error through action-value targets \citep{fujimoto2018addressing,haarnoja2018soft}.  Our two state-value critics fit observed returns, and a conditional-moment objective learns how to combine their advantages.

\paragraph{What policy stabilization controls.}
Existing methods act on different parts of the update.  TRPO constrains policy displacement and PPO clips probability ratios \citep{schulman2015trpo,schulman2017ppo}.  REINFORCE++ and Dr.~GRPO change advantage or loss normalization \citep{hu2025reinforceplusplus,liu2025understanding}.  DAPO combines clipping and sampling changes with token-level loss aggregation \citep{yu2025dapo}.  GSPO uses sequence-level ratios, while CPPO makes the trust region position- and prefix-dependent \citep{zheng2025gspo,mao2026cppo}.  Score Centering addresses update drift under training--inference engine mismatch through an additive score correction \citep{marek2026score}.  \tfour{} retains clipped PPO and acts on the conditional advantage mean, after qualifying the predictors that produce it.  This distinction respects the usual baseline-cancellation result: inaccurate state baselines can still cancel under exact unclipped score weighting \citep{williams1992reinforce,sutton2000policy}.  The issue here is their interaction with bootstrapping and incomplete score cancellation.

\section{Preliminaries and Problem Setup}
\label{sec:problem-setup}

\subsection{Reinforcement Mid-Training from Unlabeled Text}
Reinforcement mid-training uses ordinary text to supervise useful internal computation before task-specific post-training \citep{tian2025rmt,hatamizadeh2025rlp}.  From a Base model, we sample a $T$-token hidden thought $z_{1:T}$ at position $p$ in a length-$S$ text $x_{1:S}$ drawn from corpus $\mathcal D_{\mathrm{mid}}$.  Following \qstar{} \citep{zelikman2024quietstar}, we score the thought by its effect on prediction loss over the next $H$ observed text tokens.  After $t$ thought tokens, this loss is
\begin{equation}
\ell_t=-\frac1H\sum_{j=1}^H\log p_{\mathrm{score}}\!\left(x_{p+j}\mid x_{\le p},z_{1:t},x_{p+1:p+j-1}\right),
\label{eq:continuation-ce}
\end{equation}
where $p_{\mathrm{score}}$ is the scoring model's token distribution, and $\ell_0$ is the loss without a thought.  The full thought has utility $\mathcal U=\ell_0-\ell_T$: it compares prediction of the same continuation with and without the thought.  Each observed token contributes a log-likelihood gain.  The changes after successive thought tokens also add up to $\mathcal U=\sum_{t=1}^{T}(\ell_{t-1}-\ell_t)$.  An individual change may be negative even when the full thought helps.

The actor samples token $a_t=z_t$ from the visible prefix $s_t=(x_{\le p},z_{<t})$.  Actor and critic inputs exclude the scoring text.  To assign rewards along the thought, write $\Psi_t=\ell_0-\ell_t$ for the gain after its first $t$ tokens.  We scale and clip this gain, then reward each change in the resulting potential:
\begin{equation}
\Phi_t=\clip(\Psi_t/\sigma_r,-c_r,c_r),\qquad
r_t=\Phi_t-\Phi_{t-1},\quad\Phi_0=0,
\label{eq:dense-reward}
\end{equation}
where $\sigma_r>0$ is the reward scale and $c_r>0$ the clipping threshold.  Between scoring checkpoints we carry the last potential forward, so $\sum_t r_t=\Phi_T=\clip(\mathcal U/\sigma_r,-c_r,c_r)$.  The scorer and reward scale stay frozen within each training window (Appendix~\ref{app:reward-details}).  For analysis, $c$ denotes the full pre-action context, including the visible prefix $s$, scoring text, and checkpoint history.  Unless stated otherwise, conditioning on $s$ also fixes these other parts of $c$.  This shorthand does not change what the actor and critic observe.

\paragraph{Training and prediction.}
\label{sec:prediction-interface}
Scoring and critic evaluation occur during training.  A gate $\beta_g(s)\in[0,1]$ combines thought-conditioned and no-thought predictions, defining the mixed gain in Figure~\ref{fig:learning-dynamics}.  Ordinary next-token loss anchors language modeling.  This prediction gate is separate from the advantage weight in Section~\ref{sec:twin-calibration}.  Its objective detaches expert logits and features (Appendix~\ref{app:gate-behavior}).  Scorer and generation settings are specified with the evaluation protocol (Appendix~\ref{app:experimental-details}).

\subsection{PPO with a Learned Token-Level Critic}
\label{sec:actor-critic-setup}
\label{sec:policy-mismatch}
A learned critic shares return information across texts and partial thoughts.  PPO combines this reusable baseline with the prefix rewards above, which can also support critic-free estimators.
Let $\pi_\theta$ be the actor with parameters $\theta$, $\pi_{\mathrm{old}}$ the recorded rollout policy, and $q$ the actual sampling law.  A critic $V_\psi(s_t)$, with parameters $\psi$, predicts remaining return $G_t=\sum_{k=0}^{T-t}\gamma^k r_{t+k}$ with discount $\gamma$.  Generalized advantage estimation (GAE) gives \citep{schulman2016gae}
\begin{equation}
\delta_t=r_t+\gamma V_\psi(s_{t+1})-V_\psi(s_t),\qquad
A_t=\delta_t+\gamma\lambda A_{t+1},
\label{eq:gae}
\end{equation}
where $\delta_t$ is the temporal-difference residual, $A_t$ the advantage estimate, and $\lambda$ the trace parameter controlling how far later residuals propagate.  We use $\gamma=1$, full-return targets, and a length-adaptive trace (Appendix~\ref{app:advantage-snapshots}).  Values and advantages vanish after termination.

\Needspace{8\baselineskip}
With $\widetilde A_t$ denoting a normalized, detached advantage, PPO maximizes \citep{schulman2017ppo}
\begin{equation}
\mathcal J_{\mathrm{clip}}(\theta)=\E_{\mathcal D}\!\left[\min\{\rho_t\widetilde A_t,\clip(\rho_t,\rho_-,\rho_+)\widetilde A_t\}\right],
\label{eq:ppo}
\end{equation}
where $\rho_t=\pi_\theta(a_t\mid s_t)/\pi_{\mathrm{old}}(a_t\mid s_t)$ is the token probability ratio and $0<\rho_-\leq1\leq\rho_+$ are its lower and upper clipping bounds, which may be asymmetric.  Replay $\mathcal D$ averages valid actor actions.  Recorded probabilities, critic snapshots, and advantages stay fixed during actor updates.

Unlike the learned predictor $V_\psi$, $V^q(s)=\E_q[G_t\mid s_t=s]$ denotes the conditional expected return under $q$, and $V^{\pi_\theta}$ is its current-actor counterpart.  Policy lag is movement from $\pi_{\mathrm{old}}$ to $\pi_\theta$, while sampling mismatch is a difference between $q$ and $\pi_{\mathrm{old}}$ \citep{marek2026score}.  Critic mismatch concerns prediction relative to $V^q$, including stale fitting targets or hidden scoring information (Appendix~\ref{app:information-limit}).  Refreshing the rollout copy does not refit the critic.  GRPO instead uses current group rewards, avoiding learned-value history but retaining policy lag and finite-group estimation error.

\paragraph{Statistical notation.}
For any scalar or vector quantity $X$, write $\mu_X(s)=\E_q[X\mid s]$ for its conditional mean.  The symbols $\Var_q$ and $\Cov_q$ denote variance and covariance.  We omit token index $t$ when discussing a generic prefix.  The finite measure $d$ weights rollout prefixes, and $\E_d$ denotes the resulting aggregation, while $\E_{d,q}$ additionally averages sampled actions and continuations.  These weights need not sum to one, but their scale is fixed across the learning objectives (Appendix~\ref{app:restart-contract}).

\section{Critic Mismatch and Predictive Qualification}
\label{sec:predictive-reliability}

\subsection{The Critic Adds Its Own Mismatch}
\label{sec:critic-update-mismatch}
Clipping constrains how far the actor moves from its recorded policy, but it does not tell us whether the value reference still describes current continuation returns.  Keeping the critic aligned with a changing policy is a longstanding actor--critic concern \citep{konda2003actorcritic,yue2025vapo}.  Let $q_\beta$ denote the earlier rollout law that supplied the critic's fitting data and $q$ the current rollout law.  With the scoring rule fixed,
\[
V_\psi-V^{\pi_\theta}
=\underbrace{(V_\psi-V^{q_\beta})+(V^{q_\beta}-V^q)}_{\text{fit error $+$ target lag}}
+\underbrace{(V^q-V^{\pi_\theta})}_{\text{rollout--actor gap}}.
\]
The first two terms arise from fitting and reusing a learned critic.  Even under an idealized refresh with $q=\pi_{\mathrm{old}}=\pi_\theta$ and $\rho=1$, the rollout--actor gap vanishes while these critic-specific terms can remain.  GRPO avoids this learned-critic history by constructing its baseline from the sampled group \citep{shao2024deepseekmath}.  Critic-based PPO therefore needs value tracking in addition to policy-ratio control.  This extra mismatch matters because the critic supplies the reference level for every thought token.  The advantage asks whether a continuation exceeds its expected remaining return, not merely whether its immediate reward is positive.  In a long thought, a small local gain may lead to a promising prefix, while a large gain may leave little useful continuation.  GAE then carries later value errors back to earlier tokens \citep{schulman2016gae}, so a stale reference can distort credit assignment throughout the thought (Appendix~\ref{app:value-gradient-proof}).  Before using critic-based advantages, we therefore test both critics on unseen returns from the current rollout law.

\subsection{Training and Qualifying Both Critics}
\label{sec:critic-collapse}
\label{sec:prediction-gated-training}
\label{sec:naive-ppo-diagnostic}
To control the critic-specific mismatch above, \tfour{} trains and qualifies both critics in three steps: stabilize the current return target, test each critic on unseen returns, and repeat the test as the actor changes.

\paragraph{Preparing a stable return target.}
Within each qualification window, \tfour{} freezes the actor and its rollout copy, scorer, and reward scale so that two separately parameterized critics $V_i=V_{\psi_i}$, $i\in\{1,2\}$, face a stable prediction target.  We warm up their value heads before adapting the full critics and reset qualification when the backbones are unfrozen.  Prediction quality, rather than a fixed warmup duration, determines readiness.

\paragraph{Testing both critics.}
Let $\mathcal D_{\mathrm{qual}}=\{(s_j,G_j)\}_{j=1}^{N_h}$ contain $N_h$ held-out prefix--return observations with nonzero return variance.  Each critic receives the predictive score
\begin{equation}
R_i^2=1-\frac{\sum_j(G_j-V_i(s_j))^2}{\sum_j(G_j-\bar G)^2},
\label{eq:holdout-r2}
\end{equation}
where $\bar G$ is the holdout mean.  Actor updates require $\min(R_1^2,R_2^2)\geq\eta$, with $0<\eta<1$: both critics must reduce the constant predictor's error by at least a fraction $\eta$.  A score of zero matches that predictor, one fits all held-out returns, and a negative score is worse.  Rescaling returns and predictions together leaves the score unchanged.

To see why useful prediction need not have zero loss, write $G-V_i=(G-V^q)+(V^q-V_i)$ at a fixed prefix and scoring item.  Since $\E_q[G-V^q\mid s]=0$, the cross term vanishes:
\begin{equation}
\E_q[(G-V_i(s))^2\mid s]
=\Var_q(G\mid s)+(V_i(s)-V^q(s))^2,
\label{eq:prediction-risk-split}
\end{equation}
where the first term is continuation variability and the second is error in its conditional mean.  Qualification tests predictive improvement despite that variability.  Taking the weaker score prevents a strong critic from hiding an uninformative partner (Appendix~\ref{app:predictive-qualification}).

\paragraph{Rechecking as the actor changes.}
Qualification requires $m\geq1$ consecutive fresh tests on data excluded from fitting.  Checks continue at fixed intervals because expected returns after familiar prefixes can change as the actor learns.  Sustained failure pauses the actor and resumes critic fitting until qualification is restored.  Pausing stabilizes the target distribution, not individual sampled returns, while consecutive passes check that improvement persists across batches.  Changes to rollout or scoring targets trigger new checks.  Appendix~\ref{app:data-contracts} gives the data separation and update schedule.

\Needspace{18\baselineskip}
\begin{wrapfigure}{r}{0.4\textwidth}
\centering
\includegraphics[width=\linewidth]{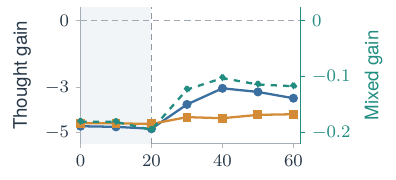}
\caption{Naive PPO gains (nats) over training steps.  Left: CSQA (blue) and GSM8K (orange) thought gains.  Right: CSQA mixed gain (teal dashed).  Shading: 20-step critic warmup with the actor frozen.}
\label{fig:learning-dynamics}
\end{wrapfigure}
The historical naive PPO run illustrates the motivation for a prediction-based warmup.  After 20 critic-only steps, actor updates improved CommonsenseQA thought gain from about $-4.8$ to $-3.0$ nats at step 40, and mixed gain from about $-0.18$ to $-0.10$ nats.  Both remained below the no-thought reference (Figure~\ref{fig:learning-dynamics}).  This run used neither \tfour{} nor the $R^2$ rule.  It provides a diagnostic of fixed-duration warmup, not an evaluation of the proposed method (Appendix~\ref{app:historical-naive-ppo}).

Qualification answers whether each critic predicts current returns.  It does not determine how two qualified advantage estimates should be combined, which is the separate problem addressed in Section~\ref{sec:twin-calibration}.
\finishwrap

\section{Learning Complementary Advantages from Single Rollouts}
\label{sec:twin-calibration}

Even after both critics qualify, the actual rollout law $q$ can differ from the recorded policy $\pi_{\mathrm{old}}$ used in the PPO ratio.  This training--inference mismatch, as well as clipping, can keep a common advantage offset from canceling in the policy update.  We combine the critics' advantages to reduce the offset's contribution while retaining an aggregate learning signal.

\subsection{Action-Dependent Mixing and Conditional Drift}
\label{sec:complementary-advantages}
With qualified critics in place, we examine how advantages enter clipped PPO, why a common offset can remain, and how the two critics provide room for calibration.

\paragraph{How advantages enter clipped PPO.}
\tfour{} retains $\mathcal J_{\mathrm{clip}}(\theta)$ from Eq.~\eqref{eq:ppo} and changes only its advantage signal.  With normalized advantage $\widetilde A$ detached, the per-token derivative away from the clipping thresholds is
\begin{equation}
\begin{aligned}
\nabla_\theta\min\{\rho\widetilde A,\clip(\rho,\rho_-,\rho_+)\widetilde A\}
&=\widetilde A\,\rho I\nabla_\theta\log\pi_\theta(a\mid s)=\widetilde A v,\\
I&=\one\{\widetilde A>0,\rho<\rho_+\}
+\one\{\widetilde A<0,\rho>\rho_-\}+\one\{\widetilde A=0\},
\end{aligned}
\label{eq:clipped-token-gradient}
\end{equation}
where $\one\{\cdot\}$ is the indicator and $v=\rho I\nabla_\theta\log\pi_\theta(a\mid s)$ is the masked update factor.  Averaging over replay gives $\nabla_\theta\mathcal J_{\mathrm{clip}}(\theta)=\E_{\mathcal D}[\widetilde A v]$.  The advantage supplies sign and strength, while $I$ turns off positive-advantage updates above $\rho_+$ and negative-advantage updates below $\rho_-$.  Thus PPO stops rewarding further movement once the probability has moved far enough in the favored direction.

\paragraph{Why a common offset matters.}
Separating each factor into its mean and variation reveals two components of the conditional update:
\begin{equation}
\begin{aligned}
\E_q[\widetilde A v\mid s]
&=\E_q\!\left[(\widetilde A-\E_q[\widetilde A\mid s])(v-\E_q[v\mid s])\mid s\right]
+\E_q[\widetilde A\mid s]\,\E_q[v\mid s]\\
&=\Cov_q(\widetilde A,v\mid s)+\textcolor{keyred}{\mu_{\widetilde A}(s)\,\mu_v(s)}
=\Cov_q(\widetilde A,v\mid s)+\Delta_\mu(s),
\end{aligned}
\label{eq:ppo-drift-decomposition}
\end{equation}
where $\mu_{\widetilde A}(s)=\E_q[\widetilde A\mid s]$ is the conditional advantage mean and $\mu_v(s)=\E_q[v\mid s]$ is the conditional mean update factor.  Figure~\ref{fig:advantage-calibration-insight}(a)--(b) contrasts the action-difference signal with the mean-induced component.  Under matched sampling without clipping, the score identity gives $\mu_v(s)=0$.  Mismatch or clipping can prevent that cancellation, allowing a common offset to contribute even when action rankings are unchanged.  \tfour{} targets this advantage offset and applies clipping after mixing.

\begin{figure}[!t]
\centering
\includegraphics[width=\linewidth]{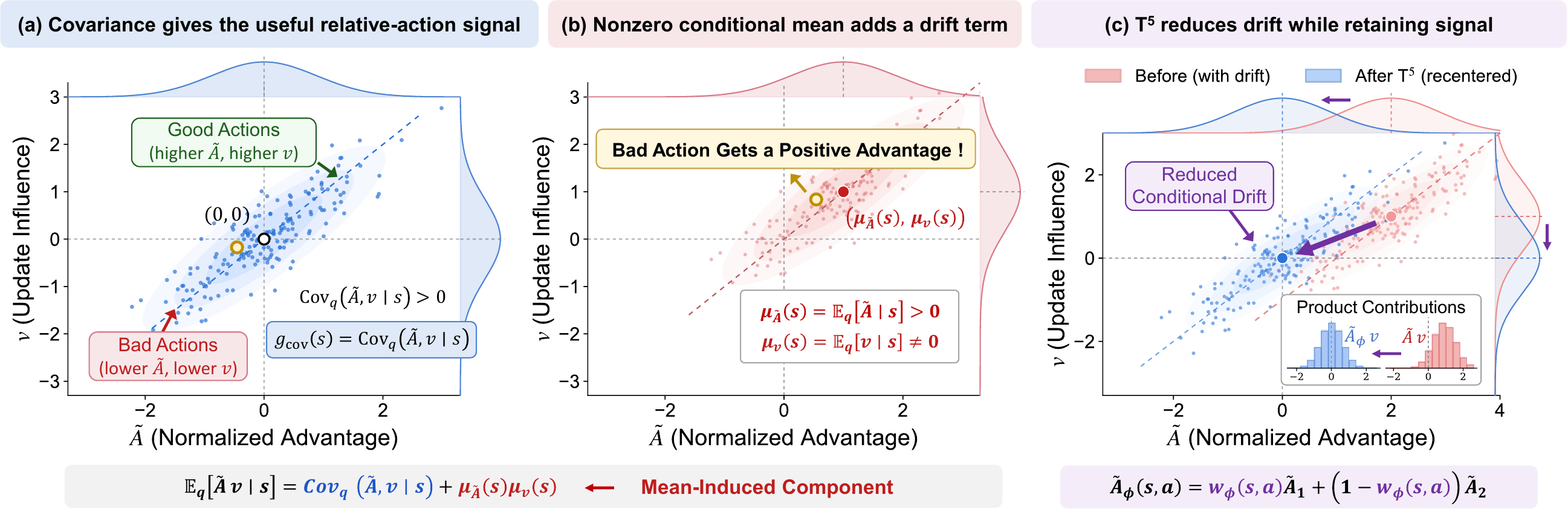}
\caption{Why calibrate twin-critic advantages? Schematic conditional distributions at a fixed prefix, showing normalized advantage against a scalar projection of the masked update factor $v$. (a) Centered advantages retain an action-contrast signal. (b) A common offset adds a mean-induced component when the mean update factor is nonzero. (c) Twin-advantage mixing targets the offset while retaining aggregate signal.}
\label{fig:advantage-calibration-insight}
\end{figure}

\paragraph{How twin critics enable calibration.}
Both critics evaluate the same trajectory.  Their raw GAE estimates $A_i$ use a common normalization, $\widetilde A_i=(A_i-\mu_{\mathrm{pilot}})/\sigma_{\mathrm{pilot}}$, with a shared mean and positive standard deviation fixed from independent pilot data (Appendix~\ref{app:advantage-snapshots}).  The mixed signal is
\begin{equation}
\widetilde A_\varphi=w_\varphi(s,a)\widetilde A_1+
(1-w_\varphi(s,a))\widetilde A_2,
\label{eq:twin-advantage}
\end{equation}
where $w_\varphi(s,a)\in[0,1]$ selects between the two estimates using the current action and frozen visible-prefix and critic features, not realized returns or GAEs.  The corresponding mask, update factor, and mean-induced component are denoted $I_\varphi$, $v_\varphi$, and $\Delta_{\mu,\varphi}$.

Figure~\ref{fig:advantage-calibration-insight}(a)--(b) is a distribution-level schematic, not a tracked-action counterexample.  In the concrete construction of Appendix~\ref{app:twin-disagreement}, the clipped-PPO gradient is $-0.105$ while the true-return gradient is $0.21$, showing that the mean-induced component can overwhelm a positive covariance signal.  Figure~\ref{fig:advantage-calibration-insight}(c) illustrates the repair: remove a common offset without erasing the signal.  The following two-action example is a separate numerical construction.  For two equally likely actions, a fixed rule $w$ can change the twin average $\overline A$ into
\begin{equation}
\overline A=(0.6,-0.4)
\quad\longrightarrow\quad
\widetilde A_w=(0.6,-0.6),
\label{eq:complementary-toy}
\end{equation}
where the mean falls from $0.1$ to zero while the action contrast stays nonzero.  The signs are unchanged, so the PPO mask and $\mu_v(s)$ remain the same at fixed policy ratios.  The mean-induced term therefore vanishes through advantage centering alone.  Appendix~\ref{app:twin-mixture-example} gives the original estimates and weights.  Section~\ref{sec:conditional-calibration} specifies an objective for selecting the mixture from single rollouts.

\subsection{A Single-Rollout Saddle-Point Objective}
\label{sec:conditional-calibration}
To select the mixture illustrated above from single rollouts, we answer three questions: what objective do we use, why does it work, and what does optimal mixing mean?

\paragraph{What objective do we use?}
We seek a small squared conditional offset $R(\varphi)=\E_d[\mu_{\widetilde A_\varphi}(s)^2]$, not a small advantage at every token.  Directly minimizing advantage squares would also suppress within-context variation, including action differences useful to PPO (Appendix~\ref{app:moment-duality}).  Alongside $w_\varphi(s,a)$, introduce an auxiliary mean function $h(c)$ from a class $\mathcal H$.  It may use the full pre-action context, including scoring text, but not the sampled action or continuation.  With the critics and normalization fixed, we consider the constrained saddle objective
\begin{equation}
\boxed{\min_\varphi\sup_{h\in\mathcal H}
\underbrace{\E_{d,q}[2h(c)\widetilde A_\varphi-h(c)^2]}_{\mathcal L(\varphi,h)}
\quad\text{subject to}\quad C(\varphi)\leq\kappa Q},
\label{eq:calibrated-weight}
\end{equation}
where $C(\varphi)=\E_{d,q}[(\widetilde A_\varphi-\overline A)^2]$ measures displacement from the fixed average $\overline A=(\widetilde A_1+\widetilde A_2)/2$, $Q=\E_{d,q}[\overline A^2]$ is its squared size, and $0\leq\kappa<1$ limits their ratio.  The mixture minimizes this objective, while the auxiliary function maximizes it.  We write $h_\zeta$ for a selected auxiliary response, with parameters $\zeta$.  Both advantages and the objective's per-token quantity come from one trajectory per selected position, without repeated prefix continuations.  Parameterization and optimization details are given in Appendix~\ref{app:calibration-implementation}.

\paragraph{Why does this objective work?}
For a fixed mixture, completing the square identifies the best mean response.  Over all functions with finite $\E_d[h^2]$, denoted $L^2(d)$,
\begin{equation}
R(\varphi)=\sup_{h\in L^2(d)}\mathcal L(\varphi,h),
\qquad h_\varphi^*(c)=\mu_{\widetilde A_\varphi}(s).
\label{eq:conditional-moment-duality}
\end{equation}
The maximizing mean function exposes the remaining offset, while the minimizing weight changes the mixture to reduce it.  Minimizing over both arguments would instead reward an inaccurate mean response.  For a restricted class $\mathcal H$, the unresolved mean error enters the bound in Section~\ref{sec:conditional-drift}.  The full square-completion argument is in Appendix~\ref{app:moment-duality}.
The constraint protects the signal during this correction.  The fixed average $w=1/2$ is feasible, and for $Q>0$ every feasible mixture satisfies
\begin{equation}
\|\widetilde A_\varphi\|_{L^2}\geq(1-\sqrt\kappa)\|\overline A\|_{L^2}>0,
\label{eq:main-signal-retention}
\end{equation}
where $\|X\|_{L^2}=(\E_{d,q}[X^2])^{1/2}$.  Feasible mixtures lie in a ball around the fixed average that excludes zero.  Thus a smaller conditional mean need not come from erasing the aggregate learning signal.  The reverse triangle inequality proves this in one step (Appendix~\ref{app:signal-retention}).

\paragraph{What does optimal mixing mean?}
Let $x=(s,a)$ denote only the gate's visible inputs and $\Delta\widetilde A=\widetilde A_1-\widetilde A_2$.  The population optimum has the following form.
\begin{proposition}[Optimal mixing]
\label{prop:population-saddle}
Fix a finite nonzero rollout occupation measure and square-integrable advantages, with $Q>0$ and $0<\kappa<1$.  Over all measurable gates $w(x)\in[0,1]$ and $h\in L^2(d)$, the constrained problem has a Lagrangian saddle point $(w^*,h^*,\beta^*)$, where $h^*(c)=\E_q[\widetilde A_{w^*}\mid c]$ and $\beta^*\geq0$ is the retention multiplier.  Wherever $\beta^*>0$ and $\E_{d,q}[(\Delta\widetilde A)^2\mid x]>0$,
\begin{equation}
w^*(x)=\clip\!\left(\frac12-
\frac{\E_{d,q}[h^*(c)\Delta\widetilde A\mid x]}
{\beta^*\E_{d,q}[(\Delta\widetilde A)^2\mid x]},\,0,1\right).
\label{eq:optimal-saddle-weight}
\end{equation}
\end{proposition}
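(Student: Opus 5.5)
The plan is to reduce the min--sup problem to a convex program in the gate alone and then read off the pointwise optimality conditions. Following Eq.~\eqref{eq:conditional-moment-duality}, for fixed $w$ the inner supremum over $h\in L^2(d)$ is attained at $h_w(c)=\E_q[\widetilde A_w\mid c]$ and equals $R(w)=\E_d[\mu_{\widetilde A_w}^2]$. Since $\widetilde A_w=\widetilde A_2+w(x)\Delta\widetilde A$ is affine in $w$, the map $w\mapsto\mu_{\widetilde A_w}$ is affine. Hence $R$ is a convex quadratic functional on the set $\mathcal W=\{w \text{ measurable},\ 0\le w\le1\}$. Similarly, $C(w)=\E_{d,q}[(w-\tfrac12)^2(\Delta\widetilde A)^2]$ is convex. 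The problem $\min_{w\in\mathcal W}R(w)$ subject to $C(w)\le\kappa Q$ is therefore convex.

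Slater's condition holds because $w\equiv1/2$ gives $C=0<\kappa Q$, using $Q>0$ and $\kappa>0$. Existence of a minimizer comes from weak compactness. View $w$ as an element of $L^2$ of the finite measure induced on $x$ weighted by $\E[(\Delta\widetilde A)^2\mid x]$, together with the occupation measure. The set $\mathcal W$ is convex, closed and bounded, hence weakly compact. Both $R$ and $C$ are convex and strongly continuous, because square-integrability gives bounds by $\|\Delta\widetilde A\|_{L^2}$; they are therefore weakly lower semicontinuous. On the region where $\E[(\Delta\widetilde A)^2\mid x]=0$, the gate does not affect either functional, so the gate can be set arbitrarily there. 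Lagrangian duality for convex programs with a Slater point then yields $\beta^*\ge0$ with
\[
w^*\in\arg\min_{w\in\mathcal W}\ \sup_h \mathcal L(w,h)+\tfrac{\beta^*}{2}\bigl(C(w)-\kappa Q\bigr),
\]
together with complementary slackness. I use the factor $1/2$ in the multiplier convention so that the formula matches Eq.~\eqref{eq:optimal-saddle-weight}; the statement leaves this scaling unspecified. Setting $h^*=h_{w^*}$ gives the saddle point $(w^*,h^*,\beta^*)$. The saddle inequality in $h$ follows from the concavity of $\mathcal L$ in $h$ and the square completion. The saddle inequality in $w$ uses an envelope (Danskin) argument: with $h^*$ fixed, $\mathcal L(w,h^*)$ is linear in $w$, and its directional derivative at $w^*$ matches that of $R$.

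The characterization then comes from a pointwise first-order condition. With $h^*$ frozen, the Lagrangian $\mathcal L(w,h^*)+\tfrac{\beta^*}{2}C(w)$ is, by the tower property over $x$, equal to
\[
\E_d\Big[2w(x)\,\E[h^*\Delta\widetilde A\mid x]+\tfrac{\beta^*}{2}\bigl(w(x)-\tfrac12\bigr)^2\E[(\Delta\widetilde A)^2\mid x]\Big]
\]
plus terms independent of $w$. This expression separates across $x$. Wherever $\beta^*>0$ and $\E[(\Delta\widetilde A)^2\mid x]>0$, the integrand is a strictly convex scalar quadratic in $w(x)\in[0,1]$. Its constrained minimizer is the clipped unconstrained stationary point, which is exactly Eq.~\eqref{eq:optimal-saddle-weight}.

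The main obstacle is justifying that minimizing the linearized Lagrangian at fixed $h^*$ characterizes $w^*$, given that $R$ itself is quadratic in $w$. I would handle this through the variational inequality. For any direction $w-w^*$, the Gateaux derivative of $R$ at $w^*$ is $\E[2\mu_{\widetilde A_{w^*}}(w-w^*)\Delta\widetilde A]=\E[2h^*(w-w^*)\Delta\widetilde A]$. So the KKT condition for the convex problem coincides with the optimality condition of the frozen-$h^*$ problem. That problem is convex and separable, so a measurable pointwise minimizer exists and equals $w^*$ almost everywhere on the stated region. A secondary technical point is ensuring the conditional expectations given $x$ are well defined under a finite, not necessarily normalized, measure. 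I would address this by normalizing $d$, which rescales $\beta^*$ and both conditional expectations consistently and leaves the ratio in Eq.~\eqref{eq:optimal-saddle-weight} unchanged.
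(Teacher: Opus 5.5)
Your route is the paper's own. You use weak compactness of the gate set in $L^2(\nu)$, with $\nu$ weighted by $a(x)=\E_{d,q}[(\Delta\widetilde A)^2\mid x]$, and Slater's point $w\equiv\tfrac12$. You take a multiplier with complementary slackness and set $h^*=\E_q[\widetilde A_{w^*}\mid c]$. You match the directional derivative of $\mathcal L(\cdot,h^*)$ at $w^*$ with that of $R$ to get the $w$-side of the saddle inequality. Finally you minimize a projected scalar quadratic pointwise.

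The one step that fails is the last, because of the multiplier scaling you chose. With $\tfrac{\beta^*}{2}\{C(w)-\kappa Q\}$, the $w(x)$-dependent integrand is $2b(x)w+\tfrac{\beta^*}{2}a(x)(w-\tfrac12)^2$, where $b(x)=\E_{d,q}[h^*\Delta\widetilde A\mid x]$. Its derivative $2b+\beta^*a\,(w-\tfrac12)$ vanishes at $w=\tfrac12-2b/(\beta^*a)$. That correction is twice the one in Eq.~\eqref{eq:optimal-saddle-weight}, so the factor $\tfrac12$ you inserted to make the formula match is exactly what breaks it.

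The displayed formula corresponds to the unscaled Lagrangian $\mathcal J(w,h,\beta)=\mathcal L(w,h)+\beta\{C(w)-\kappa Q\}$. This is how the paper defines the retention multiplier in Eq.~\eqref{eq:moment-lagrangian}. There the $w(x)$-terms are $2b(x)(w-\tfrac12)+\beta a(x)(w-\tfrac12)^2$, whose stationary point is exactly $\tfrac12-b/(\beta a)$. So the scale of $\beta^*$ is not free once the formula is fixed. Drop the $\tfrac12$, and the rest of your argument goes through unchanged, including the a.e.\ uniqueness of the pointwise minimizer on $\{a>0\}$ when $\beta^*>0$.

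A smaller point concerns weak lower semicontinuity of $R$ on $L^2(\nu)$. For that you need continuity of $R$ in the $L^2(\nu)$ norm, and you should state the bound that gives it. It is the conditional Jensen inequality
$\|\E_{d,q}[u(x)\Delta\widetilde A\mid c]\|_{L^2(d)}^2\le\E_{d,q}[u(x)^2(\Delta\widetilde A)^2]=\|u\|_{L^2(\nu)}^2$.
This lets you write $R(w)=\|\mu_{\overline A}+T(w-\tfrac12)\|_{L^2(d)}^2$ with $T$ bounded from $L^2(\nu)$ to $L^2(d)$, which is how the paper argues it. A bound phrased only through $\|\Delta\widetilde A\|_{L^2}$ naturally controls $R$ in the sup-norm of the gate. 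That does not yield weak lower semicontinuity in $L^2(\nu)$.
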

The numerator sets the adjustment direction through the relation between offset and critic disagreement.  The denominator scales it by disagreement and retention pressure, and projection keeps the result between the two estimates.  This self-consistent relation characterizes the population optimum, not an extra prefix-wise estimation step.  Strong duality, the precise function-space conditions, and boundary cases are given in Appendix~\ref{app:population-saddle}.

Algorithm~\ref{alg:t5-training} places this objective within PPO.  The selected mixture remains fixed during actor updates, and the complete mixed advantage is detached.  Appendix~\ref{app:calibration-implementation} describes the parameterized realization and its fitting and validation procedure.

\begin{algorithm}[H]
\caption{\tfour{}: one training window}
\label{alg:t5-training}
\small
\begin{algorithmic}[1]
\REQUIRE Actor $\pi_\theta$, critics $V_1,V_2$, displacement limit $\kappa$.
\STATE Warm up the value heads, then the full critics using $\mathcal L_{V_i}$ (Eq.~\eqref{eq:value-loss}).\hfill\textcolor{black!55}{\textit{// Critic fitting}}
\STATE Require $m$ fresh tests with $\min(R_1^2,R_2^2)\geq\eta$ (Eq.~\eqref{eq:holdout-r2}). If not, pause the actor and refit.
\STATE Learn $w_\varphi,h_\zeta$ (Eq.~\eqref{eq:calibrated-weight}).\hfill\textcolor{black!55}{\textit{// Advantage calibration}}
\STATE Validate and freeze $w_\varphi$ (Appendix~\ref{app:moment-validation}).
\STATE Sample fresh trajectories and form $\widetilde A_\varphi$ (Eq.~\eqref{eq:twin-advantage}).\hfill\textcolor{black!55}{\textit{// Policy optimization}}
\STATE Update $\theta$ (Eq.~\eqref{eq:ppo}).
\end{algorithmic}
\end{algorithm}
Each selected position supplies one thought to both critics.  Additional critic-fitting and mixture-validation trajectories are included in the group-sampling cost comparison in Appendix~\ref{app:rollout-accounting}.

\subsection{Bounding Conditional Drift}
\label{sec:conditional-drift}
For fitted heads, two quantities connect the learned objective to residual drift.  The conditional-mean prediction error is $\epsilon_h:=\E_d[(h_\zeta(c)-\mu_{\widetilde A_\varphi}(s))^2]$.  An independent validation estimate $\widehat{\mathcal L}_{\mathrm{val}}$ measures the objective, with uncertainty $\epsilon_{\mathrm{stat}}$.  The first quantity concerns the true conditional mean, not regression error against individual sampled advantages.  Its approximation and optimization components are detailed in Appendix~\ref{app:moment-duality}.

\begin{theorem}
\label{thm:complementary-drift}
Condition on the frozen rollout law, critics, scorer, normalizer, and selected head parameters.  Assume square-integrable advantages and auxiliary functions, valid recorded probabilities, differentiable policies, and the required finite score moments.  Let $B$ bound the weighted size of the mean update factor, $\E_d\|\mu_{v_\varphi}\|_2^2\leq B^2$, where $\|\cdot\|_2$ is the Euclidean norm.  For a frozen candidate and an independent validation estimate with
$\mathcal L(\varphi,h_\zeta)\leq\widehat{\mathcal L}_{\mathrm{val}}+\epsilon_{\mathrm{stat}}$,
\begin{equation}
\E_d\|\Delta_{\mu,\varphi}\|_2\leq B\sqrt{R(\varphi)}
\leq B\sqrt{\max\{0,\widehat{\mathcal L}_{\mathrm{val}}+
\epsilon_{\mathrm{stat}}+\epsilon_h\}}.
\label{eq:main-drift-budget}
\end{equation}
\end{theorem}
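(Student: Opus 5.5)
The bound has two inequalities, and I would prove them separately. The first is a Cauchy--Schwarz step. The second rewrites $R(\varphi)$ through the square-completion identity \eqref{eq:conditional-moment-duality} evaluated at the fitted head $h_\zeta$ rather than at the optimal response.

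\textbf{First inequality.} By the decomposition \eqref{eq:ppo-drift-decomposition} applied to the mixture, $\Delta_{\mu,\varphi}(s)=\mu_{\widetilde A_\varphi}(s)\,\mu_{v_\varphi}(s)$. Its norm is therefore $|\mu_{\widetilde A_\varphi}(s)|\,\|\mu_{v_\varphi}(s)\|_2$.

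I would apply Cauchy--Schwarz under the finite measure $d$:
\[
\E_d\|\Delta_{\mu,\varphi}\|_2\le\big(\E_d\mu_{\widetilde A_\varphi}^2\big)^{1/2}\big(\E_d\|\mu_{v_\varphi}\|_2^2\big)^{1/2}\le\sqrt{R(\varphi)}\,B.
\]
The finiteness of $d$ is enough here; no normalization is needed, because Cauchy--Schwarz holds for any finite (indeed any) measure. The stated integrability assumptions (square-integrable advantages, finite score moments, valid recorded probabilities) are what make $\mu_{v_\varphi}$ well defined and the conditional expectations legitimate.

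\textbf{Second inequality.} For any $h\in L^2(d)$, I would condition on $c$ and complete the square. Since $h(c)$ is $c$-measurable and $\E_q[\widetilde A_\varphi\mid c]=\mu_{\widetilde A_\varphi}(s)$ under the paper's convention that conditioning on $s$ fixes $c$, this gives
\[
\mathcal L(\varphi,h)=\E_d\big[2h\mu_{\widetilde A_\varphi}-h^2\big]=R(\varphi)-\E_d\big[(h-\mu_{\widetilde A_\varphi})^2\big].
\]
Taking $h=h_\zeta$ yields $R(\varphi)=\mathcal L(\varphi,h_\zeta)+\epsilon_h$ exactly. Combining this with the validation hypothesis gives $R(\varphi)\le\widehat{\mathcal L}_{\mathrm{val}}+\epsilon_{\mathrm{stat}}+\epsilon_h$. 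Since $R(\varphi)\ge0$, I can replace the right side by its positive part, and monotonicity of the square root then finishes the chain.

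\textbf{Main obstacle.} The delicate point is the conditioning in the square completion. The identity needs $\E_q[\widetilde A_\varphi\mid c]$ to equal $\mu_{\widetilde A_\varphi}(s)$, even though $h$ may read the scoring text while the actor and gate may not. I would handle this by taking the conditional mean with respect to the full context $c$, consistent with the notation section and with $h^*(c)=\E_q[\widetilde A_{w^*}\mid c]$ in Proposition~\ref{prop:population-saddle}. The justification is that the gate $w_\varphi$ depends only on $(s,a)$, with $a$ drawn after $c$, so the tower property applies.

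The other point to check is that everything is conditioned on the frozen quantities (rollout law, critics, scorer, normalizer, head parameters). This ensures $h_\zeta$ is a fixed function and not data-dependent. Independence of the validation sample then only enters through the assumed inequality $\mathcal L(\varphi,h_\zeta)\le\widehat{\mathcal L}_{\mathrm{val}}+\epsilon_{\mathrm{stat}}$.
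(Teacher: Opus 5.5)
Your proposal is correct and follows the paper's proof essentially step for step: Cauchy--Schwarz over $d$ gives the first inequality, and the square-completion identity evaluated at the fitted head, $R(\varphi)=\mathcal L(\varphi,h_\zeta)+\epsilon_h$, combined with the validation bound and $R(\varphi)\geq 0$, gives the second. One small remark: the tower step only needs $h_\zeta$ to be $c$-measurable and square-integrable, so the gate's restriction to $(s,a)$ plays no role there.
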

The bound links the remaining advantage offset to its amplification by the policy update.  Cauchy--Schwarz gives the first inequality, and the dual identity with validation gives the second (Appendix~\ref{app:drift-proof}).  A small measured objective gives a tight bound when the mean-prediction error and validation uncertainty are also small.  This controls the mean-induced component with the actual PPO mask, rather than the entire update.  Appendix~\ref{app:moment-validation} specifies the validation conditions.

\section{Experiments}
\label{sec:experimental-configuration}

\iffalse
We organize the experiments around three questions:
\begin{enumerate}
    \item Does \tfour{} improve downstream performance over representative strong baselines?
    \item Are the gains consistent across tasks and evaluation settings?
    \item How does end-to-end training efficiency scale with model size?
\end{enumerate}
\fi

We organize the experiments around three questions:
\begin{enumerate}
    \item Does \tfour{} improve downstream performance over matched controls and strong single-critic and group-relative baselines?
    \item How consistently do the gains hold across task families, evaluation settings, and model scales?
    \item What do the mechanism diagnostics reveal, and how do performance--time trade-offs and per-step costs compare with strong baselines?
\end{enumerate}

\subsection{Experimental Setup}

We continue pretraining Qwen3.5-2B-Base and Qwen3.5-9B-Base on a fixed corpus with disjoint data splits \citep{qwen2026qwen35}.  The 2B study compares Base, NTP, single-critic Quiet-STaR (w/ PPO), Quiet-STaR (w/ GRPO), RLP, and \tfour{} under matched settings \citep{zelikman2024quietstar,hatamizadeh2025rlp}, while 9B measures scaling.  All GRPO-based baselines use group size 8.  Appendix~\ref{app:experimental-details} gives the full protocol, timing traces, and compute accounting.

\begin{figure}[!t]
\centering
\renewcommand{\thesubfigure}{(\alph{subfigure})}
\captionsetup[subfigure]{labelformat=simple,skip=2pt}
\begin{subfigure}[t]{0.575\linewidth}
\centering
\includegraphics[width=\linewidth]{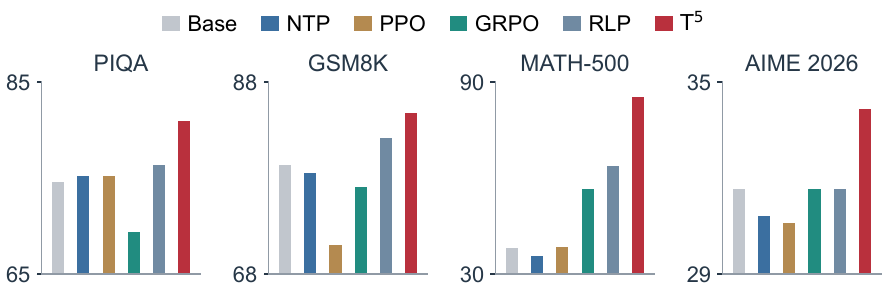}
\caption{Benchmark results}
\label{fig:qwen35-2b-benchmark-comparison}
\end{subfigure}\hfill
\begin{subfigure}[t]{0.405\linewidth}
\centering
\raisebox{3pt}{\includegraphics[width=\linewidth]{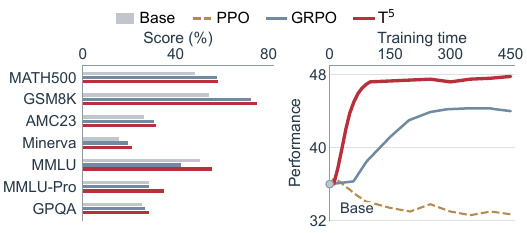}}
\caption{Controlled comparison}
\label{fig:benchmark-comparison}
\label{fig:training-time}
\label{fig:training-by-step}
\end{subfigure}
\caption{Experimental comparisons. (a) Selected Qwen3.5-2B results (PIQA: accuracy; others: pass@1). (b) Earlier controlled comparison with a shared Quiet-STaR architecture: benchmark scores (\%, left) and performance over training time from the common Base checkpoint (right). Legend labels PPO and GRPO denote Quiet-STaR (w/ PPO) and Quiet-STaR (w/ GRPO), respectively.}
\label{fig:benchmark-and-training-time}
\end{figure}

\subsection{Main Results}

\paragraph{Overall downstream effectiveness.}
\label{sec:qwen35-2b-results}
Table~\ref{tab:qwen35-2b-main-results} shows that \tfour{} leads all seven retained benchmarks.  Figure~\ref{fig:qwen35-2b-benchmark-comparison} highlights four with the clearest separation.  Its unweighted mean is 69.56, versus 64.52 for RLP and 60.15 for Quiet-STaR (w/ GRPO).  The largest RLP margin is on MATH-500 (85.45 versus 63.65).  Gains on the other six tasks are 0.90--4.51 percentage points.

After correcting its AIME 2025/2026 scores to 33.00/30.83, NTP averages 56.30 and is nearly unchanged from Base (56.28).  Quiet-STaR (w/ PPO) averages 54.42, below Base in this run.  Without seed-level uncertainty, these point estimates establish only the observed ordering, not statistical significance or component-level causality.  Appendix~\ref{app:qwen35-2b-results} gives the rounded table, Pass@4, reasoning-token diagnostics, and provenance.

\paragraph{Cross-task consistency.}
\label{sec:preliminary-benchmarks}
Before the full target protocol, we use an earlier controlled comparison to ask whether the advantage persists across mathematical and knowledge-intensive tasks \citep{yang2025qwen3}.  All trained variants share the Quiet-STaR hidden-thought architecture.  The two policy-optimization baselines are therefore denoted Quiet-STaR (w/ PPO) and Quiet-STaR (w/ GRPO) \citep{zelikman2024quietstar,shao2024deepseekmath}.

Figure~\ref{fig:benchmark-comparison} compares Base, Quiet-STaR (w/ GRPO), and \tfour{} on seven mathematical and knowledge-intensive tasks.  \tfour{} exceeds Quiet-STaR (w/ GRPO) on all seven, with a small difference on MATH500 and larger differences on GSM8K and MMLU.  It also exceeds Base on the displayed tasks, while Quiet-STaR (w/ GRPO) improves mathematics but falls 8.21 points below Base on MMLU.  The full summary includes an exception: \tfour{} is slightly below Base on GPQA pass@1.  Appendix~\ref{app:preliminary-provenance} gives task references, all scores, and source information.  These point estimates motivate controlled evaluation but do not isolate predictive qualification or complementary weighting.

\begin{wraptable}{r}{0.35\textwidth}
\centering
\caption{Mean step time (s).}
\label{tab:qwen35-mean-step-time}
\begin{tabular}{@{}lcc@{}}
\toprule
\textbf{Model} & \textbf{RLP} & $\mathbf{T}^{5}$ \\
\midrule
Qwen3.5-2B & 54.01 & 31.79 \\
Qwen3.5-9B & 497.83 & 182.18 \\
\bottomrule
\end{tabular}
\end{wraptable}
\paragraph{Training-time trade-off.}
\label{sec:training-dynamics}
Group sampling, value fitting, and calibration incur different costs.  Against RLP, Table~\ref{tab:qwen35-mean-step-time} shows mean step-time reductions of 41.1\% at 2B and 63.4\% at 9B, with cross-scale performance in Appendix~\ref{app:qwen35-scale-results}.  Separately, Figure~\ref{fig:training-time} traces the earlier Quiet-STaR (w/ PPO), Quiet-STaR (w/ GRPO), and \tfour{} runs from their common Base checkpoint.  Timings use the same hardware and include critic warmup, thought generation, reward scoring, critic fitting, weight and mean learning, validation, and rejected windows.  The relevant question is whether maintaining useful actor signals compensates for the cost of two critics and auxiliary continuations.  Appendix~\ref{app:experimental-details} gives the full reporting protocol.
\finishwrap

\section{Conclusion}

\ifshortconclusion
\tfour{} separates return prediction from advantage calibration.  Qualified twin critics supply estimates that a conditional-moment saddle objective combines using one rollout per selected position.  Its population optimality conditions explain how the remaining offset, critic disagreement, and retention constraint determine the mixture.  The retention constraint preserves aggregate signal strength, and the analysis bounds the mean-induced component of clipped PPO drift.  This makes both return prediction and the use of its advantage estimates part of the learning process.

\else
Reliable actor--critic learning requires both useful return predictions and a suitable way to turn them into policy updates.  \tfour{} addresses these requirements through predictive qualification and learned advantage mixing.  Two critics evaluate the same trajectory, providing a family of convex corrections.  An auxiliary function learns the mixed conditional mean across contexts and guides the weight, while a displacement constraint preserves aggregate signal strength.

The conditional-moment formulation supports stochastic learning from one trajectory per selected input and yields a bound on mean-induced PPO drift.  Its quality depends on the approximation and optimization of the mean predictor, as well as finite-data uncertainty.  These conditions clarify what advantage calibration can control and why it complements return-prediction accuracy.  The preliminary benchmark summary motivates further evaluation, while the specified protocol separates mechanism attribution from total training cost.  Establishing practical gains requires measuring whether the improved actor signal compensates for the two critics and auxiliary data collection.

\fi

\Needspace{12\baselineskip}
\section*{Reproducibility Statement}
The method and training objectives are specified in the main text, with derivations in Appendices~\ref{app:value-gradient-proof} and~\ref{app:conditional-calibration}.  Appendix~\ref{app:data-contracts} details data separation, parameter updates, and trajectory counts.  Appendix~\ref{app:experimental-details} documents the evaluation protocol and the provenance of the preliminary scores and historical diagnostic.  The coordinates underlying the training-time figure are retained with the plotting data.

\Needspace{8\baselineskip}
\bibliographystyle{arxiv_preprint}
\bibliography{t5}

\clearpage
\appendix

\begingroup
\renewcommand{\contentsname}{Full Paper Contents}
\setcounter{tocdepth}{2}
\tableofcontents
\endgroup
\clearpage

\newpage
\section{Additional Experimental Details and Results}
\label{app:experimental-details}
This appendix first gives the model, data, and evaluation protocols, then follows the three questions in Section~\ref{sec:experimental-configuration}: overall performance, cross-task consistency, and training efficiency across model scales.  Historical training diagnostics follow these results.

\subsection{Model, Data, and Implementation Configuration}
\label{app:fixed-configuration}

The target protocol uses the two 2026 Qwen3.5 Base checkpoints in Table~\ref{tab:base-models} \citep{qwen2026qwen35}.\footnote{Model resources: \url{https://huggingface.co/Qwen/Qwen3.5-2B-Base} and \url{https://huggingface.co/Qwen/Qwen3.5-9B-Base}.  These identifiers specify the selected initializations.}  Approximate file sizes describe checkpoint storage, not training memory.  Neither initialization is an Instruct checkpoint.

\begin{table}[htbp]
\centering
\small
\renewcommand{\arraystretch}{1.15}
\caption{Target Base-model initializations.}
\label{tab:base-models}
\begin{tabular*}{\linewidth}{@{\extracolsep{\fill}}lrr@{}}
\toprule
\rowcolor{tableaccent}\textbf{Checkpoint} & \textbf{Parameters} & \textbf{Storage} \\
\midrule
\texttt{Qwen/Qwen3.5-2B-Base} & 2.274B & 4.56\,GB \\
\texttt{Qwen/Qwen3.5-9B-Base} & 9.653B & 19.32\,GB \\
\bottomrule
\end{tabular*}
\end{table}

The corpus release is \texttt{recent\_fast\_100step\_v1}.  Table~\ref{tab:training-mixture} specifies the source mixture, and Table~\ref{tab:training-splits} preserves the supplied sequence and corpus-token counts.  FineWeb supplies web text \citep{penedo2024fineweb},\footnote{FineWeb dataset: \url{https://huggingface.co/datasets/HuggingFaceFW/fineweb}.  The selected crawl identifiers are listed in Table~\ref{tab:training-mixture}.} arXiv supplies 2025--2026 titles and abstracts, and OpenWebMath supplies mathematical web text \citep{paster2023openwebmath}.  The supplied configuration calls the scholarly component \texttt{open-index/open-arxiv}.  Source dates alone cannot rule out benchmark contamination.

\begin{table}[htbp]
\centering
\small
\renewcommand{\arraystretch}{1.15}
\caption{Training-corpus composition.  The four web crawls are from 2025.}
\label{tab:training-mixture}
\begin{tabular*}{\linewidth}{@{\extracolsep{\fill}}lr@{}}
\toprule
Source & Share \\
\midrule
FineWeb: \texttt{CC-MAIN-2025-13/18/21/26} & 85\% \\
2025--2026 open-index/open-arxiv titles and abstracts & 10\% \\
OpenWebMath & 5\% \\
\bottomrule
\end{tabular*}
\end{table}

\begin{table}[htbp]
\centering
\small
\renewcommand{\arraystretch}{1.15}
\caption{Corpus splits, with 512 tokens per sequence.  Counts exclude generated thoughts and auxiliary continuations.}
\label{tab:training-splits}
\begin{tabular*}{\linewidth}{@{\extracolsep{\fill}}lrr@{}}
\toprule
\rowcolor{tableaccent}\textbf{Split} & \textbf{Sequences} & \textbf{Corpus tokens} \\
\midrule
Training & 12,800 & 6,553,600 \\
Critic calibration & 1,024 & 524,288 \\
Development & 1,024 & 524,288 \\
\bottomrule
\end{tabular*}
\end{table}

Actor optimization and critic return fitting use the training split.  Predictive qualification uses the calibration split, with held-out outcomes excluded from fitting and from selecting a threshold on those outcomes.  Development data support progress measurement and a checkpoint-selection rule fixed before final evaluation.  Complete text items or parent trajectories are partitioned into the roles in Appendix~\ref{app:data-contracts} before token batching.  Head learning, independent validation, and actor optimization use separate ordinary trajectories, each with one thought per selected item.  Predictive qualification alone does not certify the learned mixture's conditional moment risk.

\Needspace{24\baselineskip}
\begin{table}[htbp]
\centering
\small
\renewcommand{\arraystretch}{1.18}
\setlength{\tabcolsep}{3pt}
\caption{Methods in the reported Qwen3.5-2B comparison.  A dash indicates no grouped thought sampling.  All GRPO-based comparisons in this paper use group size 8.}
\label{tab:target-baselines}
\begin{tabular}{@{}>{\raggedright\arraybackslash}p{0.22\linewidth}c>{\raggedright\arraybackslash}p{0.22\linewidth}>{\raggedright\arraybackslash}p{0.36\linewidth}@{}}
\toprule
\rowcolor{tableaccent}\textbf{Method} & \textbf{Group} & \textbf{Update} & \textbf{Controlled role} \\
\midrule
Base & -- & None & Original checkpoint with no additional corpus exposure. \\
NTP & -- & Next-token likelihood & Controls for continued exposure to the same training mixture. \\
Quiet-STaR (w/ PPO) & 1 & Single-critic clipped PPO & Uses the hidden-thought interface with a standard learned-value update. \\
RLP & 8 & GRPO-style group-relative update & Samples eight thoughts and uses next-token information-gain rewards. \\
Quiet-STaR (w/ GRPO) & 8 & Group-relative update & Applies GRPO to the same hidden-thought architecture. \\
\rowcolor{tableaccent}\tfour{} & 1 & Twin-critic clipped PPO & Uses predictive qualification and constrained saddle-point calibration. \\
\bottomrule
\end{tabular}
\end{table}

\paragraph{Execution environment.}
The target execution environment uses three nodes with NVIDIA H20 GPUs, CUDA 12.9, and Python 3.13.

\paragraph{Calibration parameterization and optimization.}
\label{app:calibration-implementation}
The functions in Section~\ref{sec:conditional-calibration} can be parameterized by lightweight MLP heads rather than additional language-model backbones.  The weight head $w_\varphi(s,a)$ has output in $[0,1]$ and uses frozen visible-prefix and critic features together with the current action.  The scalar mean head $h_\zeta(c)$ can additionally use the scoring context through a separate training-only input path, but not the sampled action or its continuation.  Returns and GAEs are labels, not inputs.  MLP depth, width, activations, feature encodings, optimizer settings, and update ratios are run-specific choices to record for the implementation, rather than requirements of the objective.

During calibration, fix the rollout law, critics, scorer, and common pilot normalizer, and detach the GAE labels and critic features.  Minimize in $\varphi$ and maximize in $\zeta$ and the nonnegative retention multiplier, using the single-trajectory gradients in Appendix~\ref{app:single-rollout-gradients}.  Each selected item contributes one thought, shared by both critics.  Freeze the candidate before independent validation, then construct detached advantages on fresh actor trajectories.  A failed retention check uses the fixed average, and failed critic qualification pauses actor updates.  Appendix~\ref{app:restart-contract} specifies the trajectory-level separation and replay rules.  This parameterized fitting procedure realizes the saddle objective without changing the actor's clipped PPO loss.

\subsection{Target Evaluation and Comparison Protocol}
\label{app:evaluation-protocol}

Table~\ref{tab:evaluation-suite} defines the seven-benchmark primary suite.  AIME 2025 and 2026 use the official competition sets \citep{aime}.  GSM8K \citep{cobbe2021verifiers} and MATH-500 \citep{hendrycks2021math} cover mathematical reasoning.  CommonsenseQA \citep{talmor2019commonsenseqa}, PIQA \citep{bisk2020piqa}, and ARC-Challenge \citep{clark2018arc} cover general, physical, and scientific commonsense.

\begin{table}[htbp]
\centering
\small
\renewcommand{\arraystretch}{1.16}
\setlength{\tabcolsep}{4pt}
\caption{Primary evaluation suite and frozen item counts.}
\label{tab:evaluation-suite}
\begin{tabular*}{\linewidth}{@{\extracolsep{\fill}}llrl@{}}
\toprule
\rowcolor{tableaccent}\textbf{Release} & \textbf{Benchmark} & \textbf{Items} & \textbf{Focus} \\
\midrule
2025 & AIME 2025 & 30 & Competition mathematics \\
2026 & AIME 2026 & 30 & Competition mathematics \\
2021 & GSM8K test & 1,319 & Grade-school mathematics \\
2021 & MATH-500 & 500 & Competition mathematics \\
2019 & CommonsenseQA validation & 1,221 & General commonsense \\
2020 & PIQA validation & 1,838 & Physical commonsense \\
2018 & ARC-Challenge test & 1,172 & Science reasoning \\
\bottomrule
\end{tabular*}
\end{table}

\iffalse
The earlier protocol also listed HMMT, SimpleQA Verified, LiveBench Reasoning, and HLE.  We retain them as supplementary diagnostics rather than mixing them into the primary aggregate.  Table~\ref{tab:supplementary-evaluation} preserves their intended inventories and access requirements.\footnote{Resource locations: \url{https://www.hmmt.org/archive}, \url{https://huggingface.co/datasets/google/simpleqa-verified}, \url{https://livebench.ai/}, and \url{https://lastexam.ai/}.  Links do not replace the frozen manifests and scorer versions recorded for a run.}

\begin{table}[htbp]
\centering
\small
\renewcommand{\arraystretch}{1.15}
\setlength{\tabcolsep}{4pt}
\caption{Retained supplementary diagnostic suite.  These tasks are reported separately from the primary aggregate.}
\label{tab:supplementary-evaluation}
\begin{tabular*}{\linewidth}{@{\extracolsep{\fill}}llrl@{}}
\toprule
\rowcolor{tableaccent}\textbf{Release} & \textbf{Benchmark} & \textbf{Items} & \textbf{Focus} \\
\midrule
2025 & HMMT February 2025 & 30 & Competition mathematics \\
2025 & SimpleQA Verified & 1,000 & Factual accuracy \\
Through 2025 & LiveBench Reasoning & $\sim$200 & General reasoning \\
2025 & Humanity's Last Exam & $\sim$2,500 & Cross-domain reasoning \\
\bottomrule
\end{tabular*}
\end{table}
\fi

AIME, GSM8K, and MATH-500 use fixed answer extraction and normalization before exact-match scoring.  CommonsenseQA, PIQA, and ARC-Challenge use accuracy on their frozen multiple-choice splits.  All primary benchmarks are excluded from training, critic fitting, admission, calibration, and checkpoint selection, and score interpretation requires corpus-overlap checks.

Within each model scale, methods share the same Base initialization, corpus, selected positions, evaluation prompts, demonstrations, output-token limit, decoding parameters, and tool or retrieval allowances.  Base checkpoints use an explicit prompting protocol rather than an assumed Instruct template.  The 512-token training sequence does not determine evaluation generation length.  The reported results are point estimates without variation across independent training seeds, which is especially limiting for the 30-item AIME sets.

The comparison roles are summarized in Table~\ref{tab:target-baselines}.  NTP controls corpus exposure, while Quiet-STaR (w/ PPO) and Quiet-STaR (w/ GRPO) compare two policy-update choices on the hidden-thought architecture.  Group-8 RLP provides the closest information-gain baseline \citep{hatamizadeh2025rlp}.  Comparisons distinguish matched corpus tokens, sampled thought tokens, and total compute.  Compute accounting follows Eq.~\eqref{eq:total-rollout-budget} and includes trajectory-set sizes, generated trajectories and tokens, actor-eligible tokens, group sizes, and head-update counts.  Head-training and validation trajectories count toward generation without contributing actor tokens.  Total cost includes selected positions, thought length, candidate generation, reward scoring, both critics, head-learning and validation sets, rejected windows, and hyperparameter search.  Equal optimizer steps do not imply equal compute.

Development diagnostics include continuation loss, thought and mixed NLL gain, useful-position fraction, and worst-1\% gain.  Each critic is evaluated by held-out $R^2$, prediction standard deviation, and twin disagreement.  Policy KL, clipping frequency, actor-release timing, the restricted moment objective, signal-retention ratio, and head/validation cost characterize the new protocol.  Reproducibility additionally requires optimizer settings, batch and accumulation sizes, precision, hardware, trainable parameters, seeds, thought-length limits, head architectures and inputs, $\kappa$, and split rules.

\subsection{Qwen3.5-2B Results and Diagnostics}
\label{app:qwen35-2b-results}

\paragraph{Primary benchmark results.}
The supplied Qwen3.5-2B result matrix contains the six methods in Table~\ref{tab:target-baselines}: Base, NTP, Quiet-STaR (w/ PPO), RLP, Quiet-STaR (w/ GRPO), and \tfour{}.  Table~\ref{tab:qwen35-2b-main-results} reports accuracy for CommonsenseQA, PIQA, and ARC-Challenge and pass@1 for the four mathematical tasks.  The values are author-supplied point estimates.

\begin{table}[htbp]
\centering
\small
\renewcommand{\arraystretch}{1.16}
\setlength{\tabcolsep}{2.1pt}
\caption{Qwen3.5-2B primary results on the seven retained benchmarks, rounded to integer percentages.  Bold and underlined entries identify the best and second-best unrounded values in each row.  The final row rounds the unweighted mean of the seven source values.}
\label{tab:qwen35-2b-main-results}
\begin{tabular*}{\linewidth}{@{\extracolsep{\fill}}lrrrrrr@{}}
\toprule
\rowcolor{tableaccent}\textbf{Benchmark} & \textbf{Base} & \textbf{NTP} & \textbf{\shortstack{Quiet-STaR\\(w/ PPO)}} & \textbf{RLP} & \textbf{\shortstack{Quiet-STaR\\(w/ GRPO)}} & \textbf{\tfour{}} \\
\midrule
CSQA & 66 & 67 & 64 & \underline{74} & 67 & \textbf{75} \\
PIQA & 75 & 75 & 75 & \underline{76} & 69 & \textbf{81} \\
ARC-Challenge & 74 & 74 & 75 & \underline{88} & 86 & \textbf{90} \\
GSM8K & 79 & 78 & 71 & \underline{82} & 77 & \textbf{85} \\
MATH-500 & 38 & 36 & 39 & \underline{64} & 56 & \textbf{85} \\
AIME 2025 & 30 & 33 & 27 & \underline{36} & 33 & \textbf{38} \\
AIME 2026 & 32 & 31 & 31 & \underline{32} & \underline{32} & \textbf{34} \\
\midrule
Retained-7 mean & 56 & 56 & 54 & \underline{65} & 60 & \textbf{70} \\
\bottomrule
\end{tabular*}
\end{table}

The current means are 56.28 for Base, 56.30 for NTP, 54.42 for Quiet-STaR (w/ PPO), 64.52 for RLP, 60.15 for Quiet-STaR (w/ GRPO), and 69.56 for \tfour{}.  \tfour{} leads on all seven tasks and exceeds RLP by 5.04 points on the retained-seven mean.  RLP exceeds Quiet-STaR (w/ GRPO) on six tasks and ties it on AIME 2026.  Figure~\ref{fig:qwen35-2b-benchmark-comparison} in the main text displays the unrounded values for four selected tasks.

\emph{Aggregation note.}  The retained-seven mean is recomputed from the unrounded source values in Table~\ref{tab:qwen35-2b-main-results}.  Earlier aggregate values of 55.95 for RLP and 51.57 for Quiet-STaR (w/ GRPO) covered a broader suite that included benchmarks later removed as obsolete, so they are not mixed with the current seven-task summary.  The NTP scores on AIME 2025/2026 are corrected to 33.00/30.83.

\paragraph{Sampling gains and reasoning length.}
Table~\ref{tab:qwen35-2b-t5-diagnostics} records the additional \tfour{} diagnostics.  Coverage is 100\% on every benchmark.  Pass@4 is available for the four generative mathematics evaluations and exceeds pass@1 by 15.83 points on each AIME set, 8.38 points on GSM8K, and 4.15 points on MATH-500.  The average reasoning length varies substantially by task, from 548.2 tokens on PIQA to 3922.4 on AIME 2026.  Because these token measurements are available only for \tfour{}, they describe its cross-task inference profile rather than a cross-method efficiency comparison.

\begin{table}[htbp]
\centering
\small
\renewcommand{\arraystretch}{1.12}
\setlength{\tabcolsep}{4pt}
\caption{Qwen3.5-2B \tfour{} evaluation diagnostics.  Primary scores, pass@4, and coverage are percentages.  A dash marks tasks for which pass@4 was not supplied.}
\label{tab:qwen35-2b-t5-diagnostics}
\begin{tabular*}{\linewidth}{@{\extracolsep{\fill}}llrrrr@{}}
\toprule
\rowcolor{tableaccent}\textbf{Benchmark} & \textbf{Metric} & \textbf{Primary} & \textbf{Pass@4} & \textbf{Coverage} & \textbf{Avg. tokens} \\
\midrule
AIME 2025 & pass@1 & 37.50 & 53.33 & 100.00 & 3849.1 \\
AIME 2026 & pass@1 & 34.17 & 50.00 & 100.00 & 3922.4 \\
ARC-Challenge & Accuracy & 89.59 & -- & 100.00 & 609.2 \\
CSQA & Accuracy & 74.53 & -- & 100.00 & 649.2 \\
GSM8K & pass@1 & 84.80 & 93.18 & 100.00 & 645.7 \\
MATH-500 & pass@1 & 85.45 & 89.60 & 100.00 & 1673.6 \\
PIQA & Accuracy & 80.90 & -- & 100.00 & 548.2 \\
\bottomrule
\end{tabular*}
\end{table}

\begin{figure}[htbp]
\centering
\includegraphics[width=\linewidth]{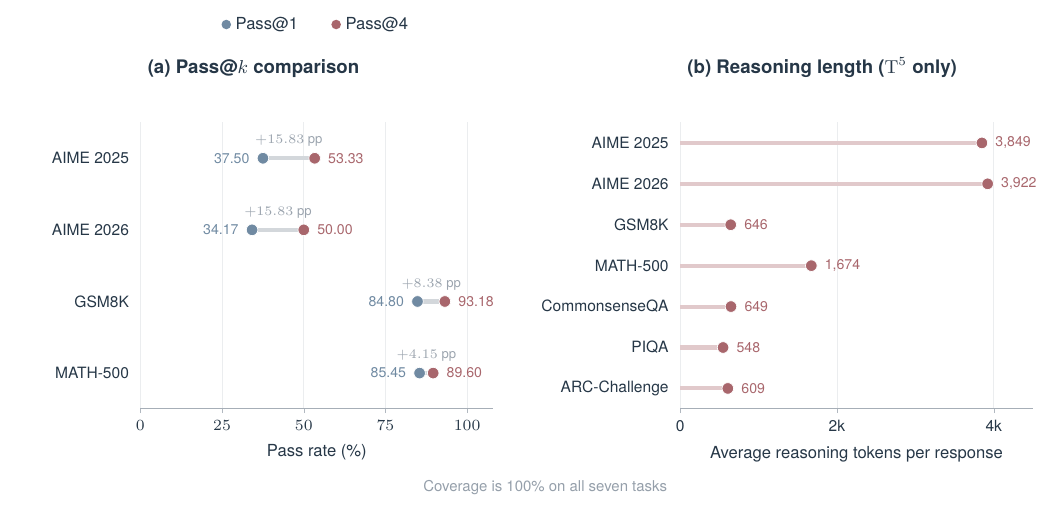}
\caption{Qwen3.5-2B \tfour{} diagnostics.  (a) Paired pass@1 and pass@4 with percentage-point differences.  (b) Average reasoning tokens per response, with rounded endpoint labels.  Coverage is 100\% on all seven tasks.}
\label{fig:t5-pass-and-inference-tokens}
\end{figure}

\Needspace{22\baselineskip}
\paragraph{Warmup parameter ablation.}
\label{app:warmup-ablation}
We vary the warmup parameter, denoted by $\tau$ in this sweep, over $\{0.00,0.05,0.10,0.20,0.40\}$ on Qwen3.5-2B and evaluate AIME 2026.  Figure~\ref{fig:qwen35-2b-warmup-ablation} pairs each score with its measured warmup time.  The score rises from 29.9 to 35.0 as warmup increases from 1.2 to 23.1 minutes.  Gains diminish at the upper end: moving from $\tau=0.20$ to $0.40$ adds only 0.2 percentage points while requiring 12.4 additional minutes.  The $\tau=0.00$ setting still incurs 1.2 minutes of warmup, rather than removing warmup entirely.

\begin{figure}[htbp]
\centering
\includegraphics[width=0.92\linewidth]{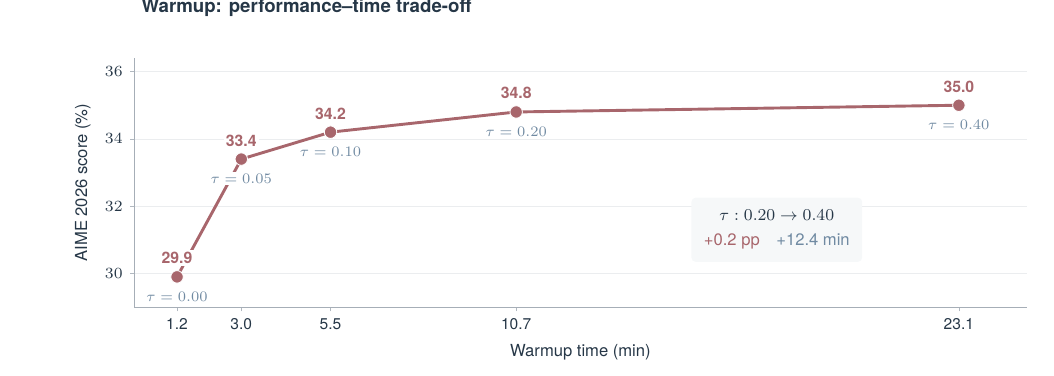}
\caption{Qwen3.5-2B \tfour{} warmup-parameter ablation on AIME 2026.  Scores are plotted against measured warmup time, with point labels identifying the score and $\tau$.}
\label{fig:qwen35-2b-warmup-ablation}
\end{figure}

\clearpage
\subsection{Earlier Cross-Task Comparison and Provenance}
\label{app:preliminary-provenance}

The available summary compares Base, Quiet-STaR (w/ GRPO), and \tfour{} on Qwen3-1.7B-Base \citep{yang2025qwen3}.  All trained variants use the same Quiet-STaR hidden-thought architecture.  The group-relative variant uses GRPO \citep{zelikman2024quietstar,shao2024deepseekmath}.  The mathematics group comprises MATH500, drawn from MATH \citep{hendrycks2021math}, GSM8K \citep{cobbe2021verifiers}, AMC23, and Minerva \citep{lewkowycz2022solving}.  The other tasks are MMLU \citep{hendrycks2021mmlu}, MMLU-Pro \citep{wang2024mmlupro}, and GPQA \citep{rein2024gpqa}.  This suite differs from the primary target suite in Section~\ref{sec:experimental-configuration}.  Its Quiet-STaR (w/ GRPO) baseline also uses group size 8 for both the benchmark summary and the training-time comparison.

Table~\ref{tab:preliminary-benchmarks} preserves all ten benchmark entries in the supplied summary.  The scores are numerical inputs to the original plotting script, rather than newly computed evaluations.  The Qwen3-1.7B initialization and shared architecture are author-confirmed, while the accompanying machine-readable materials do not record the training corpus, inference settings, training seeds, or exact \tfour{} implementation.  Consequently, these results are not attributed to either Qwen3.5 initialization or to the single-rollout conditional-moment protocol analyzed in this paper.

\begin{table}[htbp]
\centering
\small
\renewcommand{\arraystretch}{1.12}
\caption{Complete preliminary benchmark summary (\%).  Bold indicates the largest reported value in each row.  The three additional rows report pass@1.}
\label{tab:preliminary-benchmarks}
\begin{tabular*}{0.92\linewidth}{@{\extracolsep{\fill}}lrr>{\columncolor{tableaccent}}r@{}}
\toprule
\rowcolor{tableaccent}\textbf{Benchmark} & \textbf{Base} & \textbf{\shortstack{Quiet-STaR\\(w/ GRPO)}} & \textbf{\tfour{}} \\
\midrule
MATH500 & 48.17 & 57.96 & \textbf{58.10} \\
GSM8K & 54.47 & 72.42 & \textbf{75.00} \\
AMC23 & 26.19 & 30.76 & \textbf{31.70} \\
Minerva & 15.50 & 19.40 & \textbf{21.00} \\
\addlinespace[3pt]
MMLU & 50.61 & 42.40 & \textbf{55.80} \\
MMLU pass@1 & 45.10 & 40.49 & \textbf{51.90} \\
MMLU-Pro & 28.60 & 28.30 & \textbf{34.90} \\
MMLU-Pro pass@1 & 24.30 & 25.10 & \textbf{31.10} \\
GPQA & 25.60 & 26.80 & \textbf{28.70} \\
GPQA pass@1 & \textbf{27.00} & 25.20 & 26.80 \\
\bottomrule
\end{tabular*}
\end{table}

Figure~\ref{fig:benchmark-comparison} shows higher \tfour{} scores than Quiet-STaR (w/ GRPO) on all seven displayed tasks.  The difference is small on MATH500 (58.10 versus 57.96), but larger on GSM8K (75.00 versus 72.42) and MMLU (55.80 versus 42.40).  Relative to Base, Quiet-STaR (w/ GRPO) improves the four mathematics scores while its MMLU score decreases by 8.21 points, from 50.61 to 42.40.  \tfour{} exceeds Base on the seven displayed tasks.  The complete summary also contains an exception: on GPQA pass@1, \tfour{} scores 26.80 against Base's 27.00.  Thus the observed improvement is not uniform across all reported evaluation entries.  The Base scores are not uniformly depressed: on the seven main tasks they differ by at most 0.53 points from the Qwen3-1.7B-Base values reported with RLP \citep{hatamizadeh2025rlp}.  The conspicuous value is instead the Quiet-STaR (w/ GRPO) MMLU regression, which we treat as a run-specific observation rather than general evidence.

The three additional rows report pass@1.  The supplied summary does not identify their exact aggregation protocol.  The benchmark panel shows the seven main entries, while the table retains the three pass@1 values, including the unfavorable GPQA comparison with Base.  These point estimates motivate a controlled comparison but do not isolate predictive gating or complementary weighting.

Figure~\ref{fig:training-time} plots the supplied Quiet-STaR (w/ PPO), Quiet-STaR (w/ GRPO), and \tfour{} performance trajectories against training time.  Their common time-zero point denotes the shared Qwen3-1.7B-Base checkpoint.  The remaining points belong to the corresponding method.  The compact panel omits numeric ticks and intermediate markers, while the source JSON retains every coordinate.  Timing includes critic warmup, rollout generation, reward scoring, critic fitting, weight and auxiliary-head learning, validation, and rejected windows under matched hardware and evaluation settings.

\clearpage
\subsection{Recorded Per-Step Training Time}
\label{app:qwen35-step-time}

Figure~\ref{fig:qwen35-step-time} compares \tfour{} with the critic-free RLP baseline, which uses GRPO-style updates with group size 8 at both model scales.  It shows every supplied \texttt{step\_time\_seconds} value as a faint line, with 15-step rolling medians overlaid.  The two 2B RLP exports cover steps 1--200 and 201--500, respectively.  We join them at their original step indices and use all 500 records for the reported RLP mean.  Table~\ref{tab:qwen35-mean-step-time} uses the final continuous 2B \tfour{} segment (steps 301--600) for its mean, excluding the earlier startup and interrupted segments.  For \tfour{}, the figure retains all three column pairs in timestamp-defined order.  Because the second segment restarts its local step count, later segments are offset to form a monotone plotted step axis.  Dashed vertical lines mark the two joins.  The logarithmic time axis retains the initial high-cost segment without flattening the remaining measurements.

\begin{figure}[htbp]
\centering
\includegraphics[width=0.94\linewidth]{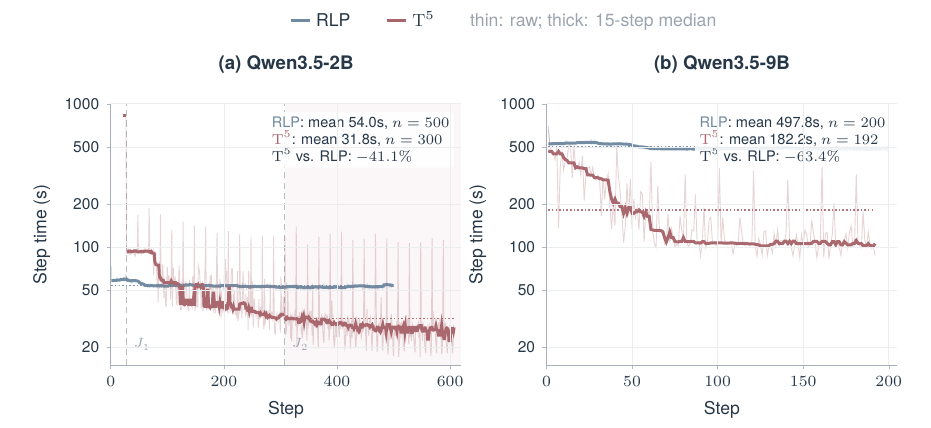}
\caption{Recorded per-step training times for RLP and \tfour{} on Qwen3.5-2B and Qwen3.5-9B.  Faint lines show raw timers, solid lines show 15-step rolling medians, and dotted lines mark the reported means.  The shaded 2B region is the final continuous \tfour{} segment used for its mean; $J_1$ and $J_2$ mark segment joins.  The logarithmic vertical axis reports seconds.}
\label{fig:qwen35-step-time}
\end{figure}

\clearpage
\subsection{Performance and Efficiency Across Model Scales}
\label{app:qwen35-scale-results}

\paragraph{Within-scale comparison.}
Figure~\ref{fig:qwen35-9b-benchmark-comparison} compares Base, NTP, and \tfour{} on Qwen3.5-9B using the seven-task metric convention of Table~\ref{tab:qwen35-2b-main-results}.  \tfour{} exceeds both baselines on every task.  Its unweighted mean is 92.71, compared with 86.86 for Base and 87.00 for NTP, giving gains of 5.86 and 5.71 percentage points, respectively.  The largest margin over NTP is on AIME 2026 (93 versus 76), while MATH-500 improves from an already high 96 to 97.  These results show that the method remains useful when applied to a stronger initialization.

\begin{figure}[htbp]
\centering
\includegraphics[width=\linewidth]{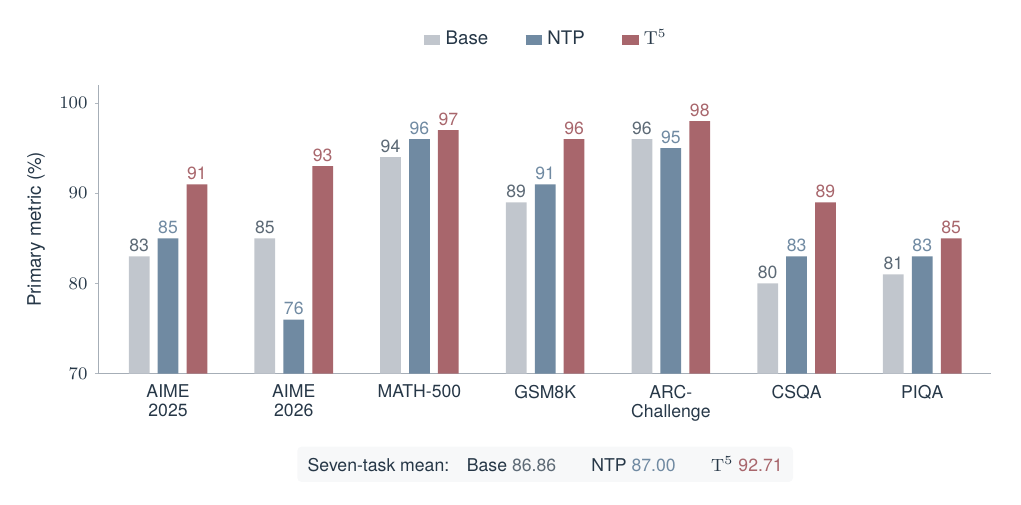}
\caption{Qwen3.5-9B comparison of Base, NTP, and \tfour{}.  Labels preserve the supplied scores in percent.  The four mathematics tasks use pass@1, and the three multiple-choice tasks use accuracy.  CSQA abbreviates CommonsenseQA.  The vertical axis starts at 70, and the footer reports unweighted means over all seven tasks.}
\label{fig:qwen35-9b-benchmark-comparison}
\end{figure}

\Needspace{34\baselineskip}
\paragraph{Cross-scale performance.}
Figure~\ref{fig:qwen35-2b-9b-common-methods} aligns the two model scales using only the three shared methods.  All three improve on every task at 9B.  The \tfour{} mean rises from 69.56 to 92.71, a 23.15-point increase.  Its margin over NTP remains positive at both scales, although it narrows from 13.26 to 5.71 points as the baseline scores rise.  The comparison therefore separates the benefit of a larger base model from the additional benefit of \tfour{} within each scale.

\begin{figure}[htbp]
\centering
\includegraphics[width=\linewidth]{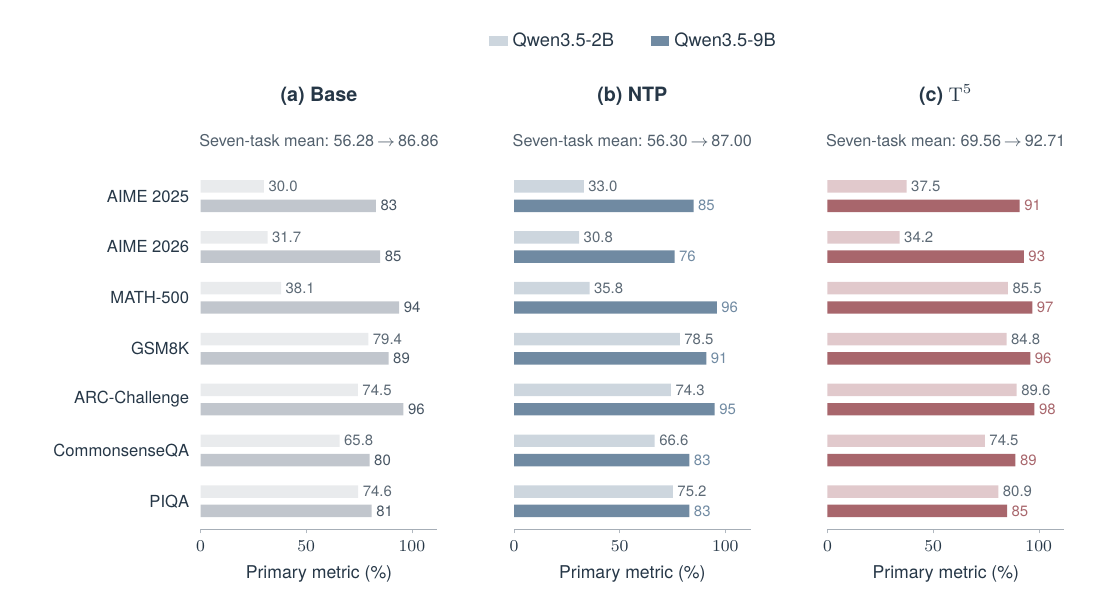}
\caption{Common-method comparison between Qwen3.5-2B and Qwen3.5-9B.  Each panel pairs a method's scores on the same seven tasks.  Light bars show 2B and darker bars show 9B, with identical score axes across panels.  The 2B bars use unrounded source values, with labels rounded to one decimal.  Headings report the seven-task mean at each scale.}
\label{fig:qwen35-2b-9b-common-methods}
\end{figure}

\paragraph{Relation to training efficiency.}
These results complement the timing comparison for the third research question.  In Table~\ref{tab:qwen35-mean-step-time}, the recorded mean step time grows from 31.79 to 182.18 seconds for \tfour{}, compared with 54.01 to 497.83 seconds for RLP.  Thus, the relative step-time reduction increases from 41.1\% at 2B to 63.4\% at 9B while \tfour{} retains gains over Base and NTP at the larger scale.  Together, the measurements describe performance and per-step cost as model size grows, rather than time to a common target score.

\clearpage
\subsection{Historical Naive-PPO Diagnostic}
\label{app:historical-naive-ppo}

Figure~\ref{fig:learning-dynamics} combines thought and mixed gains from an earlier 60-step naive PPO run, without \tfour{} or the new $R^2$ gate.  It evaluates 256 examples each from CommonsenseQA \citep{talmor2019commonsenseqa} and GSM8K \citep{cobbe2021verifiers} every ten steps.  The actor is frozen through step 20.  The retained record does not specify the allocation of these warmup steps between value heads and backbones.  The evaluation seed is fixed within the run.  Thought gain is $\ell_0-\ell_{\mathrm{thought}}$, while mixed gain substitutes the learned prediction mixture's loss.  The no-thought prediction is therefore the zero-gain reference by construction.  In this sense, the figure is an ablation-style diagnostic of prediction mode at a fixed checkpoint, not a comparison between separately trained Base and RL models.

\begin{figure}[htbp]
\centering
\includegraphics[width=\linewidth]{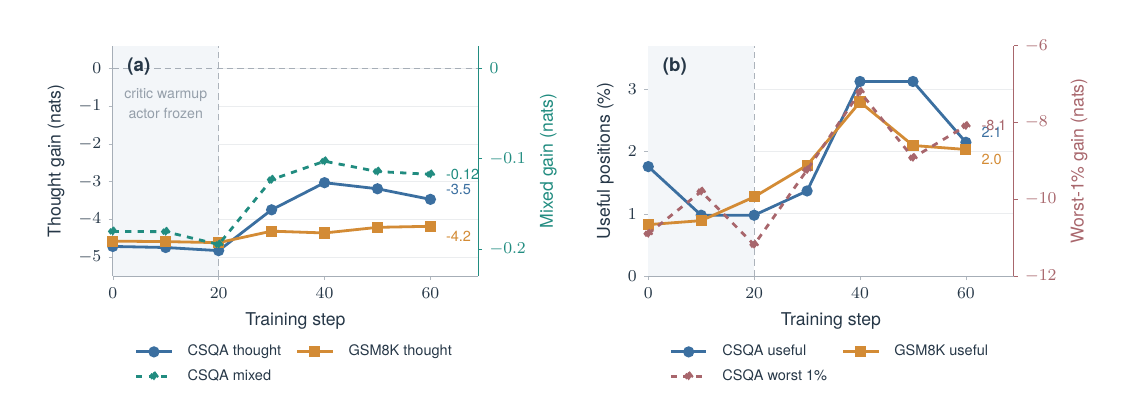}
\caption{Historical naive-PPO ablation-style diagnostic at seven checkpoints.  (a) Thought gain (left axis) and CSQA mixed gain (right axis).  (b) Useful-position fraction (left axis) and CSQA worst-1\% thought gain (right axis).  The no-thought prediction has zero gain by construction.  Shading marks the 20-step critic warmup with the actor frozen; endpoint labels are rounded values reconstructed from the original vector figure.}
\label{fig:historical-diagnostics-full}
\end{figure}

The redraw reconstructs the seven coordinates of each curve from the original vector PDF and its axis ticks.  These are figure-derived values, not recovered raw run logs.  Raw thought insertion remains below the zero reference on average, whereas prediction mixing substantially narrows the deficit and becomes slightly positive at the final checkpoint.  This historical contrast motivates qualification and mixing but does not evaluate \tfour{}.  In the current experiments, \tfour{} exceeds Base on all seven reported tasks at both 2B and 9B.  Extra stored decimal places preserve geometry and do not imply measurement precision.  No multi-seed statistics or causal attribution to critic staleness are inferred from this diagnostic.  Source assets, extraction metadata, and the plotting scripts are retained with the manuscript materials.

\newpage
\section{Why Predictive Accuracy Does Not Remove Drift}
\label{app:drift-foundations}
Predictive qualification and conditional calibration address different parts of the update.  We first separate policy reuse from value error, relate critic accuracy to advantage perturbations, and then show why accurate global prediction can coexist with harmful local drift.  This motivates the conditional-moment objective proved in Appendix~\ref{app:conditional-calibration}.

\subsection{What changes when old rollouts train a new policy?}
\label{app:policy-mismatch}
This subsection derives the distinction used in Section~\ref{sec:policy-mismatch}: policy lag changes the trajectories relevant to the objective, probability mismatch changes the correctness of recorded ratios, and value error changes the advantage estimate.  The derivations concern a finite undiscounted horizon, a fixed distribution of text items, fixed transitions, and a fixed scorer and reward normalizer.  Variable-length thoughts may be padded after termination with zero rewards and a deterministic dummy action.  All policy gradients below assume differentiable parameter-independent support and sufficient integrability to exchange derivatives and expectations.  The time index is included in the state.

For completeness, the return and ideal policy-gradient definitions used in the main text are
\begin{equation}
G_t=\sum_{\ell=0}^{T-t}\gamma^\ell r_{t+\ell}.
\label{eq:critic-return}
\end{equation}
For the undiscounted case $\gamma=1$, let $\tau$ denote a complete thought trajectory, including its text item, and write $u_t=\nabla_\theta\log\pi_\theta(a_t\mid s_t)$.  With scoring and transitions fixed, the expected-return objective and its policy gradient are
\begin{equation}
J(\theta)=\E_{\tau\sim\pi_\theta}[G_1],\qquad
\nabla_\theta J(\theta)=\E_{\tau\sim\pi_\theta}\!\left[\sum_t G_t u_t\right].
\label{eq:true-policy-gradient}
\end{equation}
\subsubsection{A fixed reward, but a changing distribution of thoughts}
Let $\tau$ contain the text item and the complete sampled thought.  Its probability factors into the text distribution, transitions, and action probabilities.  Only the last depends on $\theta$.  Write $G_1=\sum_{k=1}^T r_k$, $J(\theta)=\E_{\pi_\theta}G_1$, and $u_t=\nabla_\theta\log\pi_\theta(a_t\mid s_t)$.
\begin{lemma}[Policy gradient from trajectory likelihood]
\label{lem:trajectory-policy-gradient}
Under the fixed-reward conditions above,
\begin{equation}
\nabla_\theta J=\E_{\pi_\theta}\!\left[G_1\sum_tu_t\right]
=\E_{\pi_\theta}\!\left[\sum_tG_tu_t\right].
\label{eq:trajectory-gradient-proof}
\end{equation}
\end{lemma}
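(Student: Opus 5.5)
The plan is the standard likelihood-ratio (REINFORCE) argument, followed by a causality step that replaces $G_1$ with $G_t$. Write $p_\theta(\tau)=\nu(x)\prod_t\pi_\theta(a_t\mid s_t)P(s_{t+1}\mid s_t,a_t)$, where $\nu$ is the text distribution and $P$ collects the fixed transitions. The scorer determines the rewards, so $G_1(\tau)$ does not depend on $\theta$. We have $J(\theta)=\int G_1(\tau)\,p_\theta(\tau)\,d\tau$. Using the standing assumptions (parameter-independent support, and enough integrability to exchange derivative and integral), we differentiate under the integral. We then use $\nabla_\theta p_\theta=p_\theta\nabla_\theta\log p_\theta$. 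Only the action factors depend on $\theta$, so $\nabla_\theta\log p_\theta(\tau)=\sum_t u_t$. This gives the first equality $\nabla_\theta J=\E_{\pi_\theta}[G_1\sum_t u_t]$.

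For the second equality, split $G_1=\sum_{k<t}r_k+G_t$ inside each term $G_1u_t$; this uses $\gamma=1$. It then suffices to show $\E_{\pi_\theta}[r_k u_t]=0$ for $k<t$. The key fact is that $r_k$ is determined before $a_t$ is drawn. Here $r_k=\Phi_k-\Phi_{k-1}$ depends only on the text item and on $z_{1:k}$. Since $k\le t-1$, this is measurable with respect to $(s_t,c)$, where $c$ is the full pre-action context that includes the scoring text. Condition on the history up to $s_t$, including the scoring text, before $a_t$ is sampled. Then $\E[r_ku_t]=\E\big[r_k\,\E_{a_t\sim\pi_\theta(\cdot\mid s_t)}[u_t]\big]$. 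By the score identity, $\E_{a\sim\pi_\theta}[\nabla_\theta\log\pi_\theta(a\mid s)]=\nabla_\theta\sum_a\pi_\theta(a\mid s)=\nabla_\theta 1=0$. Summing over $t$ gives $\E[\sum_t G_t u_t]$.

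Padded steps after termination need a brief check. There the action is a deterministic dummy with $u_t=0$ and the rewards are zero, so these steps contribute nothing to either side.

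The main obstacle is measurability. The reward $r_k$ is computed by the scorer from the hidden continuation text, which the actor does not see. The conditioning must therefore be on the full pre-action context $c$, not on the visible prefix $s_t$ alone. This is legitimate because the actor's law of $a_t$ given $c$ equals $\pi_\theta(\cdot\mid s_t)$: actor inputs exclude the scoring text. I would state this conditional-independence point explicitly, along with the integrability needed for the tower property (finite second moments of $r_k$ and $u_t$, which follow from the assumed score moments and the bounded clipped potentials).
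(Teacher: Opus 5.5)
Your proposal is correct and follows the paper's proof. You use the likelihood-ratio identity with $\nabla_\theta\log p_\theta(\tau)=\sum_t u_t$ for the first equality, then drop rewards earlier than $a_t$ via the conditional score identity $\E_{a_t\sim\pi_\theta}[u_t\mid s_t]=0$ for the second. Your explicit conditioning on the full pre-action context (including the hidden scoring text) and your check of padded dummy steps make precise what the paper leaves implicit in ``conditional on the history before $a_t$.''
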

\begin{proof}
Write $p_\theta(\tau)$ for the trajectory probability and differentiate the expectation:
\begin{align}
\nabla_\theta\sum_\tau p_\theta(\tau)G_1(\tau)
&=\sum_\tau p_\theta(\tau)G_1(\tau)\nabla_\theta\log p_\theta(\tau),\\
\nabla_\theta\log p_\theta(\tau)&=\sum_t\nabla_\theta\log\pi_\theta(a_t\mid s_t).
\end{align}
This proves the first expression.  For the second, separate rewards before action $t$ from rewards at and after it.  Conditional on the history before $a_t$, earlier rewards are fixed, while
\begin{equation}
\E_{a_t\sim\pi_\theta}[u_t\mid s_t]
=\sum_a\nabla_\theta\pi_\theta(a\mid s_t)
=\nabla_\theta1=0.
\end{equation}
Earlier rewards therefore contribute zero.  The term for action $t$ retains only $G_t=\sum_{k=t}^T r_k$.
\end{proof}
This is why it is more precise to say that a return sample was generated under a policy than to label the score of a fixed trajectory as a different reward for each policy.  The same thought receives the same score within the window.  What changes is its likelihood, and the expected quality of continuations from each partial thought.

Now let $q$ be the actual behavior law, fixed while differentiating the actor.  Naively replaying old trajectories gives
\begin{equation}
g_{\mathrm{naive}}(\theta)=\E_{\tau\sim q}\!\left[\sum_tG_tu_t\right],
\label{eq:naive-replay-gradient}
\end{equation}
which differs from Eq.~\eqref{eq:trajectory-gradient-proof} through the trajectory distribution.  If $q$ covers the actor's trajectory support, the full likelihood ratio is
\begin{equation}
W_\theta(\tau)=\prod_{t=1}^T\frac{\pi_\theta(a_t\mid s_t)}{q(a_t\mid s_t)}.
\label{eq:full-trajectory-ratio}
\end{equation}
The initial text distribution and transitions cancel.  Provided the weighted expectation is integrable, multiplying each trajectory by this ratio restores the exact gradient:
\begin{equation}
\E_q\!\left[W_\theta(\tau)\sum_tG_tu_t\right]
=\sum_\tau p_q(\tau)\frac{p_\theta(\tau)}{p_q(\tau)}\sum_tG_tu_t
=\nabla_\theta J(\theta).
\label{eq:trajectory-importance-gradient}
\end{equation}
Long products can concentrate weight on a few trajectories even when individual token ratios are moderate.  This variance cost motivates local surrogates and bounded reuse, rather than assuming that token-level reweighting fully corrects a sequence distribution.

\subsubsection{Why an old-policy advantage is intentional in PPO}
For this comparison assume $q=\pi_{\mathrm{old}}$ exactly, and let $d_t^\pi$ be the distribution of prefixes before token $t$ under policy $\pi$.  Define $Q_t^\pi(s,a)=\E_\pi[G_t\mid s_t=s,a_t=a]$, $V_t^\pi(s)=\E_\pi[G_t\mid s_t=s]$, and $A_t^\pi=Q_t^\pi-V_t^\pi$, retaining the fixed-text conditioning convention.  The ideal $L_{\mathrm{old}}$ in Eq.~\eqref{eq:old-policy-surrogate} uses exact, unnormalized $A_t^{\mathrm{old}}$ and every action in each old trajectory.
PPO deliberately makes a local approximation: before clipping and normalization, its ideal old-policy surrogate is
\begin{equation}
L_{\mathrm{old}}(\theta)=\E_{\tau\sim\pi_{\mathrm{old}}}\!\left[\sum_t\rho_t A_t^{\mathrm{old}}\right],
\qquad \left.\nabla_\theta L_{\mathrm{old}}(\theta)\right|_{\theta=\theta_{\mathrm{old}}}
=\nabla_\theta J(\theta_{\mathrm{old}}).
\label{eq:old-policy-surrogate}
\end{equation}
\begin{proposition}[The old-policy surrogate matches the gradient at its origin]
\label{prop:surrogate-tangency}
For the fixed-reward, full-trajectory setting above,
\begin{equation}
\left.\nabla_\theta L_{\mathrm{old}}(\theta)\right|_{\theta=\theta_{\mathrm{old}}}
=\nabla_\theta J(\theta_{\mathrm{old}}).
\end{equation}
\end{proposition}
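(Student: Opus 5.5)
We differentiate $L_{\mathrm{old}}$ directly at $\theta=\theta_{\mathrm{old}}$ and then use the advantage decomposition to recover $\nabla_\theta J(\theta_{\mathrm{old}})$ from Lemma~\ref{lem:trajectory-policy-gradient}.

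\textbf{Step 1: differentiate the surrogate.} The sampling law $\pi_{\mathrm{old}}$ and the advantages $A_t^{\mathrm{old}}$ do not depend on $\theta$, so differentiation passes inside the expectation. This is justified by the integrability assumptions stated at the start of this subsection. Only $\rho_t$ varies, and
\[
\nabla_\theta\rho_t=\rho_t\nabla_\theta\log\pi_\theta(a_t\mid s_t).
\]
At $\theta_{\mathrm{old}}$ we have $\rho_t=1$, which gives
\[
\nabla_\theta L_{\mathrm{old}}\big|_{\theta_{\mathrm{old}}}=\E_{\pi_{\mathrm{old}}}\!\left[\sum_t A_t^{\mathrm{old}}(s_t,a_t)\,u_t\right],
\]
with $u_t$ evaluated at $\theta_{\mathrm{old}}$.

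\textbf{Step 2: write the true gradient in $Q$ form.} By Lemma~\ref{lem:trajectory-policy-gradient} at $\theta_{\mathrm{old}}$, $\nabla_\theta J=\E_{\pi_{\mathrm{old}}}[\sum_t G_tu_t]$. We condition each term on $(s_t,a_t)$ using the tower property. The score $u_t$ is a function of $(s_t,a_t)$, and the continuation after $a_t$ is generated by $\pi_{\mathrm{old}}$ together with the fixed transitions and scorer. Hence $\E[G_t\mid s_t,a_t]=Q_t^{\mathrm{old}}(s_t,a_t)$, and the true gradient becomes $\E[\sum_t Q_t^{\mathrm{old}}u_t]$. The time index is part of the state and the fixed-text conditioning is absorbed into $s_t$, so this conditioning is legitimate.

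\textbf{Step 3: remove the baseline.} Write $Q_t^{\mathrm{old}}=A_t^{\mathrm{old}}+V_t^{\mathrm{old}}(s_t)$. The baseline term vanishes by the score identity already used in the proof of Lemma~\ref{lem:trajectory-policy-gradient}:
\[
\E_{a_t\sim\pi_{\mathrm{old}}}[V_t^{\mathrm{old}}(s_t)\,u_t\mid s_t]=V_t^{\mathrm{old}}(s_t)\,\nabla_\theta\sum_a\pi_\theta(a\mid s_t)=0.
\]
What remains is $\E[\sum_t A_t^{\mathrm{old}}u_t]$, which matches Step 1 and proves the proposition.

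\textbf{Main obstacle.} The delicate point is Step 2. We must be sure that $Q_t^{\mathrm{old}}$ is the conditional expectation under the same law that generates the trajectories. This requires $q=\pi_{\mathrm{old}}$ exactly and parameter-independent support. We also need enough integrability to exchange the sum over $t$, the expectation, and the derivative. Padded terminal steps are harmless: they use a deterministic dummy action with zero score, so they contribute nothing to either side.
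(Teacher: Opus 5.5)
Your proposal is correct and follows essentially the same route as the paper: at $\theta_{\mathrm{old}}$ you use $\rho_t=1$ and $\nabla_\theta\rho_t=u_t$, remove the state value with the conditional score identity, and pass from $G_tu_t$ to $Q_t^{\mathrm{old}}u_t$ by conditioning on $(s_t,a_t)$ before invoking Lemma~\ref{lem:trajectory-policy-gradient}. The only difference is the order of presentation, which does not affect the argument: you start from the lemma and work toward the advantage form, while the paper starts from the surrogate and works back to the lemma.
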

\begin{proof}
At the old parameter, $\rho_t=1$ and $\nabla_\theta\rho_t=u_t$.  The state value contributes zero by the conditional score identity, hence
\begin{equation}
\left.\nabla_\theta L_{\mathrm{old}}\right|_{\theta_{\mathrm{old}}}
=\E_{\mathrm{old}}\sum_tA_t^{\mathrm{old}}u_t
=\E_{\mathrm{old}}\sum_tQ_t^{\mathrm{old}}u_t.
\end{equation}
The factor $u_t$ is fixed after conditioning on $(s_t,a_t)$, so averaging $G_tu_t$ over the remaining continuation gives $Q_t^{\mathrm{old}}u_t$.  Reversing this conditional expectation and applying Lemma~\ref{lem:trajectory-policy-gradient} proves the result.
\end{proof}
Away from this starting point, exact current-action reweighting leaves the following two expressions:
\begin{align}
\nabla_\theta L_{\mathrm{old}}(\theta)
&=\sum_t\E_{s\sim d_t^{\mathrm{old}},\,a\sim\pi_\theta}[A_t^{\mathrm{old}}(s,a)u_t],\\
\nabla_\theta J(\theta)
&=\sum_t\E_{s\sim d_t^{\pi_\theta},\,a\sim\pi_\theta}[A_t^{\pi_\theta}(s,a)u_t].
\label{eq:surrogate-distribution-gap}
\end{align}
The first expression keeps old prefixes and old continuation values.  The second uses the prefixes and continuations induced by the current actor.  A token ratio changes neither of those other factors.  Thus old-policy advantages are the reference for a local policy improvement, not evidence that the critic must already predict the updated policy perfectly.  This is the standard surrogate distinction underlying TRPO and PPO \citep{schulman2015trpo,schulman2017ppo}.

The identity at the origin concerns exact old-policy advantages (or unbiased Monte Carlo estimates of them) and full-trajectory summation.  Normalized GAE, action-dependent mixture weights, and clipping away from the origin change that estimator.  Section~\ref{sec:conditional-calibration} controls a component of the actual mixed PPO gradient, not an exact $\nabla J$.  Relating them requires matching the occupation measure and time weights.  Appendix~\ref{app:restart-contract} fixes the aggregation for the auxiliary objective.

\subsubsection{Value-learning error and a moving value target}
Fix a prefix and scoring item.  Adding and subtracting $V^q$ gives the decomposition in Section~\ref{sec:critic-update-mismatch}.  The first term compares the fitted critic with the current rollout return target.  Its fitting data may come from an earlier distribution.  The second compares two different continuation policies, even if the first prediction were exact.  For example, suppose a remaining binary decision returns one on success and zero otherwise.  If the old policy succeeds with probability $0.2$ and the new policy with probability $0.8$, a perfect old critic predicts $0.2$.  Its difference from the new value is $-0.6$, entirely due to policy change, not failed old-policy fitting.  More regression on old returns keeps the optimum at $0.2$.

To expose historical target lag, let $q_\beta$ denote one earlier continuation law under the same fixed reward.  Then
\begin{equation}
V_\psi-V^q=(V_\psi-V^{q_\beta})+(V^{q_\beta}-V^q).
\label{eq:historical-critic-target}
\end{equation}
The first term is fitting error relative to that earlier target, and the second is target lag.  A critic trained on mixed history need not equal the value of any single policy.  $q_\beta$ illustrates one source rather than identifying an exact critic age.  Scorer or reward-scale changes introduce further target changes and are excluded from this fixed-reward comparison.

There is an additional information distinction in this paper.  Let $W$ be the observed future text used for scoring, and let $V^q(s,W)=\E_q[G_t\mid s,W]$.  Under ordinary squared regression, an unrestricted predictor that sees only $s$ is minimized at
\begin{equation}
\bar V^q(s)=\E_{W\mid s}[V^q(s,W)].
\label{eq:visible-regression-target}
\end{equation}
Indeed, expanding around $\bar V^q(s)$ makes the conditional regression risk a variance term plus $(V_i(s)-\bar V^q(s))^2$.  The item-level error further separates as
\begin{equation}
V_i(s)-V^q(s,W)
=[V_i(s)-\bar V^q(s)]+[\bar V^q(s)-V^q(s,W)].
\end{equation}
Only the first part is visible-state fitting error.  The second reflects information withheld from the critic.  Appendix~\ref{app:information-limit} quantifies its variance floor.  Predictive qualification consequently checks useful out-of-sample return structure, not exact reconstruction of every item's oracle value.

\subsubsection{Baseline cancellation under exact action ratios}
Freeze a state-only baseline $b(s)$ before drawing the evaluated action and continuation.  At any fixed prefix, assume exact action ratios $\rho=\pi_\theta/q$ and full actor-support coverage.  Then, regardless of how different the actor is from $q$,
\begin{equation}
\E_q[\rho b(s)u_t\mid s]
=b(s)\sum_a\pi_\theta(a\mid s)\nabla_\theta\log\pi_\theta(a\mid s)=0.
\label{eq:offpolicy-baseline-cancellation}
\end{equation}
The baseline need not equal either $V^q$ or $V^{\pi_\theta}$.  This cancellation concerns the baseline contribution only, not the old-state and old-continuation errors in Eq.~\eqref{eq:surrogate-distribution-gap}.  Fitting a baseline on the evaluated actions and then detaching it is not equivalent to fixing it independently beforehand.

Two operations can break the cancellation.  With an inexact recorded denominator, the remaining action weight is $q/\pi_{\mathrm{old}}$.  With clipping, a sample-dependent coefficient $I$ selects which directions remain active.  Even without clipping the first effect gives
\begin{equation}
\E_q[\rho u_t\mid s]
=\sum_a\pi_\theta(a\mid s)\frac{q(a\mid s)}{\pi_{\mathrm{old}}(a\mid s)}u_t,
\label{eq:denominator-mismatch-score}
\end{equation}
which need not vanish.  GAE also introduces future value errors rather than merely subtracting the current state baseline, as derived in Proposition~\ref{prop:gae-propagation}.  Section~\ref{sec:twin-calibration} addresses the resulting conditional-mean component with the actual mixed PPO mask, not every source of surrogate error.

These distinctions apply to both small and large RL problems.  Long language-model continuations, sparse terminal signals, expensive rollout refresh, and seldom-repeated exact prefixes make policy lag and value generalization practically relevant.  They do not imply that PPO is necessarily less reliable than GRPO.  GRPO removes learned-value regression but still uses sampled advantages and must handle rollout reuse and recorded probabilities \citep{shao2024deepseekmath}.  Finally, our scorer is frozen only within a window.  If one instead differentiated a changing reward $G_1(\theta)$ as part of the objective, the exact derivative would include $\E_{\pi_\theta}[\nabla_\theta G_1(\theta)]$ in addition to the score-function term.  Our actor update treats scores as fixed.  Updating the scorer starts a new objective window rather than differentiating through that reward term.

\subsection{Value error and predictive qualification}
\label{app:value-gradient-proof}
\label{app:predictive-qualification}
\label{sec:predictive-analysis}
This subsection explains why both critics are qualified separately.  All expectations here use a fixed probability distribution of text items and valid-action prefixes, followed by continuations from $q$.  Conditioning on $s$ also fixes the scoring text, as in Section~\ref{sec:problem-setup}.  The oracle $V^q$ is the corresponding conditional return, whether or not a visible-prefix critic can represent it.  Fix $\theta$, the recorded denominator, scorer, critics, and normalizers.  Assume differentiability, integrability of the gradients below, and finite $C_\rho^2=\E_q[\rho^2\|\nabla_\theta\log\pi_\theta(a\mid s)\|_2^2]$.  Write $e_i=V_i-V^q$ and $u=\nabla_\theta\log\pi_\theta(a\mid s)$.  Population predictive scores use this probability measure.  Moving to an unnormalized occupation measure requires rescaling the gradients, risks, and $C_\rho$ together.

\subsubsection{Clipping does not amplify advantage perturbations}
First consider shared-return Monte Carlo advantages $A_i=G-V_i(s)$.  Compare a detached convex mixture, allowing $w=w(s,a)\in[0,1]$, with the oracle advantage $G-V^q(s)$ under the same normalizer.
\begin{proposition}[Value error controls update error]
\label{prop:value-gradient-bound}
\begin{equation}
\|g_{\mathrm{twin}}-g_{\mathrm{oracle}}\|_2
\leq\frac{C_\rho}{\sigma_{\mathrm{pilot}}}\sqrt{\E_s[e_1(s)^2+e_2(s)^2]}.
\label{eq:value-gradient-bound}
\end{equation}
\end{proposition}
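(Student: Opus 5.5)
Fix $\theta$, the recorded denominator, the scorer, both critics and the pilot normalizer. The plan writes the twin-mixed and oracle normalized advantages explicitly and controls their difference pointwise. Under a common normalizer,
\[
\widetilde A_w=\frac{G-wV_1(s)-(1-w)V_2(s)-\mu_{\mathrm{pilot}}}{\sigma_{\mathrm{pilot}}},
\qquad
\widetilde A_{\mathrm{oracle}}=\frac{G-V^q(s)-\mu_{\mathrm{pilot}}}{\sigma_{\mathrm{pilot}}}.
\]
The return $G$ and the shift $\mu_{\mathrm{pilot}}$ cancel, leaving
\[
\widetilde A_w-\widetilde A_{\mathrm{oracle}}=-\frac{w e_1(s)+(1-w)e_2(s)}{\sigma_{\mathrm{pilot}}}.
\]
Since $w(s,a)\in[0,1]$, convexity of the square gives $(we_1+(1-w)e_2)^2\le we_1^2+(1-w)e_2^2\le e_1^2+e_2^2$ pointwise, even though $w$ depends on the action.

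Next I would write each gradient through Eq.~\eqref{eq:clipped-token-gradient} as $g=\E_q[\widetilde A\,\rho I u]$, with $u=\nabla_\theta\log\pi_\theta(a\mid s)$. The main obstacle is that the mask $I$ depends on the sign of $\widetilde A$, so the twin and oracle updates use different masks and $g_{\mathrm{twin}}-g_{\mathrm{oracle}}$ is not linear in the advantage difference. This is the sense of ``clipping does not amplify.'' For fixed $\rho$ and $\rho_\pm$, the scalar map $A\mapsto A\,\rho I(A)$ is the derivative in $\rho$ of $\min\{\rho A,\clip(\rho,\rho_-,\rho_+)A\}$. It is piecewise linear and equals either $\rho A$ or $0$ on each sign half-line. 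If $\rho$ lies inside $[\rho_-,\rho_+]$, the map is $\rho A$. If $\rho\ge\rho_+$, it is $\rho A\one\{A\le0\}$. If $\rho\le\rho_-$, it is $\rho A\one\{A\ge0\}$. Each of these functions of $A$ is $\rho$-Lipschitz, so I will establish
\[
|A\rho I(A)-A'\rho I(A')|\le\rho|A-A'|.
\]
Away from the clipping thresholds this needs only a short case check. At the thresholds I will adopt the convention of Eq.~\eqref{eq:clipped-token-gradient}, where $I=1$ when $\widetilde A=0$, and verify that the zero-advantage case is consistent with the same bound.

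Given this Lipschitz bound, the triangle inequality for Bochner integrals and Cauchy--Schwarz give
\[
\|g_{\mathrm{twin}}-g_{\mathrm{oracle}}\|_2
\le\E_q\!\left[\rho\,|\widetilde A_w-\widetilde A_{\mathrm{oracle}}|\,\|u\|_2\right]
\le C_\rho\,\big(\E_q[(\widetilde A_w-\widetilde A_{\mathrm{oracle}})^2]\big)^{1/2}.
\]
Inserting the pointwise bound from the first step yields Eq.~\eqref{eq:value-gradient-bound}. Because $e_i$ depends only on $s$, the expectation of $e_1^2+e_2^2$ reduces to $\E_s$. One caveat remains: the prediction risk split in Eq.~\eqref{eq:prediction-risk-split} is not needed here, because $G$ cancels exactly.
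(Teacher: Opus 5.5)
Your proposal is correct and follows the paper's proof essentially step for step. The return and pilot shift cancel to give the gap $-(we_1+(1-w)e_2)/\sigma_{\mathrm{pilot}}$, which convexity bounds pointwise by $e_1^2+e_2^2$. Your three-branch Lipschitz check is exactly the paper's nonexpansiveness lemma for the active-advantage map $H_\rho$, and it is followed by the same Cauchy--Schwarz step that produces $C_\rho$; the paper only applies this transfer before the convexity bound rather than after.

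One harmless slip: the $\rho$-derivative of the clipped surrogate is $A\,I(A)$, and the extra factor $\rho$ comes from $\nabla_\theta\rho=\rho u$. Also, the $\rho=\rho_\pm$ cases are already settled by the strict-inequality indicators, each still yielding one of the three branch maps, rather than by the $\widetilde A=0$ convention.
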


For a scalar normalized advantage $x$, the per-sample clipped-PPO gradient is $\rho u H_\rho(x)$, where, away from clipping boundaries,
\begin{equation}
H_\rho(x)=
\begin{cases}
\max(x,0),&\rho<\rho_-,\\
x,&\rho_-<\rho<\rho_+,\\
\min(x,0),&\rho>\rho_+.
\end{cases}
\qquad |H_\rho(x)-H_\rho(y)|\leq|x-y|.
\label{eq:ppo-advantage-lipschitz}
\end{equation}
At a clipping boundary use one fixed convex combination of the two adjacent branch maps.  The same convention is used for the two advantages being compared.
\begin{lemma}[The active advantage is nonexpansive]
\label{lem:ppo-nonexpansive}
For any fixed ratio $\rho$ and real numbers $x,y$,
\begin{equation}
|H_\rho(x)-H_\rho(y)|\leq|x-y|.
\end{equation}
\end{lemma}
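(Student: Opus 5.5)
The plan is a direct case analysis on the ratio $\rho$, using the explicit piecewise form of $H_\rho$ in Eq.~\eqref{eq:ppo-advantage-lipschitz}. The point is that each branch map is one of three functions: the identity $x\mapsto x$, the positive part $x\mapsto\max(x,0)$, or the negative part $x\mapsto\min(x,0)$. Each is $1$-Lipschitz on $\mathbb R$, so in the interior cases the claim reduces to that elementary fact.

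The interior cases go as follows.
\begin{itemize}
\item For $\rho_-<\rho<\rho_+$ the map is the identity, and equality holds.
\item For $\rho<\rho_-$, the positive part is $1$-Lipschitz. I would write $\max(x,0)=(x+|x|)/2$ and apply the reverse triangle inequality $\bigl||x|-|y|\bigr|\le|x-y|$:
\[
|\max(x,0)-\max(y,0)|\le\tfrac12|x-y|+\tfrac12\bigl||x|-|y|\bigr|\le|x-y|.
\]
\item For $\rho>\rho_+$, the identity $\min(x,0)=-\max(-x,0)$ reduces this case to the previous one.
\end{itemize}
Since $\rho$ is fixed, the same branch applies to both $x$ and $y$. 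This is why no mixed-branch comparison arises.

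The only step requiring care is the boundary convention, $\rho=\rho_-$ or $\rho=\rho_+$. There the statement stipulates a fixed convex combination $H_\rho=\alpha F+(1-\alpha)G$ of the two adjacent branch maps, with the same $\alpha\in[0,1]$ used for both $x$ and $y$. By the triangle inequality,
\[
|H_\rho(x)-H_\rho(y)|\le\alpha|F(x)-F(y)|+(1-\alpha)|G(x)-G(y)|\le|x-y|,
\]
because the interior cases show that $F$ and $G$ are each nonexpansive. I expect this bookkeeping to be the main (mild) obstacle. One must check that the same $\alpha$ is applied to both arguments, which the stated convention guarantees, and that the degenerate case $\rho_-=\rho_+=1$, where all three maps meet at one point, is also covered by such a combination. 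I would also remark that the argument needs no assumption on the sign of $x$ or $y$.
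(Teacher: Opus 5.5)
Your proposal is correct and follows essentially the same route as the paper. Both arguments show that each branch (identity, positive part, negative part) is $1$-Lipschitz and then apply the triangle inequality to the fixed convex combination at a clipping boundary. The only cosmetic difference is that you prove the positive-part case via $\max(x,0)=(x+|x|)/2$ and the reverse triangle inequality, whereas the paper checks the sign cases directly.
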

\begin{proof}
For $H(x)=\max(x,0)$, two positive inputs retain their original distance, two negative inputs both become zero, and inputs on opposite sides of zero become closer because the negative part is removed.  Thus their distance cannot increase.  The proof for $\min(x,0)$ is identical with signs reversed.  The identity map preserves distance.  A convex combination of such maps also obeys the bound by the triangle inequality.  This covers the boundary convention.
\end{proof}
The lemma controls a switch of clipping branch without pretending that the two masks stay equal.

\begin{proof}[Proof of Proposition~\ref{prop:value-gradient-bound}]
For raw advantages $A,A_o$ with the same normalizer, subtract their gradient contributions and move the norm inside the expectation:
\begin{align}
\|g(A)-g(A_o)\|_2
&\leq\E_q[\rho\|u\|_2|H_\rho(\widetilde A)-H_\rho(\widetilde A_o)|]\\
&\leq\frac1{\sigma_{\mathrm{pilot}}}\E_q[\rho\|u\|_2|A-A_o|]\\
&\leq\frac{C_\rho}{\sigma_{\mathrm{pilot}}}\sqrt{\E_q[(A-A_o)^2]}.
\label{eq:advantage-gradient-transfer}
\end{align}
The second line uses Lemma~\ref{lem:ppo-nonexpansive}.  The last uses Cauchy--Schwarz on the two nonnegative factors $\rho\|u\|_2$ and $|A-A_o|$.

For Monte Carlo advantages, subtracting the two baselines cancels the shared return: $A-A_o=-we_1-(1-w)e_2$.  The square of this mixture satisfies
\begin{equation}
w e_1^2+(1-w)e_2^2-[we_1+(1-w)e_2]^2
=w(1-w)(e_1-e_2)^2\geq0.
\end{equation}
Therefore $(A-A_o)^2\leq we_1^2+(1-w)e_2^2\leq e_1^2+e_2^2$.  Insert this into Eq.~\eqref{eq:advantage-gradient-transfer} to obtain the proposition.
\end{proof}
The rollout distribution, clipping rule, and normalizer stay the same in this comparison.  The pointwise convex-error inequality remains valid for a detached action-dependent weight.  It does not invoke state-baseline cancellation.  Expectations use the declared rollout occupation measure.  Prediction risk under another distribution bounds actor-distribution error only with coverage control.

\subsubsection{Predictive scores and independent validation}
A collapsed predictor illustrates why the admission rule in Section~\ref{sec:prediction-gated-training} tests both critics separately.  Consider equally represented held-out states $s_\pm$ with $G(s_\pm)=\pm r_\star$, $r_\star>0$.  Let $V_1(s_\pm)=\pm r_\star$ and $V_2(s_\pm)=0$.  Then
\begin{equation}
\operatorname{MSE}(V_1)=0,\quad \operatorname{MSE}(V_2)=r_\star^2,
\quad (R_1^2,R_2^2)=(1,0).
\label{eq:critic-collapse-example}
\end{equation}
As $r_\star\to0$, the second critic's loss and twin disagreement vanish, although it has learned no return variation.  Even the average predictor attains $R^2=3/4$.  Testing the critics separately distinguishes this case from two qualified predictors, whereas a fixed warmup duration or small fitting loss does not.  Let $\mathcal E_i=\E_q[(G-V_i(s))^2]$, $S_G^2=\Var_q(G)>0$, and $R_{i,\mathrm{pop}}^2=1-\mathcal E_i/S_G^2$.  All three quantities use the same distribution, with fixed critics and finite return moments.
\begin{lemma}[Return error includes value error]
\label{lem:return-risk}
For each critic,
\begin{equation}
\mathcal E_i=\E_s[e_i(s)^2]+\E_s\Var_q(G\mid s).
\end{equation}
\end{lemma}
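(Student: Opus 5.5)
The plan is to prove the identity conditionally on the prefix $s$ (which, by the paper's convention, also fixes the scoring text) and then average over $s$. Add and subtract $V^q(s)$ inside the squared error:
\[
G-V_i(s)=(G-V^q(s))-e_i(s).
\]
Squaring and taking $\E_q[\cdot\mid s]$ gives three terms. The first is $\E_q[(G-V^q(s))^2\mid s]$, which equals $\Var_q(G\mid s)$ because $V^q(s)=\E_q[G\mid s]$. The second is $e_i(s)^2$, since $e_i$ is a function of $s$ alone (the critic is fixed). The third is the cross term $-2e_i(s)\,\E_q[G-V^q(s)\mid s]$.

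\textbf{Key step: the cross term vanishes.} Because $e_i(s)$ is $s$-measurable, it factors out of the conditional expectation. The remaining factor $\E_q[G-V^q(s)\mid s]$ is zero by the definition of $V^q$ as the conditional return. This is exactly the argument behind Eq.~\eqref{eq:prediction-risk-split}, so I would cite it directly. The result is the pointwise identity
\[
\E_q[(G-V_i(s))^2\mid s]=\Var_q(G\mid s)+e_i(s)^2.
\]

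\textbf{Averaging over prefixes.} The next step is to apply $\E_s$ and use the tower property, $\mathcal E_i=\E_s\,\E_q[(G-V_i(s))^2\mid s]$. Linearity of expectation then yields the claim. The only technical point, which I expect to be the main obstacle (though a mild one), is integrability. The assumed finite return moments give $G\in L^2$, and hence $V^q\in L^2$ by conditional Jensen. I would also need $V_i\in L^2$, which I would take as implicit in $\mathcal E_i<\infty$: if $\mathcal E_i<\infty$, then $V_i=G-(G-V_i)\in L^2$. With all terms square-integrable, the cross term is integrable and the splitting of expectations is legitimate.

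I would close by noting that the identity holds under the same probability measure used for $S_G^2$. It therefore transfers directly to population $R^2$ scores, giving $R^2_{i,\mathrm{pop}}=1-\bigl(\E_s[e_i^2]+\E_s\Var_q(G\mid s)\bigr)/S_G^2$.
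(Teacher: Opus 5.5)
Your proposal is correct and follows essentially the same argument as the paper: write $G-V_i=(G-V^q)-e_i$, note that the cross term $-2e_i\E_q[G-V^q\mid s]$ vanishes because $V^q=\E_q[G\mid s]$, and average the resulting conditional identity over states. Your integrability discussion is a harmless refinement that the paper leaves implicit under its finite-moment assumption.
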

\begin{proof}
Write $G-V_i=(G-V^q)-e_i$ and expand the square.  Conditional on $s$, the cross term is $-2e_i\E_q[G-V^q\mid s]=0$, because $V^q=\E_q[G\mid s]$.  The remaining two terms are the conditional variance of $G$ and $e_i^2$.  Average them over states.
\end{proof}
Summing the identity for the two critics and substituting the definition of their population scores gives
\begin{align}
\E_s[e_1^2+e_2^2]
&=S_G^2(2-R_{1,\mathrm{pop}}^2-R_{2,\mathrm{pop}}^2)-2\E_s\Var_q(G\mid s)\\
&\leq S_G^2(2-R_{1,\mathrm{pop}}^2-R_{2,\mathrm{pop}}^2).
\label{eq:population-prediction-transfer}
\end{align}
Return-prediction risk measures value error plus random variation among continuations.  It is therefore conservative for Proposition~\ref{prop:value-gradient-bound}.  A global score controls an average, leaving room for the rare-state example in Appendix~\ref{app:twin-disagreement}.

The empirical $R_i^2$ in Eq.~\eqref{eq:holdout-r2} is a diagnostic unless accompanied by generalization control.  One sufficient construction works directly with squared residuals.  For critics fixed before $N_h$ independent holdout draws, suppose $|G-V_i(s)|\leq K_V$ on the evaluated distribution.  With probability at least $1-\delta$, simultaneously for both critics,
\begin{equation}
\mathcal E_i\leq\widehat{\mathcal E}_i+
K_V^2\sqrt{\frac{\log(2/\delta)}{2N_h}},
\qquad
\widehat{\mathcal E}_i=\frac1{N_h}\sum_j(G_j-V_i(s_j))^2.
\label{eq:value-risk-confidence}
\end{equation}
On this event, $\E_s(e_1^2+e_2^2)$ is bounded by the sum of the two upper estimates.  A gradient certificate additionally requires a bound on $C_\rho$.  Correlated tokens from one trajectory are not independent holdout draws.  Clustered validation and repeated qualification require appropriate concentration bounds and a declared total failure probability.

\subsubsection{How value errors propagate through GAE}
Compare learned and oracle GAEs on the same trajectory of realized length $L$, with the same $\lambda=\lambda(L)$, $0\leq\gamma,\lambda\leq1$, and zero terminal bootstrap.  Let $\zeta_{i,t}=A_{i,t}-A_{o,t}$ and $w_k=\gamma(1-\lambda)(\gamma\lambda)^{k-1}$.
\begin{proposition}[GAE propagates value error along the suffix]
\label{prop:gae-propagation}
For each realized trajectory and pre-action position $t$,
\begin{align}
A_{i,t}-A_{o,t}&=-e_i(s_t)+\gamma(1-\lambda)\sum_{k=1}^{L-t}(\gamma\lambda)^{k-1}e_i(s_{t+k}),
\label{eq:gae-error-in-ppo}\\
\zeta_{i,t}&=-e_i(s_t)+\sum_{k=1}^{L-t}w_k e_i(s_{t+k}).
\label{eq:gae-value-error-propagation}
\end{align}
\end{proposition}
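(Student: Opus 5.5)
We prove the identity by unrolling the GAE recursion of Eq.~\eqref{eq:gae} for both the learned and the oracle values on the same realized trajectory. Because the two recursions share the same rewards, $\gamma$, $\lambda=\lambda(L)$, and zero terminal bootstrap, they differ only through the values. First, for any value function $V$, the standard expansion gives $A_t=\sum_{j=0}^{L-t}(\gamma\lambda)^j\delta_{t+j}$. Subtracting the learned and oracle versions, the rewards cancel from every TD residual, and
\[
\delta_{i,t+j}-\delta_{o,t+j}=\gamma e_i(s_{t+j+1})-e_i(s_{t+j}),
\]
with $e_i(s_{L+1})=0$ because both bootstraps vanish after termination.

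Next we reorganize the resulting double sum by state. Substituting gives
\[
\zeta_{i,t}=\sum_{j=0}^{L-t}(\gamma\lambda)^j\bigl[\gamma e_i(s_{t+j+1})-e_i(s_{t+j})\bigr].
\]
We collect the coefficient of $e_i(s_{t+k})$.
\begin{itemize}
\item For $k=0$, only the $-e_i(s_t)$ term at $j=0$ contributes, giving coefficient $-1$.
\item For $1\le k\le L-t$, the coefficient is $\gamma(\gamma\lambda)^{k-1}$ from $j=k-1$, minus $(\gamma\lambda)^k$ from $j=k$. This equals $\gamma(\gamma\lambda)^{k-1}(1-\lambda)=w_k$.
\item The boundary term $k=L-t+1$ involves $e_i(s_{L+1})=0$, so it drops.
\end{itemize}
This yields Eq.~\eqref{eq:gae-error-in-ppo}, and Eq.~\eqref{eq:gae-value-error-propagation} is the same identity rewritten with $w_k$.

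An alternative route is induction backward from $t=L$, using $\zeta_{i,t}=\gamma e_i(s_{t+1})-e_i(s_t)+\gamma\lambda\zeta_{i,t+1}$. This could serve as a cross-check.

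The only delicate step is the boundary bookkeeping. We must check that at the final position $t=L$ the suffix sum is empty, so $\zeta_{i,L}=-e_i(s_L)$, which holds because $\delta_L=r_L-V(s_L)$. We must also confirm that the length-adaptive $\lambda(L)$ is a per-trajectory constant shared by both estimators, so it can be factored out as above. Beyond that, the telescoping is routine.
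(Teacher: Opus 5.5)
Your proof is correct and follows essentially the same route as the paper's: unroll the GAE sum, cancel the shared rewards so that each TD difference becomes $\gamma e_i(s_{t+k+1})-e_i(s_{t+k})$, and collect the coefficient of each $e_i(s_{t+k})$, with the final error removed by the zero terminal bootstrap. Your remarks that $\lambda(L)$ is a per-trajectory constant shared by both estimators and that the suffix sum is empty at $t=L$ make explicit the bookkeeping the paper's argument uses implicitly.
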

\begin{proof}
Subtract the oracle TD error from the learned TD error.  The reward cancels, leaving $\gamma e_i(s_{t+1})-e_i(s_t)$.  Summing these differences with the same GAE weights gives
\begin{equation}
\zeta_{i,t}=\sum_{k=0}^{L-t}(\gamma\lambda)^k
[\gamma e_i(s_{t+k+1})-e_i(s_{t+k})].
\end{equation}
The coefficient of the current error is $-1$.  For an intermediate error $e_i(s_{t+k})$, $k\geq1$, its two occurrences have net coefficient $\gamma(\gamma\lambda)^{k-1}-(\gamma\lambda)^k=w_k$.  The final error is zero by terminal bootstrapping.  Collecting coefficients proves the identity.
\end{proof}
The current error enters directly.  Later errors enter with nonnegative, decaying weights.  Let $W_t=\sum_kw_k$.  For $\gamma\lambda<1$, its geometric sum is at most $\gamma(1-\lambda)/(1-\gamma\lambda)\leq1$.  If $\gamma\lambda=1$, every $w_k$ is zero.  Cauchy--Schwarz applied to vectors $(1,\sqrt{w_1},\ldots)$ and $(-e_i(s_t),\sqrt{w_1}e_i(s_{t+1}),\ldots)$ gives
\begin{equation}
\zeta_{i,t}^2\leq(1+W_t)\left[e_i(s_t)^2+\sum_{k=1}^{L-t}w_k e_i(s_{t+k})^2\right].
\label{eq:gae-error-moment}
\end{equation}
For the convex mixture, the error is $w\zeta_{1,t}+(1-w)\zeta_{2,t}$, with squared magnitude at most $\zeta_{1,t}^2+\zeta_{2,t}^2$.  Applying Eq.~\eqref{eq:advantage-gradient-transfer} therefore yields
\begin{equation}
\|g_{\mathrm{twin}}^{\mathrm{GAE}}-g_{\mathrm{oracle}}^{\mathrm{GAE}}\|_2
\leq\frac{C_\rho}{\sigma_{\mathrm{pilot}}}\sqrt{\E_q[\zeta_{1,t}^2+\zeta_{2,t}^2]}.
\label{eq:gae-gradient-bound}
\end{equation}
Uniform value error $|e_i|\leq\epsilon_i$ on the evaluated prefixes gives $|\zeta_{i,t}|\leq2\epsilon_i$.  At $\lambda=1$, only the current error remains.  These calculations explain why general GAE also needs accurate later-prefix values.  The oracle comparison uses matched GAE and does not assume that its conditional mean vanishes.

\subsection{Two distinct consistency conditions for cancellation}
\label{app:two-consistencies}
The factors in $\Delta_\mu(s)=\mu_{\widetilde A}(s)\,\mu_v(s)$ concern different objects.  \emph{Sampling--update consistency} asks whether action reweighting recovers the actor's zero-mean score.  \emph{Advantage--sampling consistency} asks whether the advantage is centered under the law supplying its actions and returns.  These are logically distinct conditions, not a claim of statistical independence between the factors.

Fix a prefix and its scoring item, and condition on all snapshots chosen independently of the evaluated continuation.  Write $u=\nabla_\theta\log\pi_\theta(a\mid s)$, $\rho=\pi_\theta/\pi_{\mathrm{old}}$, $\mu_{\widetilde A}=\E_q[\widetilde A\mid s]$, and $\mu_v=\E_q[\rho Iu\mid s]$.  Assume differentiable parameter-independent actor support covered by $q$, positive recorded probabilities, exchangeable differentiation and summation, and finite required moments.  Normalization statistics $\mu_{\mathrm{pilot}}$ and $\sigma_{\mathrm{pilot}}>0$ are fixed.

\begin{proposition}[Two routes to a zero mean-induced term]
\label{prop:two-cancellation-conditions}
Under these fixed-prefix conditions:
\begin{enumerate}
\item If $q=\pi_{\mathrm{old}}$ and $I=1$, then $\mu_v(s)=0$ for any actor parameter satisfying the assumptions, without requiring $\pi_\theta=\pi_{\mathrm{old}}$.  The same result holds for any covering sampler $q$ if the ratio is instead exactly $\pi_\theta/q$.
\item The exact advantage $A^q(s,a)=Q^q(s,a)-V^q(s)$ has zero conditional mean under $q$.  For a Monte Carlo estimate $A_i=G-V_i(s)$ with $V^q(s)=\E_q[G\mid s]$, its normalized conditional mean is
\begin{equation}
\mu_{\widetilde A_i}(s)=\frac{V^q(s)-V_i(s)-\mu_{\mathrm{pilot}}}{\sigma_{\mathrm{pilot}}}.
\label{eq:mc-conditional-offset}
\end{equation}
Thus $V_i=V^q$ and $\mu_{\mathrm{pilot}}=0$ give $\mu_{\widetilde A_i}(s)=0$, regardless of the actor ratio or clipping mask.
\end{enumerate}
Consequently, either $\mu_v(s)=0$ or $\mu_{\widetilde A}(s)=0$ suffices for $\Delta_\mu(s)=0$.  Both need not hold.
\end{proposition}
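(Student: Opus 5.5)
The plan is to prove the two parts separately by direct computation at the fixed prefix, and then combine them through the product form $\Delta_\mu(s)=\mu_{\widetilde A}(s)\,\mu_v(s)$ from Eq.~\eqref{eq:ppo-drift-decomposition}.

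\emph{Part 1.} With $I=1$ the update factor is $v=\rho u$. If $q=\pi_{\mathrm{old}}$, I expand the conditional expectation as a sum over actions:
\[
\mu_v(s)=\sum_a q(a\mid s)\frac{\pi_\theta(a\mid s)}{\pi_{\mathrm{old}}(a\mid s)}\nabla_\theta\log\pi_\theta(a\mid s)=\sum_a\nabla_\theta\pi_\theta(a\mid s).
\]
Since the support is parameter-independent and differentiation commutes with summation, the right-hand side equals $\nabla_\theta\sum_a\pi_\theta(a\mid s)=\nabla_\theta1=0$. This is the same score identity used in Lemma~\ref{lem:trajectory-policy-gradient} and Eq.~\eqref{eq:offpolicy-baseline-cancellation}. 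The computation never evaluates $\theta$ at $\theta_{\mathrm{old}}$, so the conclusion holds for any actor parameter. For a general covering sampler $q$ with exact ratio $\pi_\theta/q$, the same cancellation of $q$ occurs. Positivity of the recorded probabilities and coverage guarantee that every ratio is well defined on the support.

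\emph{Part 2.} For the exact advantage, I condition on $(s,a)$ and use the tower property:
\[
\E_q[Q^q(s,a)\mid s]=\E_q[G\mid s]=V^q(s),
\]
so $\E_q[A^q\mid s]=0$. For the Monte Carlo estimate, $V_i(s)$ is a snapshot fixed independently of the evaluated continuation, so it is constant given $s$. Linearity of the normalization with fixed $\mu_{\mathrm{pilot}},\sigma_{\mathrm{pilot}}$ then gives
\[
\mu_{\widetilde A_i}(s)=\frac{\E_q[G\mid s]-V_i(s)-\mu_{\mathrm{pilot}}}{\sigma_{\mathrm{pilot}}},
\]
which is Eq.~\eqref{eq:mc-conditional-offset}. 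Setting $V_i=V^q$ and $\mu_{\mathrm{pilot}}=0$ makes this zero. This expression involves neither $\rho$ nor $I$, because it is a mean of the advantage alone under $q$; the mask enters only through the factor $\mu_v$.

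\emph{Combination.} The conditional update splits as $\E_q[\widetilde A v\mid s]=\Cov_q(\widetilde A,v\mid s)+\Delta_\mu(s)$ with $\Delta_\mu(s)=\mu_{\widetilde A}(s)\,\mu_v(s)$. A product vanishes if either factor vanishes, so either condition alone suffices.

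The only delicate point is justifying the interchange of differentiation and summation, along with integrability, in Part 1. This is covered directly by the stated assumptions, so I expect no real obstacle. I will also remark that the conditioning must treat $V_i$ as frozen: if the baseline were fitted on the evaluated action, $V_i(s)$ would no longer be constant given $s$ and Part 2 would fail, consistent with the earlier caveat about fitting a baseline and then detaching it.
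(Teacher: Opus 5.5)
Your proposal is correct and follows essentially the same route as the paper. The exact ratio cancels the sampling law, so $\mu_v(s)$ reduces to $\nabla_\theta\sum_a\pi_\theta(a\mid s)=0$. Iterated expectation over $Q^q(s,a)=\E_q[G\mid s,a]$ centers the exact advantage, the frozen $V_i$ and fixed affine normalizer give the stated Monte Carlo offset, and the conclusion follows from the product $\Delta_\mu(s)=\mu_{\widetilde A}(s)\,\mu_v(s)$; the only cosmetic difference is that the paper writes Part~1 directly for a general covering $q$ with ratio $\pi_\theta/q$, while you start from $q=\pi_{\mathrm{old}}$ and then generalize.
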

\begin{proof}
For the first part, the exact ratio changes the action measure, leaving
\begin{align}
\mu_v(s)&=\sum_a q(a\mid s)\frac{\pi_\theta(a\mid s)}{q(a\mid s)}
\nabla_\theta\log\pi_\theta(a\mid s)\\
&=\sum_a\pi_\theta(a\mid s)\nabla_\theta\log\pi_\theta(a\mid s)
=\nabla_\theta\sum_a\pi_\theta(a\mid s)=0.
\label{eq:exact-reweighting-zero-score}
\end{align}
This is the score-function identity, not an equality-of-policies assumption.  For the second part, define $Q^q(s,a)=\E_q[G\mid s,a]$.  Iterated expectation gives $V^q(s)=\sum_a q(a\mid s)Q^q(s,a)$, so
\begin{equation}
\E_{a\sim q}[A^q(s,a)\mid s]
=\sum_a q(a\mid s)Q^q(s,a)-V^q(s)=0.
\label{eq:own-policy-zero-advantage}
\end{equation}
The sampled return residual $G-V^q(s)$ also has zero conditional mean, although it need not equal $A^q(s,a)$ on each continuation.  Replacing $V^q$ by the frozen $V_i$ and applying the fixed affine normalizer yields Eq.~\eqref{eq:mc-conditional-offset}.  Finally, multiply the two factors in Eq.~\eqref{eq:ppo-drift-decomposition}.
\end{proof}

\paragraph{Why practical PPO can have $\mu_v(s)\ne0$.}
The active mask depends on the sampled advantage and hence may depend on the continuation, not just the action.  Let $\bar I(s,a)=\E_q[I\mid s,a]$.  Using the recorded ratio rather than assuming it is exact gives
\begin{equation}
\mu_v(s)=\sum_a\pi_\theta(a\mid s)
\frac{q(a\mid s)}{\pi_{\mathrm{old}}(a\mid s)}
\bar I(s,a)u(s,a).
\label{eq:effective-action-score}
\end{equation}
When $q=\pi_{\mathrm{old}}$ and $I=1$, all directions receive unit weight and cancel.  An incorrect denominator or selective clipping can make the effective weights nonuniform and prevent cancellation.  Neither necessarily makes $\mu_v$ nonzero: the weighted directions may still cancel.  Thus $\mu_v=0$ is not an if-and-only-if diagnostic of sampling--update consistency, and policy lag alone does not invalidate the first part of the proposition.  Appendix~\ref{app:fisher-drift} bounds the effect of uneven weights.  A zero population mean also does not make a finite replay's empirical score mean exactly zero.

\paragraph{What $\mu_{\widetilde A}(s)$ measures, and where the Monte Carlo interpretation ends.}
With $\mu_{\mathrm{pilot}}=0$, Eq.~\eqref{eq:mc-conditional-offset} identifies $\mu_{\widetilde A_i}$ as the scaled negative value error, $(V^q-V_i)/\sigma_{\mathrm{pilot}}$.  The value is conditional on the scoring item.  A critic seeing only the visible prefix may not represent it exactly (Appendix~\ref{app:policy-mismatch}).  For an exact advantage, iterated expectation also gives zero global population mean, so population normalization preserves conditional centering.  An independently estimated and frozen pilot mean need not be zero: even an exact Monte Carlo critic then leaves $\mu_{\widetilde A_i}=-\mu_{\mathrm{pilot}}/\sigma_{\mathrm{pilot}}$.  Recomputing statistics on the evaluated batch introduces dependence and does not justify treating them as fixed in this identity.

For GAE, value errors also enter through later prefixes, as Proposition~\ref{prop:gae-propagation} shows.  Exact values with a fixed trace parameter give zero-mean oracle GAE by iterated expectation of the zero-mean TD residuals.  A trace selected from the realized continuation length can correlate with those residuals, so exact values alone do not establish the same centering.  Our length-adaptive implementation therefore estimates the conditional mean of the actual GAE signal rather than assuming it equals the current state's value error.

An inaccurate state baseline is harmless to the ideal, correctly reweighted and unclipped score gradient because $\mu_v=0$, not because its advantage necessarily has $\mu_{\widetilde A}=0$.  Conversely, conditional centering eliminates the mean-induced term even if $\mu_v\ne0$.  \tfour{} targets the mixed GAE's conditional mean through the dual objective in Section~\ref{sec:twin-calibration}.  Action-dependent weights introduce the covariance in Eq.~\eqref{eq:mixture-conditional-mean}, so the mean is not a convex combination of two endpoint means.  Its population risk is controlled only to the extent justified by the auxiliary approximation, optimization, and statistical errors in Theorem~\ref{thm:complementary-drift}.  Mixing changes its own PPO mask and can alter the covariance term.  Eliminating $\Delta_\mu$ neither recovers the complete current-policy gradient nor establishes baseline invariance under action-dependent weighting or clipping.

\subsection{Why a fixed minimum does not calibrate advantages}
\label{app:twin-disagreement}
We analyze minimum aggregation as a comparator to the calibrated convex weights in Section~\ref{sec:twin-calibration}.  This calculation does not attribute that implementation to the historical naive-PPO diagnostic.  For two GAEs, let $\bar A=(A_1+A_2)/2$ and $\Delta=|A_1-A_2|/2$.  Then
\begin{equation}
A_{\min}=\bar A-\Delta,\qquad
\mu_{\widetilde A_{\min}}(s)=\frac{\E_q[\bar A\mid s]-\E_q[\Delta\mid s]-\mu_{\mathrm{pilot}}}{\sigma_{\mathrm{pilot}}}.
\label{eq:twin-drift-coefficient}
\end{equation}
The minimum moves the conditional mean downward rather than selecting its distance to zero.  For Monte Carlo advantages with zero terminal value,
\begin{equation}
A_{\min}=G-\max_iV_i(s),\qquad
\mu_{\widetilde A_{\min}}(s)=\frac{V^q(s)-\max_iV_i(s)-\mu_{\mathrm{pilot}}}{\sigma_{\mathrm{pilot}}}.
\label{eq:twin-mc-special-case}
\end{equation}
A constant offset disappears if normalization is recomputed from that shifted batch, but a state-dependent offset need not.  With a frozen normalizer, neither disappears automatically.  These identities explain the limitation of pessimistic aggregation.  The proposed method learns a frozen action-dependent rule $w_\varphi(s,a)$ that combines the two GAEs from each shared trajectory.

\paragraph{A harmful update with exact rollout probabilities.}
Let two actions return $1$ and $0$, with $q=\pi_{\mathrm{old}}$ assigning each probability $1/2$.  Set $V_1=3/2$, $V_2=-1/2$, and $(\mu_{\mathrm{pilot}},\sigma_{\mathrm{pilot}})=(0,1)$.  Minimum aggregation gives advantages $(-1/2,-3/2)$.  Parameterize the Bernoulli actor by $\pi_\theta(a_+)=1/(1+e^{-\theta})$ and take $\theta=\log(7/3)$, so $\pi_\theta(a_+)=0.7$.  For clipping bounds $[0.8,1.2]$, ratios are $(1.4,0.6)$: the rewarding action remains active because its advantage is negative, while the other action is clipped.  Hence
\begin{equation}
\partial_\theta\mathcal J_{\mathrm{clip}}(\theta\mid s)
=\tfrac12(1.4)(-\tfrac12)(0.3)=-0.105,
\qquad \partial_\theta\E_{\pi_\theta}[G\mid s]=0.21.
\label{eq:harmful-ppo-example}
\end{equation}
The active factors are $v_+=0.42$ and $v_-=0$, so $\mu_{\widetilde A_{\min}}=-1$, $\mu_{v_{\min}}=0.21$, and $\Delta_{\mu,\min}=-0.21$.  The covariance is $0.105$.  The offset overwhelms this positive signal.  The individual conditional offsets are $-1$ and $1$, so weight $1/2$ removes this mean-induced component, but does not undo clipping of the remaining gradient.  This is a constructed example, not a measured result.

\paragraph{High global scores can conceal the local error.}

To see why a global predictive gate need not exclude this example, let a fraction $\omega$ of a held-out set come from this state, with its two actions equally represented.  Split the remaining samples equally between two deterministic-return states with $G=1$ and $G=-1$, where both critics are exact.  Each critic then has mean-squared error $5\omega/4$, while the held-out return variance is $1-\omega/2-\omega^2/4$.  Consequently,
\begin{equation}
R_1^2=R_2^2=1-\frac{5\omega/4}{1-\omega/2-\omega^2/4}
\longrightarrow1\quad\text{as }\omega\to0.
\label{eq:rare-state-predictive-score}
\end{equation}
Both critics can therefore pass any fixed threshold below one while retaining the local harmful update.  This is a gap between global prediction accuracy and state-conditional update quality, not evidence that either gate guarantees improvement on every state.

\newpage
\section{Conditional-Moment Learning and the Drift Bound}
\label{app:conditional-calibration}

The proof follows the construction in Section~\ref{sec:twin-calibration}: the paired advantages define the correction family, the dual learns its conditional mean, and the retention constraint limits displacement from the reference.  We then transfer the remaining mean risk to the actual PPO drift, quantify independent validation, and refine the sensitivity factor.

Fix the rollout law $q$, scoring rule, critic snapshots, common pilot normalizer, and the token occupation measure $d$.  The complete context $c$ precedes the current action and includes the scoring item and reward history.  Expressions conditioned on $s$ below abbreviate conditioning on this $c$.  The weight $w_\varphi(s,a)$ uses the visible prefix and current action.  The auxiliary $h(c)$ may additionally use the fixed scoring context.  Both parameter sets are fixed before the trajectories on which population statements or validation bounds are evaluated.  Assume the stated expectations exist.

\subsection{Action dependence and the limits of twin disagreement}
\label{app:action-dependent-mixture}
Let $\bar w_\varphi(s)=\E_q[w_\varphi\mid s]$ and $\Delta\widetilde A=\widetilde A_1-\widetilde A_2$.  Expanding $\widetilde A_\varphi=\widetilde A_2+w_\varphi\Delta\widetilde A$ gives
\begin{align}
\E_q[\widetilde A_\varphi\mid s]
&=\mu_{\widetilde A_2}+\E_q[w_\varphi\Delta\widetilde A\mid s]\notag\\
&=\bar w_\varphi\mu_{\widetilde A_1}+(1-\bar w_\varphi)\mu_{\widetilde A_2}
+\Cov_q(w_\varphi,\Delta\widetilde A\mid s).
\label{eq:mixture-conditional-mean}
\end{align}
The covariance is zero for a state-only weight, but need not vanish for a frozen action-dependent rule.  Its magnitude is at most
\begin{equation}
|\Cov_q(w_\varphi,\Delta\widetilde A\mid s)|
\leq\sqrt{\Var_q(w_\varphi\mid s)\Var_q(\Delta\widetilde A\mid s)}.
\label{eq:weight-covariance-bound}
\end{equation}
Freezing parameters preserves this dependence.  A rule that takes realized GAEs as input also depends on the sampled continuation, unlike the $w(s,a)$ architecture in Section~\ref{sec:twin-calibration}.

Twin disagreement does not identify a common conditional offset.  Adding the same state-dependent shift $b(s)$ to both normalized advantages leaves $\Delta\widetilde A$ unchanged while increasing both conditional means by $b(s)$.  Relative disagreement alone cannot distinguish these cases.  In the proposed objective, trajectory labels train the separate conditional-mean function.  The pair determines the directions in which the gate can alter that signal.  The dual lemma below holds for any square-integrable signal, so its validity is not evidence that two return critics estimate the mean more accurately than one.  Such a comparison also depends on the correction family, data, model errors, and computation.

In the Monte Carlo case, $A_i=G-V_i(s)$, the critics share the same random return.  Their difference $A_1-A_2=V_2(s)-V_1(s)$ reveals relative prediction differences, not the common value error.  It is state-only, so the covariance in Eq.~\eqref{eq:mixture-conditional-mean} vanishes in this special case even for an action-dependent weight.  The mean uses $\bar w_\varphi(s)$.  General GAE instead includes random later-prefix value differences.  Random initialization alone does not make two advantage outputs independent conditional draws from the rollout law.  Action-dependent mixing can also turn a state baseline into an action-dependent quantity.  With $b_i=V_i+\mu_{\mathrm{pilot}}$,
\begin{equation}
\widetilde A_\varphi=\frac{G-b_\varphi(s,a)}{\sigma_{\mathrm{pilot}}},\qquad
b_\varphi(s,a)=w_\varphi(s,a)b_1(s)+(1-w_\varphi(s,a))b_2(s).
\label{eq:mixture-baseline}
\end{equation}
The usual state-baseline cancellation does not apply to $b_\varphi(s,a)$.  Under an exact, unclipped on-policy score, its contribution contains
$-(b_1-b_2)\E_q[w_\varphi\nabla_\theta\log\pi_\theta\mid s]/\sigma_{\mathrm{pilot}}$.
The method therefore does not claim baseline invariance or oracle policy-gradient recovery.

\paragraph{A centered mixture with nonzero signal.}
\label{app:twin-mixture-example}
The before--after illustration in Eq.~\eqref{eq:complementary-toy} uses equally likely actions $a_+,a_-$.  Let $\widetilde A_1=(0.9,-0.1)$ and $\widetilde A_2=(0.3,-0.7)$, giving the fixed average $\overline A=(0.6,-0.4)$ and conditional mean $0.1$.  Choosing
\begin{equation}
w(a_+)=\tfrac12,\qquad w(a_-)=\tfrac16
\quad\Longrightarrow\quad
\widetilde A_w=(0.6,-0.6),\qquad\mu_{\widetilde A_w}=0
\label{eq:twin-mixture-example-weights}
\end{equation}
keeps both signs unchanged, so the PPO mask and $\mu_v(s)$ are unchanged at fixed policy ratios.  The mean-induced component falls from $0.1\mu_v(s)$ to zero, while the action contrast increases from $1$ to $1.2$.  The signal-retention quantities are $C=\tfrac12(0^2+(-0.2)^2)=0.02$ and $Q=\tfrac12(0.6^2+(-0.4)^2)=0.26$.  Thus $C/Q=1/13$, making this constructed mixture feasible for $\kappa\geq1/13$ within the required range $\kappa<1$.  These are illustrative values, not an experimental setting.

\subsection{The conditional-moment dual}
\label{app:moment-duality}
Recall $R(\varphi)=\E_d[\mu_{\widetilde A_\varphi}(s)^2]$.  For a mean function $h(c)$ with finite $\E_d[h^2]$, the conditional-moment objective is
\begin{equation}
\mathcal L(\varphi,h)=\E_{d,q}[2h(c)\widetilde A_\varphi-h(c)^2].
\label{eq:moment-objective}
\end{equation}
\begin{lemma}[A scalar dual represents the conditional mean square]
\label{lem:moment-duality}
For any square-integrable mixed advantage and $h(c)\in L^2(d)$,
\begin{equation}
\mathcal L(\varphi,h)=R(\varphi)-\|h-\mu_{\widetilde A_\varphi}\|_{L^2(d)}^2.
\label{eq:moment-square-completion}
\end{equation}
Consequently $\sup_{h\in L^2(d)}\mathcal L(\varphi,h)=R(\varphi)$, attained at the conditional mean.
\end{lemma}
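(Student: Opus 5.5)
The plan is to complete the square pointwise in $h$ and then integrate, using the tower property to replace the sampled advantage by its conditional mean. Write $\mu:=\mu_{\widetilde A_\varphi}$, regarded as a function of the full context $c$ (the paper's convention that conditioning on $s$ fixes $c$). First, I will reduce the joint expectation over $d$ and $q$ to an expectation over $d$ alone. Since $h(c)$ is a function of the pre-action context only, conditioning on $c$ gives
\[
\E_{d,q}[h(c)\widetilde A_\varphi]=\E_d\bigl[h(c)\,\E_q[\widetilde A_\varphi\mid c]\bigr]=\E_d[h\mu].
\]
This step is justified by Cauchy--Schwarz: $h\in L^2(d)$ and $\widetilde A_\varphi$ is square-integrable, so the product is integrable. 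Conditional Jensen then gives $\E_d[\mu^2]\le\E_{d,q}[\widetilde A_\varphi^2]<\infty$, so $\mu\in L^2(d)$ and $R(\varphi)$ is finite. The term $\E_{d,q}[h^2]$ equals $\E_d[h^2]$ because $h$ does not depend on the action or the continuation.

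Second, I will complete the square pointwise:
\[
2h\mu-h^2=\mu^2-(h-\mu)^2 .
\]
Integrating against $d$, which is a finite measure with its scale fixed, yields
\[
\mathcal L(\varphi,h)=\E_d[\mu^2]-\E_d[(h-\mu)^2]=R(\varphi)-\|h-\mu_{\widetilde A_\varphi}\|_{L^2(d)}^2 ,
\]
which is Eq.~\eqref{eq:moment-square-completion}. Every term here is finite, so splitting the expectation is legitimate.

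Third, the consequence follows directly. Since the subtracted norm is nonnegative, $\mathcal L(\varphi,h)\le R(\varphi)$ for every $h\in L^2(d)$. Equality holds at $h=\mu_{\widetilde A_\varphi}$, which belongs to $L^2(d)$ by the Jensen step. Hence the supremum equals $R(\varphi)$ and is attained. It is unique up to $d$-null sets, since any maximizer must have zero norm distance from $\mu$.

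The only delicate point is the measurability and integrability bookkeeping in the first step. Specifically, $\mu$ must be a measurable function of $c$, and the conditional expectation must be taken with respect to the kernel $q$ under the occupation measure $d$, which need not be normalized. I would handle this by normalizing $d$ to a probability measure, applying the standard tower property, and multiplying back by the fixed total mass. Both sides of the identity scale by the same factor, so the result is unaffected.
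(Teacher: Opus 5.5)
Your proposal is correct and follows the paper's proof essentially step for step. Both use iterated expectation to replace $\widetilde A_\varphi$ by its conditional mean, complete the square pointwise via $2h\mu-h^2=\mu^2-(h-\mu)^2$, and note that square integrability makes the conditional mean an admissible maximizer. Your extra bookkeeping simply makes explicit what the paper's three-line argument leaves implicit: Cauchy--Schwarz for integrability of $h\widetilde A_\varphi$, conditional Jensen for $\mu_{\widetilde A_\varphi}\in L^2(d)$, and rescaling the finite occupation measure $d$.
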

\begin{proof}
Since $h$ is measurable before the action, iterated expectation gives
$\E_{d,q}[h(c)\widetilde A_\varphi]=\E_d[h(c)\mu_{\widetilde A_\varphi}(s)]$.
Complete the square in $2h\mu-h^2=\mu^2-(h-\mu)^2$ and average.  Square integrability makes the conditional mean an admissible maximizer.
\end{proof}
Writing $\widetilde A_\varphi=\mu_{\widetilde A_\varphi}+(\widetilde A_\varphi-\mu_{\widetilde A_\varphi})$ and using the zero conditional mean of the second term gives
\begin{equation}
\E_{d,q}[\widetilde A_\varphi^2]
=R(\varphi)+\underbrace{\E_d\Var_q(\widetilde A_\varphi\mid s)}_{\text{within-context variation}}.
\label{eq:sample-square-is-not-mean-square}
\end{equation}
The auxiliary objective isolates a population conditional mean without paired continuations at each state, not an exact mean from one observed advantage.  Fitting one label per context can memorize noise, so population evaluation remains necessary.

For a fixed mixture and $h_\zeta\in\mathcal H\subseteq L^2(d)$, define the restricted optimum $R_{\mathcal H}(\varphi)=\sup_{h\in\mathcal H}\mathcal L(\varphi,h)$, approximation error $\epsilon_{\mathrm{app}}(\varphi)=\inf_{h\in\mathcal H}\E_d[(h-\mu_{\widetilde A_\varphi})^2]$, and inner optimization gap $\epsilon_{\mathrm{opt}}=R_{\mathcal H}(\varphi)-\mathcal L(\varphi,h_\zeta)$.  The dual identity gives $R_{\mathcal H}=R-\epsilon_{\mathrm{app}}$ and the decomposition
\begin{equation}
\epsilon_h:=\E_d[(h_\zeta(c)-\mu_{\widetilde A_\varphi}(s))^2]
=\epsilon_{\mathrm{app}}+\epsilon_{\mathrm{opt}}.
\label{eq:mean-prediction-residual}
\end{equation}
This is error against the conditional mean.  Error against a sampled advantage additionally contains $\E_d\Var_q(\widetilde A_\varphi\mid s)$, so an empirical sample-regression loss is not the same quantity.  The optimization gap concerns the auxiliary head at fixed mixture, not optimization of the gate.

If the auxiliary input is only a compressed feature $z(c)$, the unrestricted optimum on that input is $\E[\widetilde A_\varphi\mid z]$.  The unresolved risk is exactly
\begin{equation}
R(\varphi)-\E_d[(\E_{d,q}[\widetilde A_\varphi\mid z])^2]
=\E_d\Var_d(\mu_{\widetilde A_\varphi}(s)\mid z).
\label{eq:auxiliary-information-gap}
\end{equation}
Conditional means and variances here use the probability law obtained by normalizing $d$.  The outer integrals retain its original mass.  The identity therefore also holds for the finite occupation measure.
This identity follows by conditioning $\mu$ on $z$ and expanding its second moment.  In particular, omitting the scoring item can hide item-specific offsets rather than eliminate them.

\subsection{One-rollout stochastic gradients}
\label{app:single-rollout-gradients}
Keep $q$, critic features, and GAE endpoints detached during head fitting.  Substituting the mixture into the objective gives
\begin{equation}
\mathcal L(\varphi,h)
=\E_{d,q}[2h(c)\widetilde A_2-h(c)^2]
+2\E_{d,q}[h(c)w_\varphi(s,a)\Delta\widetilde A].
\label{eq:twin-role-in-moment}
\end{equation}
For differentiable heads, let $\ell_{\mathrm{mom}}=2h_\zeta(c)\widetilde A_\varphi-h_\zeta(c)^2$.  Its per-sample gradients are
\begin{equation}
\begin{aligned}
\nabla_\varphi\ell_{\mathrm{mom}}&=2h_\zeta(c)\Delta\widetilde A\,\nabla_\varphi w_\varphi(s,a),\\
\nabla_\zeta\ell_{\mathrm{mom}}&=2(\widetilde A_\varphi-h_\zeta(c))\nabla_\zeta h_\zeta(c).
\end{aligned}
\label{eq:main-head-gradients}
\end{equation}
The predicted offset sets the correction direction, and the advantage difference determines the available adjustment.  If the advantages coincide, the weight gradient vanishes even when the predicted mean is nonzero.  Giving $h$ the sampled action would change its target to an action-conditional mean and could suppress useful action differences.  Assume these gradients and the constraint gradient are integrable and differentiation can be interchanged with expectation.  A sufficient local condition is a uniformly bounded gate Jacobian and a common $L^2$ envelope for the auxiliary output and its Jacobian.  The moment gradients are then unbiased for $\mathcal L$ at fixed parameters, while their relation to $R$ depends on mean-approximation quality.  Under these conditions,
\begin{align}
\nabla_\varphi\mathcal L
&=\E_{d,q}[\nabla_\varphi\ell_{\mathrm{mom}}],\\
\nabla_\zeta\mathcal L
&=\E_{d,q}[\nabla_\zeta\ell_{\mathrm{mom}}],\\
\nabla_\varphi C
&=\E_{d,q}[2(\widetilde A_\varphi-\overline A)\Delta\widetilde A\,
\nabla_\varphi w_\varphi(s,a)].
\label{eq:single-rollout-gradients}
\end{align}
One complete trajectory gives unbiased trajectory-summed estimates of these expressions with the fixed aggregation in Appendix~\ref{app:restart-contract}.  Tokens on a trajectory may be correlated, since linearity of expectation is sufficient for this statement.  Independence is needed across fresh trajectories for the validation concentration bound below.  Conditional on training history, an online update uses parameters fixed before its fresh trajectory is drawn.  Reusing a fixed trajectory set gives stochastic gradients of its empirical objective.  Evaluating the selected candidate on that same set does not provide independent population validation.  This result concerns a stochastic observation, not the total number of trajectories needed to fit $h$ or reduce $R$ to a specified tolerance.  Appendix~\ref{app:rollout-accounting} accounts for every sampling phase.

For a gate function $w$, write $R(w),C(w),\mathcal L(w,h)$ for the same objectives evaluated at that function.  Introducing the nonnegative retention multiplier $\beta$ gives
\begin{equation}
\mathcal J(w,h,\beta)=\mathcal L(w,h)+\beta\{C(w)-\kappa Q\}.
\label{eq:moment-lagrangian}
\end{equation}
A practical update evaluates Eq.~\eqref{eq:moment-lagrangian} at $w=w_\varphi$ and $h=h_\zeta$, descending in $\varphi$ and ascending in $\zeta,\beta$.  Its per-token integrand is
\begin{equation}
j=2h_\zeta(c)\widetilde A_\varphi-h_\zeta(c)^2
+\beta\{(w_\varphi-\tfrac12)^2(\Delta\widetilde A)^2-\kappa\overline A^2\}.
\label{eq:single-rollout-saddle-integrand}
\end{equation}
The weight gradient adds $\beta\nabla_\varphi C$ to the moment gradient, and the multiplier derivative is $(w_\varphi-\tfrac12)^2(\Delta\widetilde A)^2-\kappa\overline A^2$.  All products use the two advantages from the same trajectory.  No independent continuation is needed to form a product or a prefix-wise Monte Carlo mean.  Projected ascent keeps $\beta$ nonnegative.  Proposition~\ref{prop:population-saddle} characterizes the ideal function-space problem, not global convergence of neural alternating optimization.  After fitting, freeze the candidate and use independent validation and actor trajectories.  Detach the complete actor advantage so no actor gradient enters the heads, critics, or normalizer.

\subsection{Population saddle points and optimal weights}
\label{app:population-saddle}
Let $x=(s,a)$ denote only the visible gate input.  Conditioning on $x$ here does not invoke the shorthand that also fixes the hidden parts of $c$.  Conditional expectations use the normalized joint law of $(d,q)$, while outer integrals retain its finite nonzero mass $M$.  Let $\mathcal W$ contain all measurable $[0,1]$-valued functions of $x$, and write
$\widetilde A_w=\overline A+(w(x)-\tfrac12)\Delta\widetilde A$.
The feasible set is $\mathcal F=\{w\in\mathcal W:C(w)\leq\kappa Q\}$, and $R^*=\min_{w\in\mathcal F}R(w)$.  The mixture is affine in $w$, so $\mathcal L$ is affine in $w$ and concave in $h$.  The quadratic $C$ is convex, and the multiplier term is affine in $\beta$.

Under the assumptions of Proposition~\ref{prop:population-saddle}, $\mathcal J$ is convex in $w$ and jointly concave in $(h,\beta)$.  We establish a saddle point with value $R^*$ and derive the optimal gate in Eq.~\eqref{eq:optimal-saddle-weight}.
\begin{proof}[Proof of Proposition~\ref{prop:population-saddle}]
\emph{Existence of a feasible optimum.}
Set $a(x)=\E_{d,q}[(\Delta\widetilde A)^2\mid x]$ and define $d\nu(x)=M a(x)dP_x(x)$, where $P_x$ is the marginal of the normalized joint law.  Work in the Hilbert space $L^2(\nu)$.  Gates that differ only where $a=0$ have the same effect on the signal.  The set $\mathcal W$ is nonempty, closed, bounded, and convex in this space, hence weakly compact.  The linear operator
$Tu=\E_{d,q}[u(x)\Delta\widetilde A\mid c]$
obeys, by conditional Jensen,
\begin{equation}
\|Tu\|_{L^2(d)}^2\leq\E_{d,q}[u(x)^2(\Delta\widetilde A)^2]
=\|u\|_{L^2(\nu)}^2.
\label{eq:saddle-gate-operator}
\end{equation}
Thus $R(w)=\|\mu_{\overline A}+T(w-\tfrac12)\|_{L^2(d)}^2$ and $C(w)=\|w-\tfrac12\|_{L^2(\nu)}^2$ are continuous convex functions and are weakly lower semicontinuous.  The feasible set is a nonempty weakly compact subset of $\mathcal W$, so it contains a minimizer $w^*$.

\emph{Strong duality and the saddle.}
The reference satisfies $C(1/2)=0<\kappa Q$, giving Slater's condition relative to $\mathcal W$.  Convex Lagrange duality therefore supplies a finite multiplier $\beta^*\geq0$ such that $w^*$ minimizes $R+\beta^*C$ over $\mathcal W$ and
\begin{equation}
C(w^*)\leq\kappa Q,\qquad
\beta^*\{C(w^*)-\kappa Q\}=0.
\label{eq:saddle-complementarity}
\end{equation}
Take $h^*=\E_q[\widetilde A_{w^*}\mid c]$, the maximizing response from Lemma~\ref{lem:moment-duality}.  For $u=w-w^*$, the directional derivative of $\mathcal L(\cdot,h^*)$ at $w^*$ equals that of $R$.  The first-order condition for $R+\beta^*C$, followed by expanding the quadratic constraint, gives
\begin{align}
\mathcal J(w,h^*,\beta^*)-\mathcal J(w^*,h^*,\beta^*)
&=2\E_{d,q}[h^*u\Delta\widetilde A]
+2\beta^*\E_{d,q}[(w^*-\tfrac12)u(\Delta\widetilde A)^2]\notag\\
&\quad+\beta^*\|u\|_{L^2(\nu)}^2\geq0.
\label{eq:saddle-first-order}
\end{align}
The square-completion identity and complementary slackness give the other side.  For all admissible $w,h,\beta$,
\begin{equation}
\mathcal J(w^*,h,\beta)\leq\mathcal J(w^*,h^*,\beta^*)=R^*
\leq\mathcal J(w,h^*,\beta^*).
\label{eq:population-saddle-inequality}
\end{equation}
Consequently the unrestricted constrained problem has the strong-duality representation
\begin{equation}
\min_{w\in\mathcal W}\sup_{h\in L^2(d),\,\beta\geq0}\mathcal J(w,h,\beta)
=\max_{h\in L^2(d),\,\beta\geq0}\min_{w\in\mathcal W}\mathcal J(w,h,\beta)
=R^*.
\label{eq:population-strong-duality}
\end{equation}
For an infeasible gate the inner supremum is infinite through $\beta$, which is why the left side retains $\sup$.

\emph{Optimal visible-input weights.}
Define $b_h(x)=\E_{d,q}[h(c)\Delta\widetilde A\mid x]$.  Given $h,\beta$, the terms depending on $w(x)$ are
\begin{equation}
2b_h(x)(w(x)-\tfrac12)+\beta a(x)(w(x)-\tfrac12)^2.
\label{eq:gate-quadratic-response}
\end{equation}
For $\beta>0$ and $a(x)>0$, its unconstrained derivative vanishes at $w=1/2-b_h/(\beta a)$.  Projecting onto $[0,1]$ and substituting $h^*,\beta^*$ proves Eq.~\eqref{eq:optimal-saddle-weight}.  Conditional Cauchy--Schwarz gives $|b_h|^2\leq a\E[h^2\mid x]$, so the ratio is well-defined on $a>0$, and $b_h=0$ on $a=0$.  The construction averages over scoring information and future randomness given visible $x$, so it respects the gate's input restriction.
\end{proof}

\paragraph{Boundary cases and interpretation.}
If $a(x)=0$, both critics give identical advantages conditional on $x$ almost surely, and any weight there is equivalent.  If $\beta^*=0$, minimize the linear term in Eq.~\eqref{eq:gate-quadratic-response}: $b_{h^*}>0$ requires $w^*=0$, $b_{h^*}<0$ requires $w^*=1$, and $b_{h^*}=0$ permits any weight consistent with the remaining optimality conditions.  An active constraint need not have a positive multiplier.  The relations are self-consistent because $h^*$ depends on $w^*$ and $\beta^*$ satisfies Eq.~\eqref{eq:saddle-complementarity}.  They do not prescribe substituting a realized GAE into the gate at actor-update time.

When $\kappa=0$, feasibility fixes the mixed signal to $\overline A$, but strict feasibility fails and a finite saddle multiplier need not exist.  The original signal-retention bound still applies.  The case $Q=0$ is treated in Appendix~\ref{app:signal-retention}.  Neural parameterizations need not inherit convexity in $\varphi,\zeta$, and empirical fitting remains distinct from the population problem.  In particular, an unrestricted auxiliary function could memorize a single label at each training context, turning the empirical inner optimum into a sample-square loss.  Shared representations and independent evaluation remain important.  None of the optimality analysis adds a group-sampling step to the algorithm.

\subsection{Signal retention and attainable risk}
\label{app:signal-retention}
\begin{proposition}[The displacement constraint excludes a zero signal]
\label{prop:signal-retention}
Let $Q=\|\overline A\|_{L^2(d,q)}^2>0$ and $C(\varphi)\leq\kappa Q$, $0\leq\kappa<1$.  Then
\begin{equation}
\|\widetilde A_\varphi\|_{L^2}\geq(1-\sqrt\kappa)\sqrt Q,\qquad
\E[\widetilde A_\varphi\overline A]\geq(1-\sqrt\kappa)Q.
\label{eq:signal-retention-guarantee}
\end{equation}
\end{proposition}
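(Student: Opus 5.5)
Both inequalities follow from viewing the constraint as a ball in the Hilbert space $L^2(d,q)$. Write $D=\widetilde A_\varphi-\overline A$, so that $C(\varphi)=\|D\|_{L^2}^2\leq\kappa Q$ gives $\|D\|_{L^2}\leq\sqrt{\kappa Q}=\sqrt\kappa\,\|\overline A\|_{L^2}$. Square integrability of both advantages makes $\overline A$, $\widetilde A_\varphi$, and $D$ elements of $L^2(d,q)$. All inner products and norms are finite, so the finite, possibly unnormalized, occupation measure causes no difficulty.

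For the first inequality, apply the reverse triangle inequality to $\widetilde A_\varphi=\overline A+D$:
\[
\|\widetilde A_\varphi\|_{L^2}\geq\|\overline A\|_{L^2}-\|D\|_{L^2}\geq(1-\sqrt\kappa)\sqrt Q.
\]
The right side is strictly positive because $Q>0$ and $\kappa<1$. This is the main-text claim in Eq.~\eqref{eq:main-signal-retention}.

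For the second inequality, expand the inner product with the reference signal:
\[
\E[\widetilde A_\varphi\overline A]=Q+\E[D\overline A].
\]
Cauchy--Schwarz gives $|\E[D\overline A]|\leq\|D\|_{L^2}\|\overline A\|_{L^2}\leq\sqrt{\kappa}\,Q$. Hence $\E[\widetilde A_\varphi\overline A]\geq(1-\sqrt\kappa)Q$. This bound is stronger than the norm bound, since it shows the mixture stays positively aligned with $\overline A$ and not merely nonzero. Dividing by $\sqrt Q$ and using Cauchy--Schwarz once more recovers the first inequality, which gives a consistency check.

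Nothing here is a real obstacle. The only points needing care are bookkeeping. First, $Q$ must be the same squared norm that appears in $C$, under the same measure $\E_{d,q}$, so that the radius $\sqrt{\kappa Q}$ is correctly compared with $\|\overline A\|_{L^2}$. Second, the proof uses no property of the gate beyond feasibility, so it holds for action-dependent weights and for any mixture, convex or not. I would close with the remark that $\kappa=0$ forces $\widetilde A_\varphi=\overline A$ almost everywhere. When $Q=0$, the constraint forces $\widetilde A_\varphi=0$ and the bounds hold trivially, which matches the degenerate case deferred here from Appendix~\ref{app:population-saddle}.
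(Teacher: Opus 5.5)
Your proposal is correct and is essentially the paper's proof. The paper also gets the norm bound from the reverse triangle inequality, $\|\widetilde A_\varphi\|_{L^2}\geq\sqrt Q-\sqrt{C(\varphi)}\geq(1-\sqrt\kappa)\sqrt Q$, and the alignment bound from Cauchy--Schwarz applied to $\E[\widetilde A_\varphi\overline A]=Q+\E[(\widetilde A_\varphi-\overline A)\overline A]\geq Q-\sqrt{CQ}$ together with $C\leq\kappa Q$.
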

\begin{proof}
Write $e=\widetilde A_\varphi-\overline A$.  Using the common $(d,q)$ weighting, the reverse triangle inequality gives the full chain
\begin{equation}
\|\widetilde A_\varphi\|_{L^2}\geq\|\overline A\|_{L^2}-\|\widetilde A_\varphi-\overline A\|_{L^2}=\sqrt Q-\sqrt{C(\varphi)}\geq(1-\sqrt\kappa)\sqrt Q>0.
\label{eq:retention-triangle}
\end{equation}
Cauchy--Schwarz gives
$\E[\widetilde A_\varphi\overline A]=Q+\E[e\overline A]\geq Q-\sqrt{C Q}$.
Insert $C\leq\kappa Q$ to obtain the alignment bound.
\end{proof}
These bounds preserve aggregate energy and alignment with the fixed twin mean, which need not be an oracle advantage.  They do not preserve every state's action ordering, conditional covariance, or gradient direction.  When $Q=0$, the reference itself has no signal and feasibility forces a zero mixture almost surely.  The window provides no policy-learning signal to retain.  This case is diagnosed rather than interpreted as successful drift correction.

The constant gate $w=1/2$ makes $C=0$.  Candidate weights are checked on independent complete trajectories.  The base protocol falls back to this constant gate after a failed empirical check.  That fallback guarantees feasibility, not improved drift.  If population enforcement is claimed, a valid upper bound on $C-\kappa Q$ is required.  Selecting or tuning $\kappa$ on the same check invalidates a fixed-candidate confidence interpretation.

\begin{proposition}[A population comparison with the fixed mean]
\label{prop:population-mixture-comparison}
Assume the gate family contains $w=1/2$, denoted $\varphi_0$.  A feasible global minimizer of the unrestricted dual satisfies $R(\varphi^*)\leq R(\varphi_0)$.  More generally, if
$R_{\mathcal H}(\hat\varphi)\leq R_{\mathcal H}(\varphi_0)+\epsilon_{\mathrm{outer}}$, then
\begin{equation}
R(\hat\varphi)\leq R(\varphi_0)+\epsilon_{\mathrm{app}}(\hat\varphi)
-\epsilon_{\mathrm{app}}(\varphi_0)+\epsilon_{\mathrm{outer}}.
\label{eq:population-mixture-comparison}
\end{equation}
\end{proposition}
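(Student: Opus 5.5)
The argument combines the dual identity of Lemma~\ref{lem:moment-duality}, in its restricted-class form, with the definition of approximation error. For any gate $\varphi$ and any $h\in\mathcal H$, Eq.~\eqref{eq:moment-square-completion} gives $\mathcal L(\varphi,h)=R(\varphi)-\|h-\mu_{\widetilde A_\varphi}\|_{L^2(d)}^2$. Taking the supremum over $\mathcal H$ yields the exact relation
\[
R_{\mathcal H}(\varphi)=R(\varphi)-\epsilon_{\mathrm{app}}(\varphi),
\]
which is already recorded just before Eq.~\eqref{eq:mean-prediction-residual}. I would restate it for both $\hat\varphi$ and $\varphi_0$, noting that it requires only $\mathcal H\subseteq L^2(d)$ and square-integrable mixed advantages. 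Both hold because the mixture is a pointwise convex combination of two $L^2$ advantages.

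\textbf{First claim.} Let $\varphi^*$ be a feasible global minimizer of the unrestricted dual, with $h$ ranging over $L^2(d)$. By Lemma~\ref{lem:moment-duality}, the inner supremum equals $R(\varphi)$, so $\varphi^*$ minimizes $R$ over the feasible gates in the family. The constant gate $\varphi_0$ gives $\widetilde A_{\varphi_0}=\overline A$, hence $C(\varphi_0)=0\le\kappa Q$. It is therefore feasible and lies in the family by assumption, and minimality gives $R(\varphi^*)\le R(\varphi_0)$. The only point to check is that the feasible set used by the minimizer is the same one containing $\varphi_0$, which holds because the constraint $C(\varphi)\le\kappa Q$ is what defines feasibility.

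\textbf{General claim.} Substitute the identity into the hypothesis $R_{\mathcal H}(\hat\varphi)\le R_{\mathcal H}(\varphi_0)+\epsilon_{\mathrm{outer}}$ to get
\[
R(\hat\varphi)-\epsilon_{\mathrm{app}}(\hat\varphi)\le R(\varphi_0)-\epsilon_{\mathrm{app}}(\varphi_0)+\epsilon_{\mathrm{outer}}.
\]
Rearranging gives Eq.~\eqref{eq:population-mixture-comparison}. Two points should be remarked. First, this step needs no convexity or feasibility of $\hat\varphi$; the retention constraint enters only through the first claim. Second, the bound involves the difference of the two approximation errors, so an $\mathcal H$ that represents $\mu_{\overline A}$ well but the mixture poorly can loosen it.

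There is no real obstacle here. The only delicate step is justifying the restricted identity $R_{\mathcal H}=R-\epsilon_{\mathrm{app}}$. The supremum of $-\|h-\mu\|^2$ over $\mathcal H$ is minus the infimum of the distance, regardless of whether that infimum is attained, so I would state this explicitly instead of assuming a maximizer exists in $\mathcal H$.
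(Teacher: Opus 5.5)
Your proposal is correct and follows the paper's own proof. The first claim comes from feasibility of the constant gate $w=1/2$ (since $C(\varphi_0)=0\le\kappa Q$). The second claim comes from substituting $R_{\mathcal H}(\varphi)=R(\varphi)-\epsilon_{\mathrm{app}}(\varphi)$ into both sides of the hypothesis and rearranging. Your explicit remark that this identity holds via an infimum, whether or not a maximizer exists in $\mathcal H$, is a useful clarification that the paper leaves implicit.
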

\begin{proof}
The reference is feasible.  For the second statement substitute
$R_{\mathcal H}(\varphi)=R(\varphi)-\epsilon_{\mathrm{app}}(\varphi)$
on both sides of the assumed population objective comparison.
\end{proof}
This is a conditional comparison, not a claim that stochastic head training attains its premise.  A nonzero feasible optimum is allowed.  Shared critic bias, restricted inputs, and the retention constraint can all prevent exact centering.

\subsection{Actual PPO gradient and error bounds}
\label{app:drift-proof}
For the detached mixed advantage, use its own active mask
\begin{equation}
I_\varphi=\one\{\widetilde A_\varphi>0,\rho<\rho_+\}
+\one\{\widetilde A_\varphi<0,\rho>\rho_-\}+\one\{\widetilde A_\varphi=0\},
\qquad v_\varphi=\rho I_\varphi\nabla_\theta\log\pi_\theta.
\label{eq:ppo-active-mask}
\end{equation}
At clipping boundaries choose a fixed subgradient coefficient in $[0,1]$.  The zero-advantage convention does not change its zero gradient contribution.
\paragraph{Clipping after mixing.}
\label{app:regularization-identity}
For any scalar normalized advantage $x$, the clipped surrogate obeys the pointwise identity
\begin{equation}
\min\{\rho x,\clip(\rho,\rho_-,\rho_+)x\}
=\rho x-x_+(\rho-\rho_+)_+-(-x)_+(\rho_{-}-\rho)_+.
\label{eq:pointwise-regularization}
\end{equation}
The identity follows by checking the two signs of $x$ and the clipping thresholds.  It does not require a state-only weight.  It also shows why clipping the two advantages separately and then mixing their gradients need not equal clipping their mixture.  The conditional-moment objective trains the weight before actor sampling.  It is not an additional differentiable penalty inside the actor loss.

For the actual mixed signal and its own mask,
\begin{equation}
g_\varphi(s)=\Cov_q(\widetilde A_\varphi,v_\varphi\mid s)
+\underbrace{\mu_{\widetilde A_\varphi}(s)\mu_{v_\varphi}(s)}_{\Delta_{\mu,\varphi}(s)}.
\label{eq:weighted-drift-decomposition}
\end{equation}
Expand the definition of conditional covariance.  This step does not require independence between the weight and action or between the two factors.  Detaching the advantage ensures that this is the score-form actor gradient, without derivatives through the heads.

\begin{proof}[Proof of Theorem~\ref{thm:complementary-drift}]
Using the actual mixed advantage and its own PPO mask, Cauchy--Schwarz gives
\begin{align}
\E_d\|\Delta_{\mu,\varphi}\|_2
&=\E_d[|\mu_{\widetilde A_\varphi}|\,\|\mu_{v_\varphi}\|_2]
\label{eq:drift-cauchy-schwarz}\\
&\leq\sqrt{R(\varphi)}\sqrt{\E_d\|\mu_{v_\varphi}\|_2^2}
\leq B\sqrt{R(\varphi)}.
\label{eq:conditional-drift-bound}
\end{align}
Lemma~\ref{lem:moment-duality} and Eq.~\eqref{eq:mean-prediction-residual} yield
\begin{equation}
R(\varphi)=\mathcal L(\varphi,h_\zeta)+\epsilon_h
=\mathcal L(\varphi,h_\zeta)+\epsilon_{\mathrm{app}}+\epsilon_{\mathrm{opt}}.
\label{eq:restricted-dual-gap}
\end{equation}
Insert the independent validation upper bound on $\mathcal L$.  This gives the theorem with $\epsilon_h$, or equivalently with its approximation and inner-optimization components.  On that event the sum under the square root is nonnegative, since it bounds $R\geq0$.
\end{proof}
For all convex mixtures, a sufficient common factor is
$B=C_\rho=(\E_{d,q}[\rho^2\|\nabla_\theta\log\pi_\theta\|_2^2])^{1/2}$,
by Jensen and $0\leq I_\varphi\leq1$.  This factor must cover every actor parameter if the bound is claimed throughout PPO reuse.  A reduced risk improves this common upper bound.  It does not imply monotone decrease of the actual $\Delta_{\mu,\varphi}$ when its multiplier changes, or of full-gradient error.  The clipping nonexpansiveness and GAE error bounds in Appendix~\ref{app:value-gradient-proof} remain separate comparisons with an oracle signal.

\subsection{Independent trajectory validation and its limits}
\label{app:moment-validation}
For a candidate $(\varphi,\zeta)$ fixed independently of $n$ mutually independent validation trajectories drawn from the declared item and rollout law, use the trajectory statistic
\begin{equation}
Z_j=\frac1{T_{\max}}\sum_{t=1}^{T_{\max}}m_{jt}
\{2h_\zeta(c_{jt})\widetilde A_{\varphi,jt}-h_\zeta(c_{jt})^2\},
\qquad\widehat{\mathcal L}_{\mathrm{val}}=\frac1n\sum_jZ_j,
\label{eq:trajectory-moment-validation}
\end{equation}
where $m_{jt}$ marks a valid pre-action token, as in Appendix~\ref{app:restart-contract}.  If $|\widetilde A_i|\leq K$ and $|h_\zeta|\leq K_h$ are known population bounds, then $|Z_j|\leq C_L:=2K_hK+K_h^2$.  Hoeffding's inequality gives, with probability at least $1-\delta$,
\begin{equation}
\mathcal L(\varphi,h_\zeta)\leq\widehat{\mathcal L}_{\mathrm{val}}
+C_L\sqrt{\frac{2\log(1/\delta)}{n}}.
\label{eq:moment-statistical-bound}
\end{equation}
The independent units are trajectories, not their tokens.  For the retention check, $0\leq(\widetilde A_\varphi-\overline A)^2\leq K^2$ and $0\leq\overline A^2\leq K^2$.  An upper confidence bound on $C-\kappa Q$ is its empirical trajectory average plus
$(1+\kappa)K^2\sqrt{\log(1/\delta)/(2n)}$.
Independent evaluation must follow candidate selection.  Repeated checks require fresh trajectory sets and a declared error allocation.

\begin{corollary}[What a drift certificate would require]
\label{cor:admission-budget}
If valid upper bounds $a,o$ on $\epsilon_{\mathrm{app}},\epsilon_{\mathrm{opt}}$ are also available, define
\begin{equation}
U_R=\max\{0,\widehat{\mathcal L}_{\mathrm{val}}+\epsilon_{\mathrm{stat}}+a+o\}.
\qquad U_R\leq\tau^2\ \Longrightarrow\ \E_d\|\Delta_{\mu,\varphi}\|_2\leq B\tau.
\label{eq:admitted-drift-certificate}
\end{equation}
\end{corollary}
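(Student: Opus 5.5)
The plan is to chain three facts: the exact identity between $R(\varphi)$ and the restricted objective value, the validation confidence event, and the Cauchy--Schwarz drift bound already proved for Theorem~\ref{thm:complementary-drift}.

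\textbf{Step 1: rewrite $R(\varphi)$ as a sum of certifiable terms.} Fix the frozen candidate $(\varphi,\zeta)$. By Lemma~\ref{lem:moment-duality} together with the decomposition in Eq.~\eqref{eq:mean-prediction-residual}, we have the identity
\begin{equation*}
R(\varphi)=\mathcal L(\varphi,h_\zeta)+\epsilon_{\mathrm{app}}+\epsilon_{\mathrm{opt}}.
\end{equation*}
This is an equality, so no slack is lost here.

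\textbf{Step 2: substitute upper bounds.} I work on the event where $\mathcal L(\varphi,h_\zeta)\leq\widehat{\mathcal L}_{\mathrm{val}}+\epsilon_{\mathrm{stat}}$. When $\epsilon_{\mathrm{stat}}$ is taken from Eq.~\eqref{eq:moment-statistical-bound}, this event has probability at least $1-\delta$. On it, using the hypotheses $\epsilon_{\mathrm{app}}\leq a$ and $\epsilon_{\mathrm{opt}}\leq o$, I get
\begin{equation*}
R(\varphi)\leq\widehat{\mathcal L}_{\mathrm{val}}+\epsilon_{\mathrm{stat}}+a+o.
\end{equation*}
Since $R(\varphi)=\E_d[\mu_{\widetilde A_\varphi}^2]\geq0$, it follows that $R(\varphi)\leq U_R$. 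The $\max\{0,\cdot\}$ in $U_R$ is therefore harmless: it only keeps the square root well defined and never loosens the bound.

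\textbf{Step 3: pass to the drift.} Apply the first inequality of Theorem~\ref{thm:complementary-drift}, which is Cauchy--Schwarz applied to $\E_d[|\mu_{\widetilde A_\varphi}|\,\|\mu_{v_\varphi}\|_2]$ with $\E_d\|\mu_{v_\varphi}\|_2^2\leq B^2$. This gives $\E_d\|\Delta_{\mu,\varphi}\|_2\leq B\sqrt{R(\varphi)}\leq B\sqrt{U_R}$. The square root is monotone, so the admission condition $U_R\leq\tau^2$ yields $\E_d\|\Delta_{\mu,\varphi}\|_2\leq B\tau$. The mask $I_\varphi$ and the factor $v_\varphi$ are those of the mixed signal, so $B$ must be the bound for this actual mask. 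Alternatively, one may take the common factor $B=C_\rho$ from Appendix~\ref{app:drift-proof}.

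\textbf{Main obstacle: the probabilistic bookkeeping.} The implication holds deterministically on the validation event, so the certificate is valid only with probability $1-\delta$. That validity requires three conditions:
\begin{itemize}
\item the candidate was frozen before the $n$ independent validation trajectories were drawn;
\item the bounds $K$ and $K_h$ behind $C_L$ are genuine population bounds;
\item the bounds $a$ and $o$ are themselves valid, either deterministically or on an event whose failure probability is added to $\delta$ by a union bound.
\end{itemize}
I would state the corollary explicitly as holding on the intersection of these events. I would also note that nothing here controls the covariance part of the gradient. The certificate bounds only the mean-induced component.
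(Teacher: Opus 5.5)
Your proposal is correct and follows essentially the paper's route. It uses the identity $R(\varphi)=\mathcal L(\varphi,h_\zeta)+\epsilon_{\mathrm{app}}+\epsilon_{\mathrm{opt}}$ from Lemma~\ref{lem:moment-duality} and Eq.~\eqref{eq:mean-prediction-residual}, then the validation event and the bounds $a,o$, and finally the Cauchy--Schwarz step from the proof of Theorem~\ref{thm:complementary-drift}. Your event bookkeeping, together with the implicit requirement $\tau\geq0$ for the last implication, matches the paper's remark that the certificate holds only conditionally on all these bounds.
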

The result is conditional on all these bounds.  Concentration of $Z_j$ alone does not upper-bound $R$, because a restricted or incompletely fitted auxiliary head provides a lower objective.  A low empirical saddle value, a small empirical head gradient, or twin disagreement supplies neither $a$ nor $o$.  The base algorithm therefore uses empirical signal retention and reports the moment diagnostics without claiming this certificate.  Bounded rewards do not automatically bound neural values or normalized GAE.  Clipping only validation outputs would certify a different signal.

\subsection{Policy sensitivity, clipping, and sampling mismatch}
\label{app:fisher-drift}
The coefficient $B$ measures amplification by the actor and its active clipping branches.  Fix one complete pre-action context and its frozen selected mixture, and write $\mu_{\widetilde A},\mu_v,\Delta_\mu,I$ for the corresponding weighted quantities in this subsection.  Let $\xi$ denote randomness following the current action.  Introduce
\begin{equation}
\nu_s(a,\xi)=\pi_\theta(a\mid s)q(\xi\mid s,a),
\qquad \omega(a,s)=\frac{q(a\mid s)}{\pi_{\mathrm{old}}(a\mid s)},
\label{eq:current-action-law}
\end{equation}
where $\omega$ is a sampling-mismatch ratio, not the mixture weight $w$.  Only the current action is reweighted.  The continuation law and both GAEs stay fixed.  Assume defined continuations and positive recorded probabilities on actor support, differentiable parameter-independent actor support, and finite $F(s)=\E_{\pi_\theta}\|u\|_2^2$ and $\Var_{\nu_s}(\omega I)$, where $u=\nabla_\theta\log\pi_\theta(a\mid s)$.
\begin{proposition}[Amplification by policy sensitivity and uneven weights]
\label{prop:fisher-amplification}
Under these fixed-context conditions,
\begin{equation}
\|\Delta_\mu(s)\|_2\leq|\mu_{\widetilde A}(s)|\sqrt{F(s)\Var_{\nu_s}(\omega I)}.
\label{eq:mismatch-fisher-bound}
\end{equation}
\end{proposition}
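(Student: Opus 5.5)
Since $\Delta_\mu(s)=\mu_{\widetilde A}(s)\,\mu_v(s)$, it suffices to show $\|\mu_v(s)\|_2\leq\sqrt{F(s)\Var_{\nu_s}(\omega I)}$ and then multiply by $|\mu_{\widetilde A}(s)|$. The plan is to rewrite $\mu_v(s)$ as an expectation under the current-action law $\nu_s$ of Eq.~\eqref{eq:current-action-law}, recognise it as a covariance between the weight $\omega I$ and the score $u$, and apply Cauchy--Schwarz.

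\textbf{Change of measure.} Start from $\mu_v(s)=\E_q[\rho I u\mid s]$ with $\rho=\pi_\theta/\pi_{\mathrm{old}}$. Factor the $q$-law of $(a,\xi)$ as $q(a\mid s)q(\xi\mid s,a)$. On actor support, $q(a\mid s)\rho=\pi_\theta(a\mid s)\,\omega(a,s)$, so
\begin{equation*}
\mu_v(s)=\E_{\nu_s}[\omega I\,u].
\end{equation*}
This is the continuation-level version of Eq.~\eqref{eq:effective-action-score}, with $I$ kept inside the expectation. Actions outside actor support contribute nothing, because $\rho=0$ there. This uses positive recorded probabilities and that $q$ covers the actor support where $\omega$ is evaluated.

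\textbf{Centering via the score identity.} Under $\nu_s$ the action is drawn from $\pi_\theta$, and $u$ depends only on $a$. Hence $\E_{\nu_s}[u]=\E_{\pi_\theta}[u\mid s]=0$ by the score identity of Proposition~\ref{prop:two-cancellation-conditions}, using differentiable parameter-independent support. Subtracting the constant $\E_{\nu_s}[\omega I]$ times this zero gives
\begin{equation*}
\mu_v(s)=\E_{\nu_s}\big[(\omega I-\E_{\nu_s}[\omega I])\,u\big].
\end{equation*}

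\textbf{Cauchy--Schwarz.} Take the norm inside the expectation and apply Cauchy--Schwarz to the scalar factor $|\omega I-\E_{\nu_s}[\omega I]|$ and to $\|u\|_2$:
\begin{equation*}
\|\mu_v(s)\|_2\leq\sqrt{\Var_{\nu_s}(\omega I)}\,\sqrt{\E_{\nu_s}\|u\|_2^2}.
\end{equation*}
Since $u$ depends only on $a$, $\E_{\nu_s}\|u\|_2^2=F(s)$. Multiplying by $|\mu_{\widetilde A}(s)|$ gives Eq.~\eqref{eq:mismatch-fisher-bound}. For vector $u$ the inequality follows coordinatewise, or directly from $\|\E[Xu]\|_2\leq\E[|X|\|u\|_2]$ followed by scalar Cauchy--Schwarz.

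\textbf{Expected obstacle.} The delicate step is the change of measure when $I$ depends on the continuation $\xi$ through the sampled advantage. The reweighting must touch only the action factor, with $q(\xi\mid s,a)$ left fixed. The boundary subgradient convention must also make $I$ a fixed measurable function of $(a,\xi)$. Integrability of $\omega I u$ follows from the finite $F(s)$ and $\Var_{\nu_s}(\omega I)$ via Cauchy--Schwarz, and this is what justifies the manipulations above. A useful consistency check: the bound vanishes when $q=\pi_{\mathrm{old}}$ and $I\equiv1$, because then $\omega I$ is constant, matching Proposition~\ref{prop:two-cancellation-conditions}.
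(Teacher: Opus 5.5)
Your proposal is correct and matches the paper's proof step for step: the action-only change of measure giving $\mu_v=\E_{\nu_s}[\omega I u]$, centering $\omega I$ using $\E_{\nu_s}u=0$, then $\|\mu_v\|_2\leq\E_{\nu_s}[|\omega I-\E_{\nu_s}\omega I|\,\|u\|_2]$ followed by Cauchy--Schwarz. One small point: the identity $q\rho=\pi_\theta\omega$ needs only positive recorded probabilities, not $q$-coverage of the actor support; the assumption the paper actually uses is that continuations are defined on the actor support.
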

\begin{proof}
Expand the expectation by first sampling the action and then its continuation:
\begin{align}
\mu_v(s)&=\sum_a q(a\mid s)\frac{\pi_\theta(a\mid s)}{\pi_{\mathrm{old}}(a\mid s)}
\E_{\xi\sim q(\cdot\mid s,a)}[Iu]\\
&=\sum_a\pi_\theta(a\mid s)\omega(a,s)\E_{\xi\sim q(\cdot\mid s,a)}[Iu]
=\E_{\nu_s}[\omega Iu].
\label{eq:mask-score-covariance}
\end{align}
This changes only the action weighting.  Since $u$ depends on the current action, not on its future continuation, $\E_{\nu_s}u=\sum_a\nabla_\theta\pi_\theta(a\mid s)=0$.  Let $z=\omega I$ and $\bar z=\E_{\nu_s}z$.  Subtracting this constant weight therefore leaves $\mu_v$ unchanged:
\begin{equation}
\mu_v=\E_{\nu_s}[(z-\bar z)u],\qquad
\|\mu_v\|_2\leq\E_{\nu_s}[|z-\bar z|\|u\|_2]
\leq\sqrt{\Var_{\nu_s}(z)F(s)}.
\end{equation}
The last step is Cauchy--Schwarz.  Multiplying by $|\mu_{\widetilde A}|$ proves the proposition.  The offset $\mu_{\widetilde A}$ itself is still evaluated under $q$.  It was not redefined by this change of measure.
\end{proof}

For exact recorded probabilities on full actor support, $\omega=1$.  Away from clipping boundaries, $I$ is binary.  Write $p_c=\Pr_{\nu_s}(I=0)$.
\begin{corollary}[Selective clipping is the exact-ratio multiplier]
\label{cor:fisher-clipping}
With exact recorded probabilities and the binary mask above,
\begin{equation}
\|\Delta_\mu(s)\|_2\leq|\mu_{\widetilde A}(s)|\sqrt{F(s)p_c(s)(1-p_c(s))}.
\label{eq:main-fisher-factor}
\end{equation}
\end{corollary}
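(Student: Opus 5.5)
The plan is to specialize Proposition~\ref{prop:fisher-amplification} to the exact-ratio case and then compute the variance of the resulting binary weight. With exact recorded probabilities on full actor support, $\omega(a,s)=q(a\mid s)/\pi_{\mathrm{old}}(a\mid s)=1$ wherever $\pi_\theta$ puts mass. The weight $z=\omega I$ therefore reduces to the mask $I$ itself, under the same law $\nu_s$.

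Substituting into Eq.~\eqref{eq:mismatch-fisher-bound} gives
\[
\|\Delta_\mu(s)\|_2\leq|\mu_{\widetilde A}(s)|\sqrt{F(s)\Var_{\nu_s}(I)}.
\]
So it remains only to evaluate $\Var_{\nu_s}(I)$.

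Away from clipping boundaries, $I\in\{0,1\}$ is a Bernoulli variable under $\nu_s$ with success probability $1-p_c(s)$, where $p_c=\Pr_{\nu_s}(I=0)$. Hence $\Var_{\nu_s}(I)=p_c(1-p_c)$, and inserting this yields Eq.~\eqref{eq:main-fisher-factor}.

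The only point needing care is which measure the clipping probability uses. The statement defines $p_c$ under $\nu_s$, the current-action law $\pi_\theta(a\mid s)$ composed with the continuation law $q(\xi\mid s,a)$, and not under $q$. This is consistent with the proposition, so no change of measure is required.

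I will also note why the boundary cases do not break the argument. On a zero-probability boundary set, the fixed subgradient convention leaves the variance unchanged. If boundary events carry positive mass with a fractional coefficient, the binary assumption fails, and one simply keeps the general $\Var_{\nu_s}(I)$; the corollary's hypothesis ("binary mask") excludes this case.

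As a sanity check, when nothing is clipped ($p_c=0$) or everything is clipped ($p_c=1$), the bound gives $\Delta_\mu=0$. This matches the score identity $\E_{\nu_s}u=0$ used in the proposition's proof.
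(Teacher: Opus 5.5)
Your proposal is correct and matches the paper's proof: set $\omega=1$ in Proposition~\ref{prop:fisher-amplification} so the weight reduces to the binary mask $I$, then use $\Var_{\nu_s}(I)=(1-p_c)-(1-p_c)^2=p_c(1-p_c)$. Your remarks on the measure $\nu_s$ and on boundary cases are consistent with the paper's own discussion.
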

\begin{proof}
A binary variable with mean $1-p_c$ has second moment $1-p_c$, so its variance is $(1-p_c)-(1-p_c)^2=p_c(1-p_c)$.  Substitute $\omega=1$ into Proposition~\ref{prop:fisher-amplification}.
\end{proof}
Applying Cauchy--Schwarz across the occupation measure also gives
\begin{equation}
\E_d\|\Delta_\mu\|_2\leq\sqrt{R(\varphi)\E_d[Fp_c(1-p_c)]}.
\label{eq:fisher-drift-bound}
\end{equation}
The three factors are offset size, policy sensitivity, and clipping imbalance.  All branches active means score cancellation.  All inactive means no update.  Selectively removing directions can break the balance.  At a boundary with $I\in[0,1]$, use $p_c=1-\E_{\nu_s}I$ and $\Var(I)\leq\E I-(\E I)^2=p_c(1-p_c)$.  Here $p_c$ is a weight fraction rather than an event probability.  In either case it uses current-action weighting, not the ordinary rollout clip fraction.

\paragraph{Tightness in the two-action example.}
At $\pi_\theta(a_+)=0.7$, the Bernoulli scores are $u_+=0.3$ and $u_-=-0.7$.  Only the positive-reward action remains active in Eq.~\eqref{eq:harmful-ppo-example}, so
\begin{equation}
F=0.7(0.3)^2+0.3(0.7)^2=0.21,
\quad p_c=0.3,
\quad |\Delta_\mu|=0.21=|\mu_{\widetilde A}|\sqrt{Fp_c(1-p_c)}.
\label{eq:fisher-toy-tightness}
\end{equation}
The example reaches equality because the surviving score direction aligns with the weight imbalance.  Calibration changes the offset and can also change which branches survive.  Both sides of the bound must use that selected mixture.  Large negative-advantage ratios remain active, so clipping alone supplies no uniform bound on $C_\rho$ or the mismatch factor.  A guarantee over the entire update window requires a common bound on these factors.  With no clipping and exact ratios, score cancellation already gives $\mu_v=0$.

\newpage
\section{Scope of Conditional Calibration}
\label{app:calibration-scope}
The preceding results concern the mean-induced term of the actual mixed GAE within a frozen window.  We now examine what changes when the target is replaced by TD, scoring information is withheld, or the prefix distribution moves, before distinguishing neighboring actor--critic methods.  These comparisons specify which assumptions must remain fixed for the guarantees to apply.

\subsection{Why one-step TD is not substituted for GAE}
\label{app:td-moment}
Let $b_i(s)=\E_q[\delta_{i,t}\mid s_t=s]$ for a frozen critic.  A single TD residual is an unbiased observation of
\begin{equation}
b_i(s)=(T^qV_i-V_i)(s),
\end{equation}
where $T^q$ includes reward and terminal masking.  This is not generally the conditional mean of the GAE used by the actor.  With a fixed trace parameter $\lambda$, a Markov context including time and reward history, and zero terminal bootstrap, define the raw mean $m_i(s)=\E_q[A_{i,t}\mid s_t=s]$.  Iterated expectation gives
\begin{equation}
m_i(s)=\E_q[\delta_{i,t}+\gamma\lambda(1-d_t)m_i(s_{t+1})\mid s_t=s],
\qquad\mu_{\widetilde A_i}(s)=\frac{m_i(s)-\mu_{\mathrm{pilot}}}{\sigma_{\mathrm{pilot}}}.
\label{eq:td-mean-bellman}
\end{equation}
Replacing $m_i$ by a learned bootstrap gives an unbiased sample of that model's Bellman target, not of the true $m_i$ until its continuation mean is correct.  For a fixed $w(s,a)$, the corresponding raw mixed mean is
\begin{equation}
\E_q\!\left[w\delta_1+(1-w)\delta_2+
\gamma\lambda(1-d_t)\{w m_1(s_{t+1})+(1-w)m_2(s_{t+1})\}\mid s\right].
\label{eq:td-action-weighted-mean}
\end{equation}
The next-state terms retain the current weight.  Substituting the next state's mixed mean is generally incorrect.  The actual implementation uses $\lambda(L)$ selected by realized trajectory length.  It can correlate with future residuals and cannot be pulled out of the conditional expectation to obtain Eq.~\eqref{eq:td-mean-bellman} unchanged.  The moment objective instead uses the actual sampled GAE and is valid with that length dependence.  TD-based mean learning is a separate fixed-trace variant, not an unbiased shortcut asserted for the present algorithm.

\subsection{Training-time scoring information and visible-state limits}
\label{app:information-limit}
Let $W$ denote the future-text scoring context and restore $V^q(s,W)=\E_q[G\mid s,W]$.  Assume square-integrable conditional returns, a fixed $\sigma_{\mathrm{pilot}}>0$, and a square-integrable baseline $b(s)$ restricted to visible state.
\begin{lemma}[Hidden scoring text leaves a variance floor]
\label{lem:visible-floor}
Under these information and moment conditions,
\begin{equation}
\frac{\E_{s,W}[(V^q(s,W)-b(s))^2]}{\sigma_{\mathrm{pilot}}^2}
\geq\frac{\E_s\Var_{W\mid s}(V^q(s,W))}{\sigma_{\mathrm{pilot}}^2}.
\label{eq:visible-state-floor}
\end{equation}
\end{lemma}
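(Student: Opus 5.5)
The plan is to derive the floor from the standard orthogonal decomposition of squared error around a conditional mean, as in Lemma~\ref{lem:return-risk} and Eq.~\eqref{eq:visible-regression-target}. Write $\bar V^q(s)=\E_{W\mid s}[V^q(s,W)]$. This quantity is square integrable by conditional Jensen, since $V^q(s,W)$ is square integrable. Then split
\begin{equation}
V^q(s,W)-b(s)=\bigl[V^q(s,W)-\bar V^q(s)\bigr]+\bigl[\bar V^q(s)-b(s)\bigr].
\end{equation}

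Expanding the square and conditioning on $s$ gives three terms. The cross term is $2(\bar V^q(s)-b(s))\,\E_{W\mid s}[V^q(s,W)-\bar V^q(s)]$. It vanishes because $b$ depends only on the visible state, which is the information restriction, and because the bracketed conditional expectation is zero by the definition of $\bar V^q$. Integrability of this product follows from Cauchy--Schwarz, since both factors are in $L^2$. The first squared term has conditional expectation $\Var_{W\mid s}(V^q(s,W))$. The second is $(\bar V^q(s)-b(s))^2\geq0$.

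Averaging over $s$ gives
\begin{equation}
\E_{s,W}[(V^q-b)^2]=\E_s\Var_{W\mid s}(V^q)+\E_s[(\bar V^q-b)^2].
\end{equation}
Dropping the nonnegative second term and dividing by the fixed $\sigma_{\mathrm{pilot}}^2>0$ yields Eq.~\eqref{eq:visible-state-floor}. Equality holds exactly when $b=\bar V^q$ almost surely.

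The only delicate step is justifying the cross-term cancellation. Specifically, $b$ must be measurable with respect to the visible state alone, which is what separates this situation from the paper's shorthand in which conditioning on $s$ also fixes $W$. I would state explicitly that here conditioning on $s$ does \emph{not} fix $W$. With that convention, the tower property applies directly.
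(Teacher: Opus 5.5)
Your proposal is correct and follows the paper's proof essentially step for step: the same split around $\bar V^q(s)=\E_{W\mid s}[V^q(s,W)]$, the cross term vanishing because $b$ depends only on the visible state, and dropping the nonnegative square before averaging over $s$ and dividing by $\sigma_{\mathrm{pilot}}^2$. Your integrability justification and the equality case are correct refinements that the paper leaves implicit.
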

\begin{proof}
Let $\bar v(s)=\E_{W\mid s}V^q(s,W)$ and write $V^q-b=(V^q-\bar v)+(\bar v-b)$.  After squaring, the cross term has conditional mean zero since $\E_{W\mid s}(V^q-\bar v)=0$.  The remaining terms are $\Var_{W\mid s}(V^q)+(\bar v-b)^2$.  Drop the nonnegative square, average over $s$, and divide by $\sigma_{\mathrm{pilot}}^2$.
\end{proof}
This floor applies directly to a state-only Monte Carlo baseline.  An action-dependent $b_\varphi(s,a)$ is a different object and must be analyzed through the actual mixed conditional mean.  The weight in Section~\ref{sec:twin-calibration} sees visible prefix/action features.  The auxiliary mean function receives the complete conditioning information, including $W$, through a training-only input path.  Neither the actor nor the return critics receive $W$.  Restricting the auxiliary input to visible features instead targets an average over scoring items and leaves the information gap in Eq.~\eqref{eq:auxiliary-information-gap}.  Even full context does not ensure representability by a finite auxiliary model, or existence of a feasible gate with zero offset.

\subsection{Why qualification must follow the prefix distribution}
\label{app:prefix-transfer}
The learned gate defines a frozen rule on new prefixes, but its fitted occupation risk need not transfer to a different prefix distribution.  The following supplementary result bounds this distribution effect while keeping the advantage-generating continuation law fixed.  The occupancy argument follows trust-region analysis \citep{schulman2015trpo}.  Consider fixed horizon $T$, identical initial text distributions and transitions, and policies $q$ and $\pi$.  Let $d_t^q,d_t^\pi$ denote pre-action distributions, including the scoring item as an analytical context component.  Keep both heads, critics, scorer, normalizer, and the continuation law defining $\mu_{\widetilde A_t}$ fixed.  Assume this function is defined on both reachable supports and $|\mu_{\widetilde A_t}(s)|\leq K$.

Define the time-averaged risk and the local action-distribution change by
\begin{equation}
\mathcal L_{d^\pi}(\mu_{\widetilde A})=\frac1T\sum_{t=1}^T\E_{d_t^\pi}[\mu_{\widetilde A_t}(s)^2],
\qquad
\delta_j=\E_{d_j^q}D_{\mathrm{TV}}\!\left(q(\cdot\mid s),\pi(\cdot\mid s)\right),
\label{eq:prefix-risk-definition}
\end{equation}
where $D_{\mathrm{TV}}(p,q)=\tfrac12\|p-q\|_1$.  It measures how much probability mass must be redistributed to turn one distribution into the other.
\begin{proposition}[Early action changes affect more later prefixes]
\label{prop:prefix-transfer}
Under the fixed-horizon and frozen-score conditions above,
\begin{equation}
\mathcal L_{d^\pi}(\mu_{\widetilde A})\leq\mathcal L_{d^q}(\mu_{\widetilde A})
+\frac{K^2}{T}\sum_{j=1}^{T-1}(T-j)\delta_j.
\label{eq:prefix-calibration-transfer}
\end{equation}
\end{proposition}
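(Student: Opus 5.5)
We prove the bound one time index at a time. For each $t$, write $f_t(s)=\mu_{\widetilde A_t}(s)^2$. This function is fixed, because the continuation law, heads, critics, scorer and normalizer are all frozen, and it satisfies $0\le f_t\le K^2$. The quantity to control is therefore
\[
\E_{d_t^\pi}f_t-\E_{d_t^q}f_t\le K^2\,D_{\mathrm{TV}}(d_t^q,d_t^\pi).
\]
This holds because $f_t$ takes values in $[0,K^2]$, so its expectation gap is at most its range times the total-variation distance. The boundedness assumption enters only here, which is why the factor is $K^2$ rather than $2K^2$.

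Next we bound $D_{\mathrm{TV}}(d_t^q,d_t^\pi)$ by the accumulated action changes. The standard coupling or telescoping argument gives
\[
D_{\mathrm{TV}}(d_t^q,d_t^\pi)\le\sum_{j=1}^{t-1}\delta_j .
\]
To see this, define hybrid prefix laws $d_t^{(j)}$ that follow $\pi$ for actions $1,\dots,j$ and $q$ for actions $j+1,\dots,t-1$. The initial text distribution and transitions are shared, so $d_t^{(0)}=d_t^q$ and $d_t^{(t-1)}=d_t^\pi$. Consecutive hybrids differ only in the action at step $j$.

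The prefix distribution before step $j$ in the hybrid $d^{(j-1)}$ is not $d_j^q$, however, so the step-$j$ discrepancy would naively be weighted by a mixed-policy prefix law. To avoid this, order the hybrids the other way: use $q$ for the first $j$ actions and $\pi$ afterwards. Then the prefix law at step $j$ is exactly $d_j^q$. Applying one step of the data-processing inequality gives
\[
D_{\mathrm{TV}}\bigl(\text{hybrid}_{j-1},\text{hybrid}_j\bigr)\le \E_{d_j^q}D_{\mathrm{TV}}\bigl(q(\cdot\mid s),\pi(\cdot\mid s)\bigr)=\delta_j .
\]
The justification is that the two joint laws agree on the prefix before $a_j$ and differ only in the conditional law of $a_j$. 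The later kernel, whether policy or transition, is common to both and cannot increase total variation. Summing over $j$ with the triangle inequality yields the displayed bound.

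Finally we average over $t$. Summing $K^2\sum_{j<t}\delta_j$ over $t=1,\dots,T$ and exchanging the order of summation, each $\delta_j$ is counted once for every $t\in\{j+1,\dots,T\}$, that is, $T-j$ times. Dividing by $T$ gives exactly the term $\frac{K^2}{T}\sum_{j=1}^{T-1}(T-j)\delta_j$ in \eqref{eq:prefix-calibration-transfer}, and adding $\mathcal L_{d^q}$ completes the bound.

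The only step needing care is the telescoping order in the second paragraph, since it is what makes each $\delta_j$ be evaluated under $d_j^q$. Two further points should be checked. First, including the scoring item in the context does not break the shared-kernel argument, because the scoring item is drawn with the text and is independent of the actions. Second, the hypotheses that $\mu_{\widetilde A_t}$ is defined on both reachable supports and satisfies $|\mu_{\widetilde A_t}|\le K$ on both are exactly what makes $f_t$ a bounded test function for both prefix laws.
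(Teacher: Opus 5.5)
Your proof is correct and follows the paper's argument. Your reversed hybrid telescoping ($q$ before step $j$, $\pi$ after) is the unrolled form of the paper's one-step recursion $d_{t+1}^\pi-d_{t+1}^q=(d_t^\pi-d_t^q)P_\pi+d_t^q(P_\pi-P_q)$, which likewise charges each $\delta_j$ under $d_j^q$ and uses that a common kernel cannot increase total variation, while your range bound $\E_{d_t^\pi}f_t-\E_{d_t^q}f_t\le K^2D_{\mathrm{TV}}(d_t^q,d_t^\pi)$ and the $(T-j)$ counting match the paper's Steps~2 and~3.
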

\begin{proof}
\emph{Step 1: separate old state differences from new action differences.}  Let $P_q,P_\pi$ be the next-state transition kernels.  Add and subtract the distribution obtained by applying $\pi$ to the old states:
\begin{equation}
d_{t+1}^\pi-d_{t+1}^q=(d_t^\pi-d_t^q)P_\pi+d_t^q(P_\pi-P_q).
\end{equation}
A common stochastic kernel cannot increase total variation: the triangle inequality and row sums of one give $\|(p-q)P\|_1\leq\sum_x|p(x)-q(x)|\sum_yP(y\mid x)=\|p-q\|_1$.  Applying the same argument to the action-to-state transition bounds the second term by the average action difference $\delta_t$.  Therefore
\begin{equation}
D_{\mathrm{TV}}(d_{t+1}^\pi,d_{t+1}^q)
\leq D_{\mathrm{TV}}(d_t^\pi,d_t^q)+\delta_t
\leq\sum_{j=1}^t\delta_j,
\end{equation}
since both policies start from the same initial distribution.

\emph{Step 2: turn state shift into score shift.}  For any function $f$ in $[0,K^2]$, only the positive part of $p-q$ can increase its expectation, so $\E_pf-\E_qf\leq K^2\sum_{x:p(x)>q(x)}(p(x)-q(x))=K^2D_{\mathrm{TV}}(p,q)$.  Apply this to $f=\mu_{\widetilde A_t}^2$.

\emph{Step 3: count how many times each action difference appears.}  Averaging over $t$ gives a double sum.  The change at position $j$ appears for $t=j+1,\ldots,T$, exactly $T-j$ times:
\begin{equation}
\frac1T\sum_{t=1}^T K^2\sum_{j<t}\delta_j
=\frac{K^2}{T}\sum_{j=1}^{T-1}(T-j)\delta_j.
\end{equation}
This is the claimed bound.
\end{proof}
An early action can redirect more of the suffix, whereas the final action changes no later pre-action state.  This counts possible propagation of state shift, not the reward importance of the last action.  Pinsker's inequality also permits
\begin{equation}
\delta_j\leq\sqrt{\tfrac12\E_{d_j^q}
D_{\mathrm{KL}}\!\left(q(\cdot\mid s)\,\|\,\pi(\cdot\mid s)\right)}.
\label{eq:prefix-kl-control}
\end{equation}
This relates bounded policy reuse to calibration coverage without changing the PPO objective.  Both risks in Eq.~\eqref{eq:prefix-calibration-transfer} evaluate the same frozen $\mu_{\widetilde A_t}$, which still uses $q$ continuations.  Updating the continuation law, scorer, or normalizers changes that function and requires fresh validation.  The displayed weights use fixed-length, uniform-time averaging.  Variable-length valid-token normalization needs the corresponding occupancy weights rather than this formula unchanged.

\subsection{Relation to Neighboring Actor--Critic Methods}
\label{app:algorithmic-boundaries}

\paragraph{SAO.}
Single-Rollout Asynchronous Optimization targets asynchronous agentic post-training, where each prompt produces one visible action trajectory and the learner may update while newer trajectories are still being generated \citep{hou2026sao}.  Its central problems are policy lag and the variance created by removing group-relative rollouts.  SAO therefore forms token-level importance ratios against the rollout policy, masks ratios outside a double-sided interval, trains a single value model more frequently than the actor, freezes selected value-model parameters, and skips externally supplied observation tokens when propagating GAE.

\tfour{} operates on unlabeled text: hidden thought tokens are actions and teacher-forced continuation likelihood defines reward.  There are no external observation steps to skip.  Both methods use learned values and a single trajectory per selected item.  Their learning targets and data schedules differ.  \tfour{} separately qualifies two return critics, learns an action-dependent advantage mixture through a conditional-moment saddle objective, and freezes it for fresh actor trajectories.  The scalar auxiliary mean function is distinct from a return critic.  SAO and \tfour{} both use the length-adaptive GAE trace from VAPO \citep{yue2025vapo}.  That component is not claimed as new.

\paragraph{TD3 and SAC.}
TD3 and SAC address continuous-control, off-policy actor--critic learning with replay buffers and action-value functions \citep{fujimoto2018addressing,haarnoja2018soft}.  TD3 reduces maximization bias with a minimum over bootstrapped Q targets and delays policy updates.  SAC optimizes an entropy-regularized return through soft Bellman backups.  \tfour{} instead uses locally on-policy collection with bounded within-iteration reuse, state-value critics, observed shaped returns, and a clipped likelihood-ratio objective over discrete latent tokens.  Its action-dependent convex weights are learned through a conditional-moment objective, not a minimum over Bellman targets.  The shared motivation is to manage function-approximation error.  The mechanisms and data requirements differ.  PPO, GAE, \qstar{}, and SAO are the more direct comparators.

\newpage
\section{Training Protocol and Computational Cost}
\label{app:data-contracts}
The analysis assumes a fixed sampling law, independent training and validation data, and detached advantages.  The procedure below maintains these conditions within each training window and includes all sampling stages in the cost comparison.

\subsection{Reward scale and checkpoint construction}
\label{app:reward-details}
The reward normalization uses an exponential second moment rather than subtracting a batch mean.  Restoring the window index $n$, let
\begin{equation}
M_{r,n}=\beta M_{r,n-1}+(1-\beta)\frac1{|\mathcal B_n|}
\sum_{\xi\in\mathcal B_n}\Psi_T(\xi)^2,
\qquad \sigma_{r,n}=\sqrt{M_{r,n}+\varepsilon},
\label{eq:reward-scale}
\end{equation}
where $0\leq\beta<1$ and $\varepsilon>0$.  In the calibrated protocol, $\mathcal B_n$ contains historical or disjoint prior trajectories, not head-learning, validation, or actor trajectories from the current window.  The resulting $\sigma_{r,n}$, abbreviated $\sigma_r$ in the main text, remains fixed throughout that window.

Let $0=\tau_0<\tau_1<\cdots<\tau_K=T$ be the scored checkpoints.  Compute $\Psi_{\tau_k}=\ell_0-\ell_{\tau_k}$ and $\Phi_{\tau_k}=\clip(\Psi_{\tau_k}/\sigma_r,-c_r,c_r)$, with $\Phi_0=0$.  Set $\Phi_t=\Phi_{\tau_{k-1}}$ for $\tau_{k-1}<t<\tau_k$.  Equation~\eqref{eq:dense-reward} is then equivalent to
\begin{equation}
r_t=\sum_{k=1}^K\one\{t=\tau_k\}
(\Phi_{\tau_k}-\Phi_{\tau_{k-1}}).
\label{eq:checkpoint-reward}
\end{equation}
The current length-12 configuration uses checkpoints $4$, $8$, and $12$.  In a variable-length rollout, the realized terminal prefix is included as the final checkpoint.  One checkpoint gives a terminal-only shaped reward.  It equals raw utility only without scaling and clipping.

\paragraph{Return conservation within a window.}
\label{app:potential-proof}
Fix one scorer and reward scale.  Include the terminal prefix among checkpoints $0=\tau_0<\cdots<\tau_K=T$, set $\Phi_0=0$, and carry the last potential forward between checkpoints as in Eq.~\eqref{eq:dense-reward}.
\begin{proposition}[Checkpoint decomposition]
\label{prop:checkpoint-decomposition}
Every thought trajectory satisfies
\begin{equation}
\sum_{t=1}^{T}r_t=\Phi_T.
\label{eq:telescoping}
\end{equation}
\end{proposition}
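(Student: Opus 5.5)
The plan is to prove the checkpoint decomposition by telescoping, using the piecewise-constant extension of the potential. By Eq.~\eqref{eq:dense-reward}, every token reward is the increment $r_t=\Phi_t-\Phi_{t-1}$ of a single potential sequence $(\Phi_t)_{t=0}^T$. This sequence is defined on all positions once we set $\Phi_0=0$ and $\Phi_t=\Phi_{\tau_{k-1}}$ for $\tau_{k-1}<t<\tau_k$. Summing the increments over $t=1,\dots,T$ collapses to $\Phi_T-\Phi_0=\Phi_T$. This gives Eq.~\eqref{eq:telescoping} immediately, since the scorer and reward scale are fixed within the window, so every $\Phi_t$ is a well-defined deterministic function of the trajectory.

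To tie this to the checkpoint form in Eq.~\eqref{eq:checkpoint-reward}, I would also verify the equivalence directly. Take a non-checkpoint position $t$ with $\tau_{k-1}<t<\tau_k$. Both $\Phi_t$ and $\Phi_{t-1}$ equal $\Phi_{\tau_{k-1}}$: when $t-1=\tau_{k-1}$ this is by definition, and otherwise it follows from carry-forward. Hence $r_t=0$. At a checkpoint $t=\tau_k$, the previous position satisfies $\Phi_{\tau_k-1}=\Phi_{\tau_{k-1}}$ by the same argument, so $r_{\tau_k}=\Phi_{\tau_k}-\Phi_{\tau_{k-1}}$. The sum therefore reduces to $\sum_{k=1}^K(\Phi_{\tau_k}-\Phi_{\tau_{k-1}})=\Phi_{\tau_K}-\Phi_{\tau_0}=\Phi_T$, because $\tau_K=T$ is included as the terminal checkpoint and $\Phi_{\tau_0}=\Phi_0=0$.

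Finally, I would note that $\Phi_T=\clip(\Psi_T/\sigma_r,-c_r,c_r)$ with $\Psi_T=\ell_0-\ell_T=\mathcal U$, which recovers the statement in the main text. For variable-length rollouts, padding after termination contributes zero reward, so the identity is unchanged.

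There is no real obstacle here. The only point needing care is the convention that the terminal prefix is a checkpoint. Without it, the final potential would be carried forward from the last scored checkpoint and would not equal the clipped utility of the full thought. I would state this dependence explicitly rather than treat it as automatic.
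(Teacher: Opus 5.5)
Your proposal is correct and matches the paper's argument: the paper also notes that unscored steps contribute zero reward, then telescopes the checkpoint increments $(\Phi_{\tau_k}-\Phi_{\tau_{k-1}})$ to $\Phi_T-\Phi_0=\Phi_T$, which is exactly your second paragraph. Your direct telescoping of $r_t=\Phi_t-\Phi_{t-1}$ over all $t$ is a valid shortcut, and you are right that the terminal-checkpoint convention is needed only to identify $\Phi_T$ with the clipped utility, not for the identity itself.
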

\begin{proof}
Unscored steps contribute zero.  At scored steps, write the sum without cancelling terms first:
\begin{equation}
\sum_t r_t=(\Phi_{\tau_1}-\Phi_0)+(\Phi_{\tau_2}-\Phi_{\tau_1})+\cdots+(\Phi_T-\Phi_{\tau_{K-1}}).
\end{equation}
Every intermediate potential appears once with a plus sign and once with a minus sign.  Only $\Phi_T-\Phi_0$ remains, and $\Phi_0=0$.
\end{proof}
With the raw potential, $\Phi_T=\ell_0-\ell_T=\mathcal U$: intermediate scoring changes where credit is assigned, not its total.  A common positive scale preserves ordering, while clipping merges utilities beyond its boundary.  For example, any two thoughts with $\Psi_T>c_r\sigma_r$ both receive terminal potential $c_r$.  Classical potential shaping has the form $r'(s,a,s')=r(s,a,s')+\gamma\Phi(s')-\Phi(s)$ and preserves optimal policies under fixed potentials and matching discount assumptions \citep{ng1999policy}.  Here the potential depends on the current language model and the scale in Eq.~\eqref{eq:reward-scale}.  We therefore use only the within-iteration telescoping identity.  We do not claim stationary-MDP policy invariance across model updates.

\subsection{Value regression and snapshot references}
\label{app:value-fitting}
Here we restore version and parameter indices omitted in the main text.  For critic $i\in\{1,2\}$, let $V_{i,\psi_i}$ be the trainable model and $V_{i,k}$ its frozen pre-fit prediction in window $k$.  With the return $G_t$ in Eq.~\eqref{eq:critic-return}, the clipped regression objective is
\begin{equation}
\bar V_{i,\psi_i}(s_t)=V_{i,k}(s_t)+
\clip\!\left(V_{i,\psi_i}(s_t)-V_{i,k}(s_t),-\epsilon_v,\epsilon_v\right),
\end{equation}
\begin{equation}
\mathcal L_{V_i}(\psi_i)=\frac12\E_{s_t\sim\mathcal D_k^V}
\!\left[\max\!\left\{(V_{i,\psi_i}(s_t)-G_t)^2,
(\bar V_{i,\psi_i}(s_t)-G_t)^2\right\}\right].
\label{eq:value-loss}
\end{equation}
Critic-fitting outcomes are excluded from predictive holdout, pilot, head-learning, head-validation, and actor trajectory sets.  Split complete text items or parent trajectories before extracting tokens, so correlated tokens are not treated as independent validation samples.  Each selected text-position item contributes one ordinary trajectory in its assigned set, without requiring a bank of repeated-prefix continuations.  Return targets retain their absolute shaped scale.  The run configuration specifies whether full-critic warmup uses the clipped value loss above or ordinary half-squared error.

\subsection{Advantage snapshots and masked GAE}
\label{app:advantage-snapshots}
Let $V_{i,k}^{\mathrm{adv}}$ denote the frozen snapshot used for actor advantages, distinct from the pre-fit value-clipping reference $V_{i,k}$.  This is the post-regression model, frozen before pilot sampling and weight calibration and retained unchanged through head fitting and validation.  The main text writes this snapshot as $V_i$ when constructing actor advantages.  With terminal indicator $d_t$ and realized thought length $L$, the full recurrence is
\begin{align}
\delta_{i,t}&=r_t+\gamma(1-d_t)V_{i,k}^{\mathrm{adv}}(s_{t+1})-V_{i,k}^{\mathrm{adv}}(s_t),\\
A_{i,t}&=\delta_{i,t}+\gamma\lambda(L)(1-d_t)A_{i,t+1},
\qquad \lambda(L)=1-\frac1{\alpha L}.
\label{eq:masked-gae}
\end{align}
The terminal bootstrap and subsequent advantage are zero, and $\alpha$ is chosen so $0\leq\lambda(L)\leq1$.  We use $\gamma=1$ for the actor and full-return targets for critic regression (equivalently $\lambda_V=1$).  The simplified Eq.~\eqref{eq:gae} suppresses $d_t$ under these terminal conventions.

Within window $k$, $\pi_{\mathrm{old}}=\pi_{\theta_k}$ is the recorded denominator, while $q$ is the actual sampling law.  On the valid thought tokens from an independent pilot trajectory set, pool the two raw GAE outputs with equal critic weight.  Set $\mu_{\mathrm{pilot}}$ to their pooled empirical mean and $\sigma_{\mathrm{pilot}}=\sqrt{\widehat\sigma_{\mathrm{pilot}}^2+\varepsilon_A}$ with $\varepsilon_A>0$.  Both channels use these same constants, which remain frozen during weight learning, validation, and PPO.  They are not recomputed after selecting weights or within an actor batch.  Other predetermined common pilot normalizers define a variant that must be reported separately.

\subsection{Trajectory collection and actor replay}
\label{app:restart-contract}
For each selected text-position item, sample one complete thought under the frozen rollout law $q$.  Assign complete items or parent trajectories to critic fitting, predictive holdout, pilot normalization, head learning, head validation, and actor optimization before forming token batches.  Head-learning and actor items need not share prefixes.  All tokens in an actor trajectory are eligible for PPO, with only padding and invalid actions masked.  The complete pre-action context $c_t$ records visible text/thought prefixes, scoring text, time, remaining token allowance, and previous checkpoint potential.  To state the unbiased single-trajectory estimator without a random denominator, fix a maximum thought horizon $T_{\max}$ and let $m_t$ indicate that action $t$ is valid.  Define the finite occupation measure by
\begin{equation}
\E_{d,q}[f]:=\E_{\tau\sim q}\!\left[\frac1{T_{\max}}\sum_{t=1}^{T_{\max}}m_t f(c_t,a_t,\xi_t)\right].
\label{eq:trajectory-occupation}
\end{equation}
For state-only $f$ use $\E_d$.  The validity of position $t$ is known from its prefix, so conditioning on that position preserves the $q$ continuation law.  The measure need not have unit mass, but all risk, dual, and retention terms use the same fixed scale.  A trajectory sum divided by $T_{\max}$ is an unbiased observation.  Population normalization per valid token differs by a fixed positive factor.  Dividing instead by a minibatch's random valid-token count does not have this exact finite-sample unbiasedness.  The auxiliary objective uses the fixed denominator above.  Actor replay can retain its standard valid-token mean because the conditional decomposition is unchanged, but aggregate bounds must specify the occupation normalization.

Compute both GAEs from the same realized trajectory length and reward schedule.  The weight head reads frozen pre-action features and the sampled action, not the realized suffix, return, or GAE.  The auxiliary head reads the full pre-action context, with a separate training-only path for scoring information and no sampled-action input.  GAE labels, both critic encoders, reward scales, and normalizers are detached during head learning.  Optimize Eq.~\eqref{eq:moment-lagrangian}.  Finite replay optimizes an empirical objective and does not supply fresh population gradients at every reuse.

After head fitting, freeze $(\varphi,\zeta)$ before independent validation.  The base retention test checks $\widehat C\leq\kappa\widehat Q$ with the trajectory aggregation above.  A failure falls back to $w=1/2$.  $\widehat Q=0$ is reported as a no-reference-signal diagnostic.  An empirical pass is not a population certificate.  The bounded version in Appendix~\ref{app:moment-validation} can enforce a population retention bound when its assumptions hold.  A second selected candidate requires fresh validation rather than reuse of a rejected validation set as new evidence.

Only after the selected gate is frozen are fresh actor trajectories generated.  Store each valid action, its old log probability, both GAEs, the frozen weight, the detached mixed normalized advantage, and the valid-token mask.  Metadata identify policy, scorer, critic/feature, normalizer, weight, auxiliary, and validation snapshots.  No weight or feature encoder is updated inside the actor PPO window.  Auxiliary language-model and prediction-gate objectives retain separate backward passes.  At window closure refresh the rollout law and collect new data for the changed target.  Single-rollout means one thought per selected item, not one item or trajectory for the entire training process.

\subsection{The prediction-mixture gate}
\label{app:gate-behavior}
The prediction mixture retains its base-language-model anchor and separate gate optimization.  Auxiliary gate losses detach the expert logits and hidden features.  A no-harm term penalizes choices that degrade prediction, and a utility-aware term can tie the mixture weight to signed thought utility.  The final gate layer uses small nonzero weights: an exactly zero layer blocks upstream gradients and can leave the mixture weight constant.

Let $\beta_g(s)\in[0,1]$ be the learned prediction-mixture gate, distinct from the learned critic-mixture weight $w_\varphi(s,a)$ and auxiliary mean function $h_\zeta(c)$.  A utility-aware loss $-\E[\beta_g(s)\mathcal U(s)/\sigma_u]$ has gradient $-\mathcal U(s)\nabla\beta_g(s)/\sigma_u$: it favors thoughts on helpful positions and suppresses them on harmful ones.  A no-harm or mixed-NLL objective can dominate this direction when most utilities are negative.  A negative empirical correlation between gate weight and raw gain therefore does not by itself indicate an incorrect gradient, whereas an exactly blocked output layer prevents adaptation.

\subsection{State, action, and computational cost}
\label{app:computational-cost}
Let $B_{\mathrm{text}}$ be text batch size, $P$ selected positions per sequence, $C$ candidates per position, $T$ thought length, and $K$ potential checkpoints.  The number of hidden actions is $N_{\mathrm{act}}=B_{\mathrm{text}}PCT$.  Thought sampling scales with $N_{\mathrm{act}}$, while continuation scoring scales approximately with $B_{\mathrm{text}}PC(K+1)H$ token evaluations before caching effects.  Exact prefix-value evaluation can additionally scale with the total number of partial prefixes.  The principal memory objects are actor and two critic states, optimizer partitions, replayed action tensors, and temporary future-window logits.  The design does not require a response-generation engine because the rollout is internal and tightly coupled to teacher-forced future evaluation.  Distributed execution instead assigns separate process groups to the actor and critics.  A compact single-node profile devotes most devices to the actor and one device to each critic.  A larger profile uses a multi-node actor group and two independent critic groups.  Replay tensors and model synchronization dominate communication, not external environment interaction.

\subsection{Trajectory counts and computational cost}
\label{app:rollout-accounting}
A rollout is one trajectory generated by the actor from its declared start to termination.  Evaluating that trajectory with two critics produces paired advantages without generating a second trajectory.  The distinction also applies to explicit Monte Carlo mean estimation: from $M$ continuations $(a^{(j)},\xi^{(j)})$ at a fixed prefix, compute
\begin{equation}
\widehat\mu_{\widetilde A_i}(s)
=\frac1M\sum_{j=1}^{M}\widetilde A_i(s,a^{(j)},\xi^{(j)}),
\qquad i\in\{1,2\}.
\label{eq:paired-mean-sampling}
\end{equation}
Both estimates use the same $M$ generated continuations and $2M$ critic evaluations.  They may be correlated, but estimating their individual means does not require separate trajectory sets.  Independent calibration and validation sets, when required by a protocol, add generation for their statistical roles rather than for the number of critics.

For the conditional-moment protocol, each selected item has $C=1$.  Denote the numbers of newly generated trajectories by $n_V$ for critic fitting, $n_{\mathrm{qual}}$ for predictive holdout, $n_{\mathrm{pilot}}$ for normalization, $n_{\mathrm{head}}$ for head learning, $n_{\mathrm{val}}$ for candidate validation, and $n_{\mathrm{act}}$ for actor optimization.  The two critics use the same trajectories in each set.  Let $n_{\mathrm{extra}}$ count additional trajectories from retries, rejected attempts, or diagnostics, excluding those already counted.  Then
\begin{equation}
\begin{aligned}
n_{\mathrm{aux}}&=n_V+n_{\mathrm{qual}}+n_{\mathrm{pilot}}
                  +n_{\mathrm{head}}+n_{\mathrm{val}}+n_{\mathrm{extra}},\\
N_{\mathrm{roll}}^{\mathrm{mom}}&=n_{\mathrm{act}}+n_{\mathrm{aux}}.
\end{aligned}
\label{eq:total-rollout-budget}
\end{equation}
Each newly generated trajectory is counted once.  Replaying stored trajectories adds optimization but no generation.  The original generation is counted in the window in which it occurred.  Historical reuse must still respect the snapshot and separation conditions in Appendix~\ref{app:restart-contract}.  Trajectory counts and head-update counts are configuration parameters, not constants implied by the single-rollout identity.  In particular, equally sized head-training and actor trajectory sets would contribute $2n_{\mathrm{act}}$ trajectories before validation and other trajectory sets are included.

A repeated-prefix calibration baseline with $N_{\mathrm{anchor}}$ anchors and $M_{\mathrm{cal}}$, $M_{\mathrm{adm}}$, and $M_{\mathrm{act}}$ continuations per anchor for calibration, admission, and actor updates generates
\begin{equation}
N_{\mathrm{roll}}^{\mathrm{rep}}
=N_{\mathrm{anchor}}(M_{\mathrm{cal}}+M_{\mathrm{adm}}+M_{\mathrm{act}})
+n_{\mathrm{base}}^{\mathrm{rep}},
\label{eq:repeated-rollout-budget}
\end{equation}
where $n_{\mathrm{base}}^{\mathrm{rep}}$ includes prefix-bank collection, critic fitting, holdout, pilot, and additional attempts.  Choosing $M_{\mathrm{cal}}=M_{\mathrm{adm}}=M$ gives $2M$ calibration/admission continuations per anchor because there are two independent phases, each sharing its trajectories between critics.  The anchor-only variant contributes $N_{\mathrm{anchor}}M_{\mathrm{act}}$ actor actions despite generating their full suffixes.  The moment protocol uses all valid actor-trajectory tokens, but its auxiliary trajectories do not enter the actor loss with the stated data separation.  Its sampling benefit comes from sharing the learned mean function across contexts and using complete actor trajectories.  Estimation error still depends on data coverage, function capacity, and optimization, so this identity alone gives no equal-accuracy sample-complexity guarantee.

For GRPO with $N$ input items and group size $G_{\mathrm{grp}}$, group-response generation contributes $NG_{\mathrm{grp}}$ trajectories, whose valid tokens enter actor training.  At the same number of actor input items, set $n_{\mathrm{act}}=N$.  With no additional GRPO generation in the compared window,
\begin{equation}
N_{\mathrm{roll}}^{\mathrm{mom}}<NG_{\mathrm{grp}}
\quad\Longleftrightarrow\quad
n_{\mathrm{aux}}<(G_{\mathrm{grp}}-1)N.
\label{eq:grpo-rollout-threshold}
\end{equation}
All GRPO-based baselines in our experiments use $G_{\mathrm{grp}}=8$, so this threshold is $n_{\mathrm{aux}}<7N$.  Additional generated data for either method must be included symmetrically.  Equal input-item counts give GRPO more actor trajectories, so this inequality is not an equal actor-data comparison.  At a matched number of actor trajectories, the moment protocol has additional trajectories for auxiliary training and validation, whereas GRPO distributes those actor trajectories into groups.  Neither comparison establishes time or samples needed to reach a target quality.

Report generated-token count as $\sum_{\tau\ \mathrm{generated}}|\tau|$, together with the number of tokens eligible for actor updates.  Repeated-prefix suffixes and full item trajectories can have different lengths, so trajectory counts translate directly to token savings only with comparable average lengths.  Total computation also includes reward scoring, actor updates, both critics' inference and training, feature construction, and weight/mean-head optimization.  Once frozen features and GAE labels are cached, further head passes require no new rollout, but increase computation and optimize the same empirical sample.  Head parameters, optimizer states, and cached features add memory, and constructing the mean head's scoring-context features also costs computation.  Removing prefix replication changes sampling, but speedups depend on total computation and implementation.

\subsection{Refreshing and recovering a training window}
\label{app:window-refresh}
The trainer retains value-head warmup, full-critic warmup, alternating training, and actor-paused recovery.  Backbone unfreezing resets predictive qualification.  Predictive failure pauses actor updates and triggers return-data refresh and critic refitting.  After qualification, freeze critics and normalizers, train the heads, and validate a frozen candidate's signal retention.  Failed retention returns to the feasible fixed mean.  It does not certify a smaller drift.  The base protocol has no hard actor-release test based only on a small saddle loss.  If a drift-certified mode is reported, it must additionally provide all error bounds in Corollary~\ref{cor:admission-budget}.  Changing rollout, scorer, critics, features, or normalizer changes the moment target and requires renewed fitting and independent evaluation.  A whole-window drift claim also needs a common actor sensitivity bound $B$.

\section*{AI Use Statement}
Generative AI tools assisted with manuscript editing, exploration of the conditional-calibration design, and drafting of its analysis.  The authors remain responsible for independently verifying the technical content, claims, citations, and conclusions before submission.

\end{document}